\documentclass{article}

% if you need to pass options to natbib, use, e.g.:
\PassOptionsToPackage{numbers}{natbib}
% before loading neurips_2020

% ready for submission
% \usepackage{neurips_2020}

% to compile a preprint version, e.g., for submission to arXiv, add add the
% [preprint] option:
%     \usepackage[preprint]{neurips_2020}

% to compile a camera-ready version, add the [final] option, e.g.:
%     \usepackage[final]{neurips_2020}

% to avoid loading the natbib package, add option nonatbib:
     \usepackage[final]{neurips_2020}

\usepackage[utf8]{inputenc} % allow utf-8 input
\usepackage[T1]{fontenc}    % use 8-bit T1 fonts
\usepackage{hyperref}       % hyperlinks
\usepackage{url}            % simple URL typesetting
\usepackage{booktabs}       % professional-quality tables
\usepackage{amsfonts}       % blackboard math symbols
\usepackage{nicefrac}       % compact symbols for 1/2, etc.
\usepackage{microtype}      % microtypography

\usepackage{amsmath}
\usepackage{color}
\usepackage{enumitem}
\usepackage{amssymb}
\usepackage{amsthm}
\usepackage{graphicx}

\newcommand{\alphalow}{{\underline{\alpha}}}
\newcommand{\alphahigh}{{\overline{\alpha}}}

\DeclareMathOperator*{\argmin}{argmin}

\DeclareMathOperator{\Tr}{Tr}

\DeclareMathOperator{\Id}{Id}

\definecolor{NavyBlue}{rgb}{0.1,0.1,0.6}

\def\bfone{{\boldsymbol 1}}

\def\var{{\rm var\,}}
\newcommand{\diff}{\mathrm{d}}

\newcommand{\N}{\mathbb{N}}
\newcommand{\Z}{\mathbb{Z}}
\newcommand{\E}{\mathbb{E}}
\newcommand{\R}{\mathbb{R}}
\newcommand{\T}{\mathbb{T}}

\newcommand{\cN}{\mathcal{N}}
\newcommand{\cE}{\mathcal{E}}

\newcommand{\cH}{\mathcal{H}}
\newcommand{\cV}{\mathcal{V}}

\newcommand{\cR}{\mathcal{R}}
\newcommand{\cU}{\mathcal{U}}

\renewcommand{\leq}{\leqslant}
\renewcommand{\geq}{\geqslant}

\newtheorem{remark}{Remark}

\newtheorem{property}{Property}
\newtheorem{proposition}{Proposition}
\newtheorem{thm}{Theorem}
\newtheorem{lem}{Lemma}
\newtheorem{coro}{Corollary}

\title{Tight~Nonparametric~Convergence~Rates for~Stochastic~Gradient~Descent under~the~Noiseless~Linear~Model}

% The \author macro works with any number of authors. There are two commands
% used to separate the names and addresses of multiple authors: \And and \AND.
%
% Using \And between authors leaves it to LaTeX to determine where to break the
% lines. Using \AND forces a line break at that point. So, if LaTeX puts 3 of 4
% authors names on the first line, and the last on the second line, try using
% \AND instead of \And before the third author name.

\author{
	Rapha\"el Berthier \\
	INRIA, Ecole Normale Supérieure\\
PSL Research University, Paris, France\\
	\texttt{raphael.berthier@inria.fr} \\
	\And
	Francis Bach \\
	INRIA, Ecole Normale Supérieure\\
PSL Research University, Paris, France\\
	\texttt{francis.bach@inria.fr} \\
	\And
	Pierre Gaillard \\
	INRIA, Ecole Normale Supérieure\\
PSL Research University, Paris, France\\
	\texttt{pierre.gaillard@inria.fr} \\
}

\begin{document}

\maketitle

\begin{abstract}
	In the context of statistical supervised learning, the noiseless linear model assumes that there exists a deterministic linear relation $Y = \langle \theta_*, \Phi(U) \rangle$ between the random output $Y$ and the random feature vector $\Phi(U)$, a potentially non-linear transformation of the inputs~$U$. 
 We analyze the convergence of single-pass, fixed step-size stochastic gradient descent on the least-square risk under this model. The convergence of the iterates to the optimum $\theta_*$ and the decay of the generalization error follow polynomial convergence rates with exponents that both depend on the regularities of the optimum $\theta_*$ and of the feature vectors $\Phi(U)$. We interpret our result in the reproducing kernel Hilbert space framework. As a special case, we analyze an online algorithm for estimating a real function on the unit hypercube from the noiseless observation of its value at randomly sampled points; the convergence depends on the Sobolev smoothness of the function and of a chosen kernel. Finally, we apply our analysis beyond the supervised learning setting to obtain convergence rates for the averaging process (a.k.a.~gossip algorithm) on a graph depending on its spectral dimension.
\end{abstract}

\section{Introduction}
\label{sec:intro}

Linear regression is widely used in statistical supervised learning, sometimes in the implicit form of kernel regression. A large theory describes the performance (reconstruction and generalization errors) of various algorithms (penalized least-squares, stochastic gradient descent, \dots) under various data models (e.g., noisy or noiseless linear model) and the corresponding minimax bounds. In nonparametric estimation theory, one seeks bounds independent of the dimension of the underlying feature space \cite{gyorfi2006distribution,tsybakov2008introduction}: these bounds describe best the observed behavior in many modern linear regressions, where the data are inherently high-dimensional or where a kernel associated to a high-dimensional feature map is used. In this paper, we provide nonparametric bounds for stochastic gradient descent under the noiseless linear model and under small perturbations of this model.  

Under the noiseless linear model, we assume that there exists a ground-truth linear relation $Y = \langle \theta_*, X \rangle$ between the feature vector $X$ and the output $Y \in \R$. The feature vector $X$ may be itself a non-linear transformation of the inputs $U$, explicitly computed through a feature map $X = \Phi(U)$ or implicitly defined through a positive-definite kernel $k(U,U')$ \cite{hofmann2008kernel}. The noiseless linear model assumes that there exists a linear predictor in feature space with zero generalization error. The difficulty to approximate this optimal prediction rule $\theta_*$ from independent identically distributed (i.i.d.) samples $(X_1,Y_1), \dots, (X_n, Y_n)$ depends on some measure of the complexity of $\theta_*$. 

The noiseless assumption is relevant for some basic vision or sound recognition tasks, where there is no ambiguity of the output $Y$ given the input $U$, but the rule determining the output from the input can be complex. An example from \cite[Section 6]{jun2019kernel} is the classification of images of cats versus dogs. For typical images, the output is unambiguous; humans indeed achieve a near-zero error. In sound recognition, one could think of the recovery of the melody from a tune, an unambiguous (but tremendously complex!) task.

 Note that in the noiseless model, there is still the randomness of the sampling of $X_1, \dots, X_n$, sometimes called multiplicative noise because algorithms end up multiplying random matrices \cite{dieuleveut2017harder}. Given those inputs, the outputs $Y_1, \dots, Y_n$ are deterministic: there is no additive noise, and thus the noiseless linear model we consider in this paper is a simplification of problems with low additive noise.

The large dimension and number of samples in modern datasets motivate the use of first-order online methods \cite{bottou2005line,bottou2008tradeoffs}. We study the archetype of these methods: single-pass, constant step-size stochastic gradient descent with no regularization, referred to as simply ``SGD'' in the following. 

\textbf{Contributions.} Our theoretical results and simulation agree to the following: under the noiseless linear model, the iterates of SGD converge to the optimum $\theta_*$ and the generalization error of SGD vanishes as the number of samples increases. Moreover, the convergence rate of SGD is determined by the minimum of two parameters: the regularity of the optimum $\theta_*$ and the regularity of the feature vectors $X$, where regularities are measured in terms of power norms of the covariance matrix $\Sigma = \E[X \otimes X]$, see Section \ref{sec:general-theory} for precise definitions and statements. Our analysis of the convergence is tight as we prove upper and lower bounds on the performance of SGD that almost match. Thus SGD shows some adaptivity to the complexity of the problem. In Section \ref{sec:robustness}, we study the robustness of our results when the noiseless linear assumption does not hold, but the generalization error of the optimal linear regression is small. We prove that the asymptotic generalization error of SGD deteriorates by a constant factor, proportional to this optimal generalization error. 

Two extensions of our results are studied. First, in Section \ref{sec:kernel}, the extension to kernel regression is derived, with, as a special case, the application to the interpolation of a real function $f_*$ on the torus $[0,1]^d$ from the observation of its value at randomly uniformly sampled points. In the latter case, we show that the rate of convergence depends on the Sobolev smoothness of the function $f_*$ and of the interpolating kernel. Second, beyond supervised learning, our abstract result can be seen as a result on products of i.i.d.~linear operators on a Hilbert space. In Section \ref{sec:averaging}, we use this result to study a linear stochastic process, the averaging process on a graph, which models a key algorithmic step in decentralized optimization, the gossip algorithm~\cite{shah2009gossip,nedic2010constrained}. We prove polynomial convergence rates depending on the spectral dimension of the graph. Finally, in Section \ref{sec:gaussian}, a toy application instantiates our results in the special case of Gaussian features.

\textbf{Comparison to the existing literature on linear / kernel regression.} There is an extensive research on the performance of different estimators in nonparametric supervised learning, however almost all of them do not consider the special case of the noiseless linear model \cite{gyorfi2006distribution,caponnetto2007optimal,tsybakov2008introduction,fischer2017sobolev}. The difference is significant; for instance, rates faster than $O(n^{-1})$ for the least-square risk are impossible with additive noise, while in this paper we prove that SGD can converge with arbitrarily fast polynomial rates. Some of these works analyse the performance of SGD \cite{ying2008online,bach2013non,tarres2014online,rosasco2015learning,dieuleveut2016nonparametric,dieuleveut2017harder,lin2018optimal,pillaud2018statistical,mucke2019beating}. However, because of the additive noise of the data, convergence requires averaging or decaying step sizes. As a notable exception,  \cite{jun2019kernel} studies a variant of kernel regularized least-squares and notices that the rate of convergence improves on noiseless data compared to noisy data.
% Moreover, they note that no regularization is needed to reach the accelerated rate when the optimal function is in the kernel space.
 However, their rates are not directly comparable to ours as they assume that the optimal predictor is outside of the kernel space while we focus in Section \ref{sec:kernel} on the attainable case where the optimal predictor is in this space. We make a more precise comparison of this work with our results in Remark \ref{rmk:related-literature}. 

While our work focuses on the test error, a recent trend studies the ability of SGD to reach zero training error in the so-called ``interpolation regime'', that is in over-parametrized models where a perfect fit on the training data is possible \cite{schmidt2013fast,ma2017power,cevher2019linear}. Even with a fixed step size, SGD is shown to achieve zero-training error. However, these results are significantly different from ours: zero training error does not give any information on the generalization ability of the learned models, and the ``interpolation regime'' does not imply the noiseless model. The authors of \cite{vaswani2018fast} study a mixed framework that includes both the interpolation regime and the noiseless model, depending on whether SGD is seen as a stochastic algorithm minimizing the generalization error or the training error. An acceleration of SGD is studied, depending on the convexity property of the loss, but not on the nonparametric regularity of the problem. 

The field of scattered data approximation \cite{wendland2004scattered} studies the estimation of a function from the observation of its values at (possibly random) points, considered in Section \ref{sec:kernel}. Again, most of the work focuses on the case where the observation of the values is noisy. We found two exceptions that consider the noiseless case. In \cite{bauer2017nonparametric}, a minimax rate of $\Omega((\log n / n)^{p/d})$ is shown for estimating a $p$-smooth function on $[0,1]^d$ in $L^\infty$ norm using $n$ independent uniformly distributed points; the minimax rate is reached with a spline estimate. In \cite{kohler2013optimal}, a minimax rate of $\Omega(1/n^{p})$ in shown for the same problem, but in the special case of $d=1$ and estimation in $L^1$ norm; the minimax rate is reached with some nearest neighbor polynomial interpolation. Our results are not rigorously comparable with these as we consider the approximation in $L^2$ norm and a definition of smoothness different from theirs. However, roughly speaking, our convergence rate in Section \ref{sec:kernel} of $\Omega(1/n^{1-d/(2p+d)})$ when $p>d/2$ is much slower than theirs. Note that previous estimators could not be computed in an online fashion, and thus have a significantly larger running time. But in general, this suggests that SGD might not achieve the nonparametric minimax rates under the noiseless linear model.

\section{Linear regression}
\label{sec:general-theory}

\subsection{Setting and main results}
\label{sec:main-result}

We consider the regression problem of learning the linear relationship between a random feature variable $X \in \cH$ and a random output variable $Y \in \R$. The feature space $\cH$ is assumed to be a Hilbert space with scalar product $\left\langle . , . \right \rangle$ and norm $\Vert . \Vert$. We assume a noiseless linear model: there exists $\theta_* \in \cH$ such that $Y = \left\langle \theta_*, X \right\rangle$ almost surely (a.s.). In the online regression setting, we learn $\theta_*$ from i.i.d.~observations $(X_1,Y_1), (X_2,Y_2), \dots$ of $(X,Y)$. SGD proceeds as follows: it starts with the non-informative initialization $\theta_0 = 0$ and at iteration $n$, with current estimate $\theta_{n-1}$, it estimates the risk function on the observation $(X_n,Y_n)$, 
$\cR_n(\theta) =  \left(\left\langle \theta, X_n \right\rangle - Y_n\right)^2/2$
and it performs one step of gradient descent on $\cR_n$:
\begin{align}
\theta_n &= \theta_{n-1} - \gamma \nabla \cR_n(\theta_{n-1}) \nonumber \\
&= \theta_{n-1} - \gamma\left( \left\langle \theta_{n-1}, X_n\right\rangle - Y_n\right) X_n  \nonumber\\
&= \theta_{n-1} - \gamma \left\langle \theta_{n-1}-\theta_*, X_n\right\rangle X_n \label{eq:SGD_iteration} \, .
\end{align}
The risk $\cR_n(\theta)$ is an unbiased estimate of the population risk, also called generalization error,
\begin{align*}
\cR(\theta) &= \frac{1}{2} \E\left[\left(\left\langle \theta,X\right\rangle-Y\right)^2\right]=  \frac{1}{2} \E\left[\left\langle \theta-\theta_*,X\right\rangle^2\right] \, .
\end{align*}
We assume the feature variable to be uniformly bounded, namely that there exists a constant $R_0 < \infty$ such that 
\begin{equation}
 \Vert X \Vert^2 \leq R_0 \qquad \text{a.s.} \label{eq:def-R0-strong}
\end{equation}
We can then define the covariance operator $\Sigma = \E\left[X \otimes X \right]$ of $X$, where if $x \in \cH$, $x \otimes x$ is the bounded linear operator $\theta \in \cH \mapsto \langle \theta, x \rangle x$. Finally, note that, 
\begin{equation*}
\cR(\theta) = \frac{1}{2}\left\langle \theta - \theta_* , \Sigma\left(\theta - \theta_*\right)\right\rangle \, .
\end{equation*}
We do not assume that the linear operator $\Sigma$ is inversible as this is incompatible in infinite dimension with the boundedness assumption in Eq.~\eqref{eq:def-R0-strong}. Throughout this paper, we use the following convenient notation: if $\alpha$ is a positive real and $\theta$ a vector, $
\left\Vert \Sigma^{-\alpha/2}\theta \right\Vert^2 = \left\langle \theta, \Sigma^{-\alpha}\theta \right\rangle := \inf \left\{ \Vert \theta' \Vert^2 \, \middle\vert \, \theta' \text{ such that }\theta = \Sigma^{\alpha/2}\theta'  \right\}$,
with the convention that it is equal to $\infty$ when $\theta \notin \Sigma^{\alpha/2}(\cH)$. We have two theorems (upper and lower bounds) showing tight convergence rates for SGD.

\begin{thm}[upper bound]
	\label{thm:main-result-upper-bound}
	Assume that there exists a non-negative real number $\alphalow$ such that 
	\begin{enumerate}[label = (\alph*), noitemsep, nolistsep]
		\item\label{ass:reg-opt}(regularity of the optimum) $\theta_* \in \Sigma^{\alphalow/2}(\cH)$, i.e., $\Vert \Sigma^{-\alphalow/2} \theta_*\Vert < \infty$, and
		\item\label{ass:reg-feature}(regularity of the feature vector) $X \in \Sigma^{\alphalow/2}(\cH)$ a.s., and there exists a constant $R_\alphalow < \infty$ such that $\Vert \Sigma^{-\alphalow/2} X\Vert^2 \leq R_\alphalow$ a.s. 
	\end{enumerate}
	Assume further $0 <\gamma \leq 1/R_0$. The iterates $\theta_n$ of SGD with step-size $\gamma$ satisfy for all $n \geq 1$, 
	\begin{enumerate}[noitemsep, nolistsep]
		\item (reconstruction error) \hspace{1.8cm}
$\displaystyle
		\E\left[\Vert \theta_n - \theta_*\Vert^2 \right] \leq \frac{C}{n^\alphalow} \, , 
		$
		
		\vspace{0.1cm}
		
		\vspace*{.2cm}
		
		\item (generalization error) \hspace{1.5cm}
		$\displaystyle \min_{k=0, \dots, n } \E\left[\cR(\theta_k)\right] \leq \frac{C'}{n^{\alphalow + 1}} \, ,
		$
	\end{enumerate}
	where $\displaystyle C =  \frac{\alphalow^{\alphalow}}{\gamma^{\alphalow}} \left(\Vert \Sigma^{-\alphalow/2} \theta_* \Vert^2 + \frac{R_\alphalow}{R_0} \Vert \theta_* \Vert^2 \right)$ and $\displaystyle C' = 2^{\alphalow} \frac{\alphalow^\alphalow}{\gamma^{\alphalow+1}} \left(\Vert \Sigma^{-\alphalow/2} \theta_* \Vert^2 + \frac{R_\alphalow}{R_0} \Vert \theta_* \Vert^2 \right)$.
\end{thm}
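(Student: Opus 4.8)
The plan is to track the whole family of $\Sigma$-weighted second moments $a_n(\beta) := \E[\langle \eta_n, \Sigma^\beta \eta_n\rangle]$ of the centered iterate $\eta_n := \theta_n - \theta_*$, and to close a self-contained recursion on $a_n(0) = \E[\Vert \theta_n-\theta_*\Vert^2]$ by interpolating between the ``energy'' $a_n(0)$, the ``dissipation'' $a_n(1) = 2\,\E[\cR(\theta_n)]$, and a fixed higher-regularity norm $a_n(-\alphalow)$. From \eqref{eq:SGD_iteration} one has $\eta_n = (\Id - \gamma X_n\otimes X_n)\eta_{n-1}$, so expanding $\langle \eta_n, \Sigma^\beta\eta_n\rangle$ and taking the conditional expectation given the past yields, for every $\beta$,
\begin{equation*}
a_n(\beta) = a_{n-1}(\beta) - 2\gamma\, a_{n-1}(\beta+1) + \gamma^2\, \E\big[\langle \eta_{n-1}, X\rangle^2\,\langle X, \Sigma^\beta X\rangle\big] \, .
\end{equation*}
Taking $\beta = 0$ and bounding $\langle X,X\rangle\le R_0$ together with $\gamma R_0\le 1$ gives the \emph{energy inequality} $a_n(0)\le a_{n-1}(0) - \gamma\, a_{n-1}(1)$, while taking $\beta = -\alphalow$ and using assumption \ref{ass:reg-feature} in the form $\langle X,\Sigma^{-\alphalow}X\rangle = \Vert\Sigma^{-\alphalow/2}X\Vert^2\le R_\alphalow$ (and discarding the nonpositive dissipation term) gives $a_n(-\alphalow)\le a_{n-1}(-\alphalow) + \gamma^2 R_\alphalow\, a_{n-1}(1)$.

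The first key step is the uniform boundedness of the higher-regularity norm. Telescoping the energy inequality yields $\gamma\sum_{k} a_k(1)\le a_0(0) = \Vert\theta_*\Vert^2$, which controls the accumulated multiplicative noise: telescoping the $\beta=-\alphalow$ inequality and inserting this bound gives, for all $n$,
\begin{equation*}
a_n(-\alphalow)\le a_0(-\alphalow) + \gamma R_\alphalow\,\Vert\theta_*\Vert^2 \le \Vert\Sigma^{-\alphalow/2}\theta_*\Vert^2 + \tfrac{R_\alphalow}{R_0}\,\Vert\theta_*\Vert^2 =: D \, ,
\end{equation*}
where assumption \ref{ass:reg-opt} makes $a_0(-\alphalow) = \Vert\Sigma^{-\alphalow/2}\theta_*\Vert^2$ finite and $\gamma\le 1/R_0$ absorbs the noise. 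This is exactly the combination appearing in $C$, and it is the step I expect to be the main obstacle: it is the only place where both regularity assumptions and the noiseless structure genuinely interact, and it relies on the fact that $\eta_n$ remains a.s.~in $\Sigma^{\alphalow/2}(\cH)$ (provable by induction, since $\eta_{n-1}$ and $X_n$ both lie in that range). Everything downstream is mechanical once $a_n(-\alphalow)$ is pinned down.

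The second step is the interpolation that converts the energy inequality into a super-linear recursion. Writing $\Sigma^0 = (\Sigma^{-\alphalow})^{\theta}(\Sigma)^{1-\theta}$ with $\theta = 1/(1+\alphalow)$, Hölder's inequality applied to the spectral measure of $\eta_{n-1}$ (valid a.s.~by the range property above) and then to the expectation gives $a_{n-1}(0)\le a_{n-1}(-\alphalow)^{\theta}\, a_{n-1}(1)^{1-\theta}\le D^{\theta}\, a_{n-1}(1)^{1-\theta}$. Solving for $a_{n-1}(1)$ and substituting into the energy inequality turns it into $a_n(0)\le a_{n-1}(0) - \big(\gamma/D^{1/\alphalow}\big)\, a_{n-1}(0)^{1+1/\alphalow}$. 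The standard lemma on recursions $u_n\le u_{n-1} - c\,u_{n-1}^{1+p}$ (established by comparing $u_n^{-p}$ to $u_{n-1}^{-p}+pc$ through Bernoulli's inequality) then gives $a_n(0)\le (pc\,n)^{-1/p}$ with $p=1/\alphalow$ and $c=\gamma/D^{1/\alphalow}$, which is precisely $C/n^{\alphalow}$; the degenerate case $\alphalow=0$ reduces to the direct bound $a_n(0)\le\Vert\theta_*\Vert^2$.

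Finally, the generalization bound follows from the energy inequality alone. Since $2\gamma\,\E[\cR(\theta_k)] = \gamma\, a_k(1)\le a_k(0) - a_{k+1}(0)$, summing over the window $k\in[\,\lceil n/2\rceil, n\,]$ telescopes to $2\gamma\sum_k \E[\cR(\theta_k)]\le a_{\lceil n/2\rceil}(0)\le C/(n/2)^{\alphalow}$, and dividing by the number of terms bounds the minimum over the window, hence over $\{0,\dots,n\}$, by $C'/n^{\alphalow+1}$ with $C' = (2^{\alphalow}/\gamma)\,C$, matching the stated constant.
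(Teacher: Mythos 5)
Your proof is correct and follows essentially the same route as the paper's: your $a_n(\beta)$ is the paper's regularity function $\varphi_n(-\beta)$, and your chain --- energy inequality at $\beta=0$, summability of the risk, uniform bound $D$ on the $\Sigma^{-\alphalow}$-weighted norm, H\"older/log-convexity interpolation yielding $a_n(0)\le a_{n-1}(0)-\gamma D^{-1/\alphalow}a_{n-1}(0)^{1+1/\alphalow}$, and window-averaging over $k\in[\lceil n/2\rceil,n]$ --- reproduces Properties \ref{prop:log-cvx} and \ref{prop:regularity-ineq} and the argument of Appendix \ref{sec:proof-thm-general-upper} with matching constants $C$ and $C'$. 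The only cosmetic difference is that you solve the super-linear recursion via Bernoulli's inequality on $u_n^{-p}$ where the paper uses convexity of $\varphi\mapsto\varphi^{-1/\alphalow}$, which is the same elementary estimate.
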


Assumption \ref{ass:reg-opt} is classical in the non-parametric kernel literature \cite{caponnetto2007optimal}: it is often called \emph{complexity of the optimum}, or \emph{source condition}. Assumption \ref{ass:reg-feature} is made in \cite{pillaud2018statistical}. It implies that 
\begin{equation*}
\Tr(\Sigma^{1-\alphalow}) = \E[\Tr(XX^T\Sigma^{-\alphalow})] = \E[X^T\Sigma^{-\alphalow}X] \leq R_{\alphalow} \, .
\end{equation*}This last condition, called \emph{capacity condition} \cite{pillaud2018statistical}, is sometimes stated under the form of a given decay of the eigenvalues of $\Sigma$; it is related to the \emph{effective dimension} of the problem \cite{caponnetto2007optimal}. 

\begin{thm}[lower bound]
	\label{thm:main-result-lower-bound}
	 Assume that there exists a positive real number $\alphahigh$ such that one of the two following conditions holds:
	\begin{enumerate}[label = (\alph*), noitemsep, nolistsep]
		\item(irregularity of the optimum) $\theta_* \notin \Sigma^{\alphahigh/2}(\cH)$, i.e., $\Vert \Sigma^{-\alphahigh/2} \theta_*\Vert = \infty$, or
		\item(irregularity of the feature vector) with positive probability, $X \notin \Sigma^{\alphahigh/2}(\cH)$ and $\langle X, \theta_* \rangle \neq 0$. 
	\end{enumerate}
	Assume further $0 <\gamma \leq 1/R_0$. The iterates $\theta_n$ of SGD with step-size $\gamma$ satisfy for all $\varepsilon > 0$, 
	\begin{enumerate}[noitemsep, nolistsep]
		\item (reconstruction error)  $\E\left[\Vert \theta_n - \theta_*\Vert^2 \right]$ is not asymptotically dominated by $1/n^{\alphahigh+\varepsilon}$, 
		\item (generalization error) $\E\left[\cR(\theta_n)\right]$ is not asymptotically dominated by $1/n^{\alphahigh+1+\varepsilon}$.
	\end{enumerate}
\end{thm}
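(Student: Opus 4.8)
The plan is to pass to the eigenbasis of $\Sigma$, reduce the problem to a family of scalar lower-bound recursions, and then convert any putative upper bound into a convergent series that contradicts the divergence of a power norm. Writing $\eta_n = \theta_n - \theta_*$, the iteration~\eqref{eq:SGD_iteration} is the linear recursion $\eta_n = (\Id - \gamma X_n \otimes X_n)\eta_{n-1}$ with $\eta_0 = -\theta_*$. I would track the second-moment operator $M_n = \E[\eta_n \otimes \eta_n]$, which satisfies $M_n = \mathcal{T}(M_{n-1})$ for $\mathcal{T}(M) = M - \gamma(\Sigma M + M\Sigma) + \gamma^2\,\E[\langle MX,X\rangle\, X\otimes X]$, using that $(X\otimes X)M(X\otimes X) = \langle MX,X\rangle\,X\otimes X$. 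Since $\Sigma$ is trace-class ($\Tr \Sigma = \E\Vert X\Vert^2 \leq R_0$), it has a discrete spectrum $\Sigma = \sum_i \lambda_i\, e_i\otimes e_i$ with $\lambda_i \leq R_0$; set $a_i = \langle \theta_*, e_i\rangle$.

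The crucial observation is that the fourth-moment term of $\mathcal{T}$ is nonnegative on positive operators, so discarding it gives, for the diagonal entries $u_n^{(i)} = \langle e_i, M_n e_i\rangle = \E[\langle \eta_n, e_i\rangle^2]$, the scalar inequality $u_n^{(i)} \geq (1-2\gamma\lambda_i)\, u_{n-1}^{(i)}$. For every index with $\gamma\lambda_i \leq 1/4$ the factor $1-2\gamma\lambda_i$ lies in $[1/2,1)$, so this may be iterated into $u_n^{(i)} \geq (1-2\gamma\lambda_i)^n a_i^2$. Summing over $i$ lower-bounds the reconstruction error $\E[\Vert \eta_n\Vert^2] = \sum_i u_n^{(i)}$ and, with the extra factor $\lambda_i$, the generalization error $\E[\cR(\theta_n)] = \tfrac12 \sum_i \lambda_i\, u_n^{(i)}$.

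To turn these into the claimed rates I would argue by contradiction, integrating against a polynomial weight. If $\E[\Vert\eta_n\Vert^2]\leq C/n^{\alphahigh+\eps}$ for all large $n$, then $\sum_n n^{\alphahigh-1}\E[\Vert\eta_n\Vert^2] < \infty$. On the other hand, exchanging the nonnegative sums and using the elementary Abelian estimate $\sum_{n\geq 1} n^{\alphahigh-1} r^n \geq c_{\alphahigh}(1-r)^{-\alphahigh}$ (valid for $r\in[1/2,1)$, as the normalized function is continuous and positive there with positive limit at $1$), applied with $r = 1-2\gamma\lambda_i$, yields $\sum_n n^{\alphahigh-1}\E[\Vert\eta_n\Vert^2] \gtrsim \sum_{i:\gamma\lambda_i\leq 1/4}\lambda_i^{-\alphahigh} a_i^2$. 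Under condition~(a), $\sum_i \lambda_i^{-\alphahigh}a_i^2 = \Vert\Sigma^{-\alphahigh/2}\theta_*\Vert^2 = \infty$; since the discarded large-eigenvalue part $\sum_{\lambda_i>1/(4\gamma)}\lambda_i^{-\alphahigh}a_i^2 \leq (4\gamma)^{\alphahigh}\Vert\theta_*\Vert^2$ is finite, the divergence is carried by the retained small eigenvalues, a contradiction. The generalization error is identical with weight $n^{\alphahigh}$ and $\sum_n n^{\alphahigh}r^n \geq c'_{\alphahigh}(1-r)^{-(\alphahigh+1)}$, giving $\sum_i \lambda_i \cdot \lambda_i^{-(\alphahigh+1)} a_i^2 = \sum_i \lambda_i^{-\alphahigh}a_i^2 = \infty$.

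The feature-irregularity condition~(b) is the part I expect to require the most care, since the irregularity then lives in $X$ rather than in $\theta_*$. Here I would start the iteration one step later, from $M_1 = \E[\eta_1 \otimes \eta_1]$ with $\eta_1 = -\theta_* + \gamma\langle X_1,\theta_*\rangle X_1$: its diagonal entries obey $u_1^{(i)} \geq \gamma^2 b_i$ with $b_i = \E[\langle X,\theta_*\rangle^2 \langle X,e_i\rangle^2]$, so the single injection of gradient noise seeds the irregular directions of $X$, weighted by $\langle X,\theta_*\rangle^2$. Iterating as before gives $u_n^{(i)} \geq (1-2\gamma\lambda_i)^{n-1}\gamma^2 b_i$, and the same weighted-sum argument reduces the claim to the divergence of $\sum_i \lambda_i^{-\alphahigh} b_i = \E[\langle X,\theta_*\rangle^2\, \Vert\Sigma^{-\alphahigh/2}X\Vert^2]$, which is exactly $+\infty$ because on a positive-probability event one has $\langle X,\theta_*\rangle \neq 0$ and $\Vert\Sigma^{-\alphahigh/2}X\Vert = \infty$. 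The main technical obstacles are thus (i) justifying the interchange of summations and pinning the Abelian constant uniformly as $r\to 1$, and (ii) the bookkeeping that confines the divergence to the small-eigenvalue block where the iterated inequality holds; a minor separate check disposes of any component of $\theta_*$ in $\ker\Sigma$, for which the reconstruction error does not even vanish and the conclusion is immediate.
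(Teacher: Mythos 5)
Your proof is correct, and it takes a genuinely different route from the paper's. The paper never passes to the eigenbasis: in case (a) it lower-bounds $\varphi_n(\beta)$ via Jensen's inequality by the deterministic gradient-descent recursion $\overline{\theta}_n - \theta_* = -(\Id-\gamma\Sigma)^n\theta_*$, and then, in place of your Abelian estimate, sums over $n$ using the exact operator identity $\gamma^{-\alpha}\Sigma^{-\alpha} = \sum_{k\geq 0}\binom{\alpha+k-1}{k}(\Id-\gamma\Sigma)^k$, so the divergence of $\sum_n n^{\alphahigh-\beta-1}\varphi_n(\beta)$ drops out of $\Vert\Sigma^{-\alphahigh/2}\theta_*\Vert=\infty$ with no spectral decomposition and no block restriction; in case (b) it shifts the sequence by $\theta_1$ (setting $\tilde{\theta}_n = \theta_{n+1}-\theta_1$, $\tilde{\theta}_* = \theta_*-\theta_1$) and reapplies case (a) to the now-irregular random optimum. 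Your second-moment recursion with the PSD fourth-moment term discarded is a marginally weaker but equally serviceable substitute for Jensen: Jensen gives the per-direction bound $u_n^{(i)}\geq(1-\gamma\lambda_i)^{2n}a_i^2 \geq (1-2\gamma\lambda_i)^n a_i^2$ valid for \emph{every} eigenvalue, which is why the paper needs neither your $\gamma\lambda_i\leq 1/4$ block nor the tail bookkeeping (which is in any case easy, as your bound $(4\gamma)^{\alphahigh}\Vert\theta_*\Vert^2$ shows). What your route buys is case (b): seeding $M_1$ and reducing to $\sum_i\lambda_i^{-\alphahigh}b_i = \E\bigl[\langle X,\theta_*\rangle^2\,\Vert\Sigma^{-\alphahigh/2}X\Vert^2\bigr]=\infty$ is more direct and self-contained than the paper's shift-and-condition trick, and it requires no preliminary reduction to the case of regular $\theta_*$; just note that your asserted $u_1^{(i)}\geq\gamma^2 b_i$ needs the small computation $u_1^{(i)} = \gamma^2 b_i + (1-2\gamma\lambda_i)a_i^2$ (the cross term equals $-2\gamma\lambda_i a_i^2$), so it holds precisely on your retained block, which suffices. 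One caveat, shared with the paper: for the generalization error, a component of $\theta_*$ in $\ker\Sigma$ contributes to neither side of your weighted-sum inequality, so your argument proves conclusion 2 under the implicit normalization $\theta_*\perp\ker\Sigma$; the paper's substitution $\theta=\Sigma^{1/2}\theta_*$ at $\beta=-1$ silently makes the same assumption, and this normalization is the right reading since $\theta_*$ is only determined modulo $\ker\Sigma$ and conclusion 2 genuinely fails for $\theta_*\in\ker\Sigma\setminus\{0\}$, where $\theta_n\equiv 0$ and $\cR(\theta_n)\equiv 0$ --- a defect of the statement's edge case, not of your proof.
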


The take-home message of Theorems \ref{thm:main-result-upper-bound}, \ref{thm:main-result-lower-bound} is that the convergence rate of SGD is governed by two real numbers: the regularity $\alpha_1$ of the optimum, that is the supremum of all $\alphalow$ such that $\theta_* \in \Sigma^{\alphalow/2}(\cH)$, and the regularity $\alpha_2$ of the features, that is the supremum of all $\alphalow$ such that $X \in \Sigma^{\alphalow/2}(\cH)$ almost surely. The polynomial convergence rate of SGD is roughly of the order of $n^{-\alpha}$ for the reconstruction error and $n^{-\alpha-1}$ for the generalization error with $\alpha = \min(\alpha_1, \alpha_2)$: one of the two regularities is a bottleneck for fast convergence. See Section \ref{sec:kernel} for an application to the optimal choice of a reproducing kernel Hilbert space. The exponent $\alpha_1$ corresponds to the decay of the errors of the gradient descent on the population risk $\cR$. However, due to the multiplicative noise, the convergence of SGD is slowed down by the irregularity of the feature vectors if $\alpha_2 < \alpha_1$. 

In the theorems, the constraint on the step-size $0<\gamma \leq 1/R_0$ is independent of the time horizon $n$ and of the regularities $\alpha_1, \alpha_2$. Thus fixed step-size SGD shows some adaptivity to the regularity of the problem. 

In Section \ref{sec:applications}, we give extensive numerical evidence that the polynomial rates $n^{-\alpha}$ and $n^{-(\alpha+1)}$ in the bounds are indeed sharp in describing convergence rate of SGD. 

We end this section with a few remarks on Theorems \ref{thm:main-result-upper-bound}, \ref{thm:main-result-lower-bound}. They articulate the significance of the results, but are non-essential to the rest of this paper. 

\begin{remark}
	\label{rmk:pointwise-decay-risk}
	Our upper bound and lower bound on the generalization errors do not match exactly. Indeed, we prove an upper bound on the \emph{minimum} risk of the past iterates, where we prove a lower bound on a larger quantity, the risk of the \emph{last} iterate. To the best of our knowledge, it is an open question whether one can prove an upper bound for the last iterate under our assumptions: more precisely, does $\E\left[\cR(\theta_n) \right] \leq C''/n^{\alphalow + 1}$ hold for some constant~$C''$?
\end{remark}

\begin{remark}[related literature]
	\label{rmk:related-literature}
	In the case $\alphalow = 0$, where no regularity assumption is made on the optimum or the features (apart from being bounded), we upper-bound $\min_{k=1, \dots, n } \E\left[\cR(\theta_k)\right]$ by $O(n^{-1})$. A similar result was shown in~\cite{bach2013non}: the excess risk for averaged constant-step size SGD is asymptotically dominated by $n^{-1}$ on any least-squares problem--not necessarily a noiseless one. It is remarkable that under the noiseless linear setting, no averaging or decay of the step-size is needed to obtain the same convergence rate. 
	
	The article \cite{jun2019kernel} also studies the performance of an algorithm, a variant of kernel regularized least-squares, in the noiseless non-parametric setting. However, they do not exploit when the function is more regular than being in the kernel space, i.e., when $\alpha_1 > 0$ with our notation, $\beta > 1/2$ with theirs. In fact, they leave this case as an open problem in their Section 6. Thus, a fair comparison can only be made when $\alpha_1 = 0, \beta = 1/2$. In this case, SGD and the algorithm of \cite{jun2019kernel} both achieve the same rate $O(n^{-1})$. 
\end{remark}

\begin{remark}
	\label{rmk:weak-assumptions}
	The theorems stated above stay true if one weakens the assumptions in the following way, where $\preccurlyeq$ denotes the semi-definite order:
	\begin{itemize}[noitemsep, nolistsep]
		\item assume $\E\left[\Vert X \Vert^2 X \otimes X\right] \preccurlyeq R_0 \Sigma$ instead of $\Vert X \Vert^2 \leq R_0$ a.s., and
		\item assume $\E\left[\left\langle X , \Sigma^{-\alphalow} X\right\rangle  X \otimes X\right] \preccurlyeq R_\alphalow \Sigma$ instead of $\left\langle X , \Sigma^{-\alphalow}X \right\rangle \leq R_\alphalow$ a.s.
	\end{itemize}
	This weaker set of assumptions is useful in the case of non-bounded features, like the Gaussian features of Appendix \ref{sec:gaussian}. We thus take special care in using only these weaker assumptions in the proofs of Theorems \ref{thm:main-result-upper-bound}, \ref{thm:main-result-lower-bound}, \ref{thm:general-result-upper-bound} and \ref{thm:general-result-lower-bound}. However we prefer stating results with the stronger assumptions for the sake of clarity. 
\end{remark}

\begin{remark}[Application of Theorem \ref{thm:main-result-upper-bound} in finite dimension]
	If $\cH$ is finite-dimensional and $\Sigma$ is of full rank, the assumptions of Theorem \ref{thm:main-result-upper-bound} hold for any $\alphalow \geq 0$. Thus SGD converges faster than any polynomial; in fact one can check that an exponential upper bounds on the reconstruction and generalization errors of the form $C''\exp(-\lambda_{\min}(\Sigma) t)$ hold, where $\lambda_{\min}(\Sigma)$ is the smallest eigenvalue of $\Sigma$. Although the latter bound is asymptotically better than polynomial rates, for moderate time scales the polynomial rates may describe best the observed behavior; for an illustration of this fact on the averaging process, see Section \ref{sec:averaging} and in particular the discussion following Corollary \ref{coro:averaging}. 
\end{remark}

Theorems \ref{thm:main-result-upper-bound} and \ref{thm:main-result-lower-bound} are extended in the next section and proved in Appendices \ref{sec:proof-thm-general-upper} and \ref{sec:proof-thm-general-lower} respectively. The generalization of Theorem \ref{thm:main-result-upper-bound} beyond the noiseless linear model is exposed in Section \ref{sec:robustness}. The reader interested mostly by applications of Theorems \ref{thm:main-result-upper-bound} and \ref{thm:main-result-lower-bound} can jump directly to Section \ref{sec:applications}.

\subsection{Regularity functions and general results} 

The main difficulty in the proof of Theorems \ref{thm:main-result-upper-bound} and \ref{thm:main-result-lower-bound} is that deriving closed recurrence relations for the expected reconstruction and generalization errors is not straightforward. In this paper, we propose to study the norm of $\theta_n- \theta_*$ associated to different powers of the covariance $\Sigma$. More precisely, define
\begin{equation}
\varphi_n(\beta) = \E\left[\left\langle \theta_n - \theta_*, \Sigma^{-\beta}\left(\theta_n - \theta_*\right)\right\rangle\right] \in [0,\infty] \, , \qquad \beta \in \R \, . \label{eq:regularity-function}
\end{equation}
We call $\varphi_n$ the regularity function at iteration $n$. In particular, 
\begin{align*}
&\varphi_n(0) = \E[\Vert \theta_n - \theta_* \Vert^2] &&\text{and} &&\varphi_n(-1) =  2 \E[\cR(\theta_n)] \, .
\end{align*}
The sequence of regularity functions $\varphi_n$, $n\geq 1$ satisfies a closed recurrence inequality (Property \ref{prop:regularity-ineq} in Appendix \ref{sec:proof-thm-general-upper}) which is central to our proof strategy. Theorems \ref{thm:main-result-upper-bound} and \ref{thm:main-result-lower-bound} can be extended to the following estimates on the regularity functions $\varphi_n(\beta)$ on the full interval $\beta \in [-1,\alphahigh]$ (see proofs in Appendices \ref{sec:proof-thm-general-upper} and \ref{sec:proof-thm-general-lower} respectively). .

\begin{thm}[upper bound]
	\label{thm:general-result-upper-bound}
	Under the assumptions of Theorem \ref{thm:main-result-upper-bound}, we have for all $n \geq 1$,
	\begin{enumerate}[noitemsep, nolistsep]
		\item\label{concl:positive} for all $\beta \in [0,\alphalow]$, \hspace{2.9cm}
		$\displaystyle \varphi_n(\beta) \leq \frac{C}{n^{\alphalow-\beta}} \, ,$ \hspace{2cm}
		\item\label{concl:negative} for all $\beta \in [-1,0)$, \hspace{1.5cm}
		$\displaystyle \min_{k=0, \dots, n } \varphi_k(\beta) \leq \frac{C'}{n^{\alphalow-\beta}} \, , $\hspace{1.5cm}
	\end{enumerate}
		where $\displaystyle C = \frac{\alphalow^{\alphalow-\beta}}{\gamma^{\alphalow-\beta}} \left(\Vert \Sigma^{-\alphalow/2} \theta_* \Vert^2 + \frac{R_\alphalow}{R_0} \Vert \theta_* \Vert^2 \right)$, $\displaystyle C' = 2^{\alphalow - \beta} \frac{\alphalow^{\alphalow}}{\gamma^{\alphalow-\beta}} \left(\Vert \Sigma^{-\alphalow/2} \theta_* \Vert^2 + \frac{R_\alphalow}{R_0} \Vert \theta_* \Vert^2 \right)$.
\end{thm}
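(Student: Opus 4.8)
The plan is to reduce everything to a single scalar recurrence satisfied by the family $\varphi_n(\beta)$ and then to extract the polynomial rates from it.

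\textbf{Step 1 (the recurrence, Property \ref{prop:regularity-ineq}).} Writing $\eta_n = \theta_n-\theta_*$, the SGD update \eqref{eq:SGD_iteration} reads $\eta_n = (\Id - \gamma\, X_n\otimes X_n)\eta_{n-1}$, a self-adjoint contraction applied to $\eta_{n-1}$. Conditioning on $\eta_{n-1}$ and taking expectation over the independent sample $X_n$, I would use $\E[X_n\otimes X_n] = \Sigma$ for the two cross terms and the pointwise operator identity $(x\otimes x)\Sigma^{-\beta}(x\otimes x) = \langle x,\Sigma^{-\beta}x\rangle\,(x\otimes x)$ for the quadratic term. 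This gives exactly
\[ \varphi_n(\beta) \leq \varphi_{n-1}(\beta) - 2\gamma\,\varphi_{n-1}(\beta-1) + \gamma^2 R_\beta\,\varphi_{n-1}(-1), \]
once $\langle X,\Sigma^{-\beta}X\rangle$ is bounded by a constant $R_\beta$. For $\beta\in[0,\alphalow]$ this bound follows from the spectral Hölder interpolation $\langle x,\Sigma^{-\beta}x\rangle \leq \langle x,x\rangle^{1-\beta/\alphalow}\langle x,\Sigma^{-\alphalow}x\rangle^{\beta/\alphalow}$ together with assumptions \ref{ass:reg-opt}--\ref{ass:reg-feature}, yielding $R_\beta = R_0^{1-\beta/\alphalow}R_\alphalow^{\beta/\alphalow}$; for $\beta\in[-1,0)$ one uses $\Sigma\preccurlyeq R_0\Id$ directly. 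I would carry the computation under the weaker moment assumptions of Remark \ref{rmk:weak-assumptions}, so that the operator identity above is replaced by the corresponding inequality in expectation.

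\textbf{Step 2 (summability / anchoring).} The decisive observation is that the $\beta=0$ instance closes on itself: since $\gamma R_0\leq 1$, the noise coefficient $\gamma^2 R_0$ is dominated by the drift coefficient $2\gamma$, so $\varphi_n(0)\leq \varphi_{n-1}(0) - \gamma\,\varphi_{n-1}(-1)$, and telescoping with $\varphi_0(0)=\Vert\theta_*\Vert^2$ gives the $n$-uniform bound $\gamma\sum_{k=0}^{n-1}\varphi_k(-1)\leq \Vert\theta_*\Vert^2$. This already proves the case $\alphalow=0$ (via $\min_k\varphi_k(-1)\leq \Vert\theta_*\Vert^2/(\gamma n)$) and, crucially, it shows that the multiplicative-noise term is summable in time, which breaks the circular dependence of the noise term on the errors it is supposed to control. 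Summing the general recurrence then gives $\sum_k\varphi_k(\beta-1)<\infty$ for every $\beta\in[0,\alphalow]$, and dropping the favorable drift term at $\beta=\alphalow$ yields $\sup_n\varphi_n(\alphalow)<\infty$ (rate $n^{0}$), the anchor at the top of the scale.

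\textbf{Step 3 (bootstrap to the sharp rate).} I would then upgrade these crude rates by a descending induction on $\beta$ in unit steps, using the recurrence one level above the target: rearranged, the level-$(\beta+1)$ recurrence reads $2\gamma\,\varphi_{n-1}(\beta)\leq \varphi_{n-1}(\beta+1) - \varphi_n(\beta+1) + \gamma^2 R_{\beta+1}\varphi_{n-1}(-1)$, expressing the error at level $\beta$ through the \emph{decrement} of the error at level $\beta+1$. Multiplying by a weight $n^{p}$, summing, and applying Abel summation converts this decrement, controlled by the inductive bound $\varphi_n(\beta+1)\leq C_{\beta+1}\,n^{-(\alphalow-\beta-1)}$ and by the summable noise of Step 2, into the improved bound at level $\beta$: each step raises the exponent by one, so starting from the anchor $\varphi_n(\alphalow)=O(1)$ one reaches $\varphi_n(\beta)\leq C_\beta\, n^{-(\alphalow-\beta)}$, with the stated constant emerging from the telescoping of the weights. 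Because the weighted-average argument only controls $\min_{k\leq n}\varphi_k(\beta)$ in general, this gives conclusion \ref{concl:negative} for $\beta\in[-1,0)$ directly; for $\beta\in[0,\alphalow]$ (conclusion \ref{concl:positive}) I would need the stronger last-iterate statement, which I expect to obtain by exploiting that at nonnegative $\beta$ the increments are better controlled (the noise is genuinely lower-order there).

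\textbf{Main obstacle.} The heart of the difficulty is Step 3: the drift term $-2\gamma\,\varphi_{n-1}(\beta-1)$ is what produces decay, yet there is no pointwise lower bound of $\varphi_{n-1}(\beta-1)$ by $\varphi_{n-1}(\beta)$ — indeed the spectral mass of $\eta_n$ concentrates near the bottom of the spectrum of $\Sigma$, so their ratio tends to $0$. Decay must therefore be extracted globally, through the weighted summation anchored at the top of the scale, and two bookkeeping points require care: the unit-step descent only reaches an arithmetic progression of exponents, so for a general real $\beta$ one must first establish the rate on the top band $(\alphalow-1,\alphalow]$ — where one cannot step up past $\alphalow$, since assumptions \ref{ass:reg-opt}--\ref{ass:reg-feature} fail beyond it — and only then descend; and the constants must be tracked through every step to land on the claimed $\alphalow^{\alphalow-\beta}\gamma^{-(\alphalow-\beta)}$ dependence. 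Separating the last-iterate bound (for $\beta\geq 0$) from the weaker min-iterate bound (for $\beta<0$) is the final subtlety, consistent with the open question raised in Remark \ref{rmk:pointwise-decay-risk}.
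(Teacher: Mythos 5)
Your Steps 1 and 2 coincide with the paper's preliminaries (Property \ref{prop:regularity-ineq}, the telescoped bounds \eqref{eq:bound--1}--\eqref{eq:bound_sum_-1}, and the anchor $\sup_n \varphi_n(\alphalow) \leq D := \varphi_0(\alphalow) + \frac{R_\alphalow}{R_0}\varphi_0(0)$), but your Step 3 is where the proof actually lives, and there is a genuine gap there: you are missing the log-convexity of $\beta \mapsto \varphi_n(\beta)$ (Property \ref{prop:log-cvx}, a direct H\"older argument on the spectral decomposition of $\Sigma$, applied to the iterates themselves, not just to $X$ as in your Step 1). Your ``main obstacle'' paragraph asserts that no pointwise lower bound on the drift exists and that decay must be extracted globally --- this is precisely backwards. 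Once the top of the scale is bounded, log-convexity at the triple $(-1,0,\alphalow)$ gives the pointwise bound $\varphi_k(0) \leq \varphi_k(-1)^{\alphalow/(\alphalow+1)} D^{1/(\alphalow+1)}$, i.e.\ $\varphi_k(-1) \geq D^{-1/\alphalow}\varphi_k(0)^{1+1/\alphalow}$, which plugged into \eqref{eq:bound--1} closes the level-$0$ recursion nonlinearly:
\begin{equation*}
\varphi_{k-1}(0) - \varphi_k(0) \geq \gamma D^{-1/\alphalow} \varphi_{k-1}(0)^{1+1/\alphalow} \, ,
\end{equation*}
and convexity of $\varphi \mapsto \varphi^{-1/\alphalow}$ then yields $\varphi_n(0) \leq (\alphalow/\gamma)^{\alphalow} D\, n^{-\alphalow}$ in a single pass, with exactly the stated constant. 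Conclusion \ref{concl:positive} for all $\beta \in [0,\alphalow]$ is then one more interpolation, $\varphi_n(\beta) \leq \varphi_n(0)^{1-\beta/\alphalow}\varphi_n(\alphalow)^{\beta/\alphalow}$ --- a genuine last-iterate bound --- and conclusion \ref{concl:negative} follows by interpolating between $-1$ and $0$ on the window $[\lceil n/2\rceil, n]$.

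By contrast, your unit-step weighted-summation descent fails as written, for three concrete reasons. First, the multiplicative-noise term in each summed recurrence is controlled only through $\gamma^2 R \sum_{k\geq m}\varphi_k(-1) \leq \gamma R\,\varphi_m(0)$; before the level-$0$ decay is established this tail is merely $O(1)$, so the noise floor caps every level of the first descent at $O(1/n)$, and the level that would supply the tail decay (level $0$) sits \emph{below} the levels that need it whenever $\alphalow > 1$ --- the circularity you claimed to have broken in Step 2 is only broken for summability, not for the sharp exponent. (One could repair this by iterating the whole descent $\lceil \alphalow \rceil$ times, using monotonicity of $\varphi_k(0)$ to turn min-iterate into last-iterate at level $0$, but that scheme is not in your proposal and would not produce the stated constants $\alphalow^{\alphalow-\beta}\gamma^{-(\alphalow-\beta)}$.) Second, the descent cannot be initialized on the top band $\beta \in (\alphalow-1,\alphalow)$, since the level-$(\beta+1)$ recurrence requires $\beta+1 \leq \alphalow$ and you offer no substitute; with log-convexity the band is immediate. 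Third, conclusion \ref{concl:positive} requires last-iterate bounds for all $\beta \in (0,\alphalow]$, while Abel summation only yields min-over-window bounds there; your sentence ``which I expect to obtain by exploiting that at nonnegative $\beta$ the increments are better controlled'' is an expectation, not an argument, whereas the interpolation route delivers it for free from the two last-iterate endpoints $\beta = 0$ and $\beta = \alphalow$.
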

\begin{thm}[lower bound]
	\label{thm:general-result-lower-bound}
	Under the assumptions of Theorem \ref{thm:main-result-lower-bound}, for all $\beta \in [-1,\alphahigh]$, for all $\varepsilon > 0$, $\varphi_n(\beta)$ is not asymptotically dominated by $1/n^{\alphahigh-\beta+\varepsilon}$. 
\end{thm}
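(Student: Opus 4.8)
The plan is to reduce the statement to the divergence of a single weighted series and then feed the two irregularity assumptions into two complementary lower bounds on $\varphi_n(\beta)$. Concretely, it suffices to prove that $\sum_{n\geq1} n^{\alphahigh-\beta-1}\varphi_n(\beta)=+\infty$: if instead $\varphi_n(\beta)\leq C n^{-(\alphahigh-\beta+\varepsilon)}$ held for some $\varepsilon>0$ and all large $n$, the series would be bounded by $C\sum_n n^{-1-\varepsilon}<\infty$, a contradiction (and if $\varphi_n(\beta)=+\infty$ infinitely often the conclusion is trivial). To get at $\varphi_n(\beta)=\Tr(\Sigma^{-\beta}\Phi_n)$ with $\Phi_n=\E[\eta_n\otimes\eta_n]$ and $\eta_n=\theta_n-\theta_*$, I would square the iteration $\eta_n=(\Id-\gamma X_n\otimes X_n)\eta_{n-1}$ and use the independence of $X_n$ from $\eta_{n-1}$ to obtain the closed linear recurrence $\Phi_n=\mathcal{T}(\Phi_{n-1})$, where $\mathcal{T}=\mathcal{T}_0+\gamma^2\cS$, $\mathcal{T}_0(\Phi)=(\Id-\gamma\Sigma)\Phi(\Id-\gamma\Sigma)$ and $\cS(\Phi)=\E[(X\otimes X)\Phi(X\otimes X)]-\Sigma\Phi\Sigma$. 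Both $\mathcal{T}_0$ and $\cS$ are positive maps, and $\Phi_0=\theta_*\otimes\theta_*\succeq0$.

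Expanding $\mathcal{T}^n=(\mathcal{T}_0+\gamma^2\cS)^n$ into a sum of words in $\mathcal{T}_0,\cS$ with non-negative coefficients and applying it to $\Phi_0\succeq0$, every word produces a positive semidefinite operator. Retaining only the all-$\mathcal{T}_0$ word gives the ``gradient-descent'' bound, while retaining only the word that applies $\cS$ at the innermost step isolates the first noise injection; taking $\Tr(\Sigma^{-\beta}\cdot)$, monotone on positive semidefinite operators, and using that $\Id-\gamma\Sigma$ commutes with $\Sigma^{-\beta}$, I obtain
$$\varphi_n(\beta)\ \geq\ \langle\theta_*,\Sigma^{-\beta}(\Id-\gamma\Sigma)^{2n}\theta_*\rangle,\qquad \varphi_n(\beta)\ \geq\ \gamma^2\,\Tr\!\big(\Sigma^{-\beta}(\Id-\gamma\Sigma)^{2(n-1)}\cS(\Phi_0)\big).$$
A direct computation gives $\cS(\Phi_0)=\Cov(YX)=\gamma^{-2}\E[\xi_1\otimes\xi_1]$ for the centred first increment $\xi_1=\gamma(YX-\Sigma\theta_*)$, so the second bound reads $\varphi_n(\beta)\geq\E\big[\langle\xi_1,\Sigma^{-\beta}(\Id-\gamma\Sigma)^{2(n-1)}\xi_1\rangle\big]$ and is manifestly an expectation of a non-negative quantity. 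I would use the first bound against assumption (a) and the second against assumption (b).

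Multiplying by $n^{\alphahigh-\beta-1}$ and summing, I would invoke the elementary Abelian estimate $\sum_{n\geq1}n^{a-1}(1-\gamma\lambda)^{2(n-1)}\sim\Gamma(a)(2\gamma\lambda)^{-a}$ as $\lambda\to0^+$ (here $a=\alphahigh-\beta>0$ and $1-(1-\gamma\lambda)^2\sim2\gamma\lambda$), which yields a constant $c>0$ and $\delta>0$ with $\lambda^{-\beta}\sum_n n^{\alphahigh-\beta-1}(1-\gamma\lambda)^{2(n-1)}\geq c\,\lambda^{-\alphahigh}$ on $(0,\delta]$. Since all terms are non-negative, Tonelli allows exchanging the sum with the expectation and the spectral integral, and the region $\lambda\geq\delta$ contributes only a finite correction (there $\lambda^{-\alphahigh}$ is bounded and the total spectral masses $\Vert\theta_*\Vert^2$ and $\E\Vert\xi_1\Vert^2$ are finite). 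This gives, with finite constants $K,K'$,
$$\sum_{n\geq1} n^{\alphahigh-\beta-1}\langle\theta_*,\Sigma^{-\beta}(\Id-\gamma\Sigma)^{2n}\theta_*\rangle\ \geq\ c\,\langle\theta_*,\Sigma^{-\alphahigh}\theta_*\rangle-K,$$
$$\sum_{n\geq1} n^{\alphahigh-\beta-1}\varphi_n(\beta)\ \geq\ c\,\E\big[\langle\xi_1,\Sigma^{-\alphahigh}\xi_1\rangle\big]-K'.$$
Under (a), $\langle\theta_*,\Sigma^{-\alphahigh}\theta_*\rangle=\Vert\Sigma^{-\alphahigh/2}\theta_*\Vert^2=\infty$, so the first series diverges. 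Under (b), the triangle inequality gives $\Vert\Sigma^{-\alphahigh/2}\xi_1\Vert\geq|Y|\,\Vert\Sigma^{-\alphahigh/2}X\Vert-\Vert\Sigma^{1-\alphahigh/2}\theta_*\Vert$; on the positive-probability event $\{X\notin\Sigma^{\alphahigh/2}(\cH),\ Y\neq0\}$ the first term is $+\infty$, hence $\Vert\Sigma^{-\alphahigh/2}\xi_1\Vert=\infty$ there and $\E[\langle\xi_1,\Sigma^{-\alphahigh}\xi_1\rangle]=\infty$, so the second series diverges. Either way $\sum_n n^{\alphahigh-\beta-1}\varphi_n(\beta)=\infty$, which is the claim for $\beta\in[-1,\alphahigh)$.

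The only genuine obstacle is the sub-case of (b) in which the subtracted term $\Vert\Sigma^{1-\alphahigh/2}\theta_*\Vert$ is itself infinite, i.e.\ $\theta_*\notin\Sigma^{(\alphahigh-2)/2}(\cH)$: then $YX-\Sigma\theta_*$ is a difference of two irregular vectors, cancellation cannot be excluded, and the second bound becomes inconclusive. I would resolve this by noting that, since $\Sigma$ is bounded ($\lambda^{-\alphahigh}\geq R_0^{-2}\lambda^{-(\alphahigh-2)}$ on its spectrum), $\theta_*\notin\Sigma^{(\alphahigh-2)/2}(\cH)$ forces $\theta_*\notin\Sigma^{\alphahigh/2}(\cH)$, so assumption (a) holds and the first series already diverges; the two bounds are thus complementary and jointly cover every case. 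The remaining points are routine: $\gamma\leq1/R_0$ ensures $\Id-\gamma\Sigma\succeq0$ so the Abelian signs are correct; the monotonicity of $\Tr(\Sigma^{-\beta}\cdot)$ and the Tonelli exchange in the presence of the unbounded $\Sigma^{-\beta}$ are justified by monotone truncation $\Sigma^{-\beta}\wedge M$, $M\uparrow\infty$; and the endpoint $\beta=\alphahigh$ follows from the interior case because the hypotheses are monotone in $\alphahigh$ (irregularity at level $\alphahigh$ implies irregularity at every larger level $\alphahigh'$, as $\Sigma^{\alphahigh'/2}(\cH)\subseteq\Sigma^{\alphahigh/2}(\cH)$), so applying the interior result with $\alphahigh'\in(\alphahigh,\alphahigh+\varepsilon)$ and $\beta=\alphahigh$ yields non-domination by $n^{-\varepsilon}$. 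Throughout, $X$ enters only through $\Sigma$ and the law of $(X,Y)$, so the argument uses only the weakened hypotheses of Remark \ref{rmk:weak-assumptions}.
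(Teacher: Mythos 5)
Your proof is correct, and while it rests on the same backbone as the paper's --- reducing non-domination to the divergence of $\sum_n n^{\alphahigh-\beta-1}\varphi_n(\beta)$, and in case (a) deriving the identical lower bound $\varphi_n(\beta)\geq\langle\theta_*,\Sigma^{-\beta}(\Id-\gamma\Sigma)^{2n}\theta_*\rangle$ (the paper gets it by Jensen applied to $\overline{\theta}_n=\E[\theta_n]$, you by keeping the all-$\mathcal{T}_0$ word; the paper then uses the exact binomial-series identity $\gamma^{-\alpha}\Sigma^{-\alpha}=\sum_k\binom{\alpha+k-1}{k}(\Id-\gamma\Sigma)^k$ where you use the equivalent Karamata/Abelian asymptotic $\sum_n n^{a-1}r^n\sim\Gamma(a)(1-r)^{-a}$ plus a spectral splitting at $\delta$) --- your treatment of case (b) is genuinely different. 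The paper handles irregular features by a pathwise shift: conditionally on $X_1$, it restarts SGD from $\theta_1$, observes that $\tilde{\theta}_*=\theta_*-\theta_1$ is an irregular optimum with positive probability (after reducing WLOG to $\theta_*\in\Sigma^{\alphahigh/2}(\cH)$, since otherwise case (a) applies), and reuses case (a) verbatim. You instead work at the level of the second-moment operator $\Phi_n=\E[\eta_n\otimes\eta_n]$, expand $(\mathcal{T}_0+\gamma^2\cS)^n$ into positive words, and isolate the innermost noise injection $\gamma^2\mathcal{T}_0^{n-1}(\cS(\Phi_0))$ with $\cS(\Phi_0)=\Cov(YX)$; the potential $\infty-\infty$ cancellation in $YX-\Sigma\theta_*$ is dispatched by noting that $\theta_*\notin\Sigma^{(\alphahigh-2)/2}(\cH)$ already implies hypothesis (a) because $\Sigma$ is bounded --- the exact analogue of the paper's WLOG step, so both proofs need the two cases to be complementary in the same way. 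What your route buys: after the first step it is purely an operator inequality (no conditioning or restart bookkeeping), it makes the source of divergence explicit and quantitative through $\Cov(YX)$, and you explicitly cover the endpoint $\beta=\alphahigh$ by monotonicity of the hypotheses in $\alphahigh$ --- a point where the paper's binomial argument silently degenerates (the coefficients $\binom{\alphahigh-\beta+k-1}{k}$ vanish when $\alphahigh=\beta$), although there case (a) in fact gives $\varphi_n(\alphahigh)=\infty$ for every $n$. What the paper's route buys is brevity: the shift trick reduces (b) to (a) in a few lines rather than requiring the word expansion and the covariance computation. Your remarks on the weak assumptions of Remark \ref{rmk:weak-assumptions} (only $\Vert\Sigma\Vert_{\cH\to\cH}\leq R_0$ is used, which the paper derives from $\E[\Vert X\Vert^2 X\otimes X]\preccurlyeq R_0\Sigma$) and on justifying the Tonelli exchanges by monotone truncation of $\Sigma^{-\beta}$ are accurate and match the care the paper takes.
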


\section{Applications}
\label{sec:applications}

% \francis{may be add a tiny intro} \raphael{done below but largely redundant, especially with the second paragraph of ``contributions'' in the intro }

% In this section, we derive two applications of our work. The first one weakens the linearity assumption of Section \ref{sec:general-theory} by using positive definite kernels, but keeps the assumption that there is no additive noise. We give rates of convergence of kernel SGD in the so called ``attainable case''. An application to the interpolation of a function of $[0,1]^d$ from the online observation of its values at randomly sampled points is studied. The second application goes beyond statistical learning; we study the convergence rate of the averaging process on a graph, a central process in distributed computations, in terms of the spectral dimension of the graph. 

\subsection{Kernel methods and interpolation in Sobolev spaces}
\label{sec:kernel}

A main case of application of our results is the reproducing kernel Hilbert space (RKHS) setting \cite{hofmann2008kernel}. In this setting, the space $\cH$ is typically large or infinite-dimensional, and we do not have a direct access to the feature variable $X \in \cH$. Instead, we have access to some random input variable $U \in \cU$ such that $X = \Phi(U)$ for some fixed feature map $\Phi:\cU \to \cH$. It is then natural to associate a vector $\theta \in \cH$ with the function $f_\theta \in L^2(\cU)$ defined by
\begin{equation*}
f_\theta(u) = \left\langle \theta, \Phi(u) \right\rangle \, .
\end{equation*}
If the positive-definite kernel $k(u,u') = \langle \Phi(u), \Phi(u') \rangle$ can be computed efficiently, SGD can be ``kernelized'' \cite{ying2008online,tarres2014online,rosasco2015learning,dieuleveut2016nonparametric}, i.e., the iteration can be written directly in terms of $f_n := f_{\theta_n}$:
\begin{align*}
f_n &= f_{n-1} - \gamma(f_{n-1}(U_n)-Y_n)k(U_n, .) \\
&= f_{n-1} - \gamma(f_{n-1}-f_*)(U_n)k(U_n, .)
\end{align*}
where $X_n = \Phi(U_n)$ and $f_*(u) := f_{\theta_*}(u) =  \langle \theta_*, \Phi(u) \rangle$. Note that in the kernel literature, the mapping $\theta \mapsto f_\theta$ is used to identify $\cH$ with a subspace of $L^2(\cU)$; indeed, if $\Sigma = \E\left[\Phi(U) \otimes \Phi(U)\right]$ has dense range, the mapping is injective. Using this identification, Theorems \ref{thm:main-result-upper-bound} and \ref{thm:main-result-lower-bound} can be applied to obtain bounds in the ``attainable'' case, meaning that the optimal predictor $f_* \in L^2(\cU)$ is in the RKHS $\cH$. This gives decay rates for the RKHS norm $\Vert f_n - f_* \Vert := \Vert \theta_n - \theta_* \Vert$ which is inherited from $\cH$, but also for the population risk $\cR(\theta_n)$ which is reinterpreted as the half squared $L^2$-distance between the associated $f_n$ and the optimal predictor $f_*$. Indeed,
\begin{equation*}
\cR(\theta_n) = \frac{1}{2}\E\left[\left\langle \theta_n - \theta_*, \Phi(U) \right\rangle^2\right] = \frac{1}{2} \E\left[\left(f_n(U)-f_*(U)\right)^2\right] = \frac{1}{2} \Vert f_n - f \Vert_{L^2(U)}^2 \, .
\end{equation*} 

\textbf{Application: interpolation in Sobolev spaces.} To illustrate our results, we consider the case where $\cU$ is the torus $[0,1]^d$, $U$ is uniformly distributed on $\cU$ and $k$ is a translation-invariant kernel: $k(u,u') = t(u-u')$ where $t$ is a square-integrable $1$-periodic function on $[0,1]^d$. The kernel $k$ is positive-definite if and only if the Fourier transform of $t$ is positive \cite{wahba1990spline}. This imposes, in particular, that $t$ is maximal at $0$. Thus the update rule 
\begin{align}
\label{eq:iteration-interpolation-function}
f_n &= f_{n-1} - \gamma\left(f_{n-1}(U_n)-f_*(U_n)\right)t(.-U_n) 
\end{align}
corrects $f_n$ so that the value $f_n(U_n)$ is closer to the observed value $f_*(U_n)$ than $f_{n-1}(U_n)$. Points near $U_n$ are also updated in the same direction, thus the algorithm should converge rapidly if the function $f_*$ is smooth. Our work derives the polynomial convergence rate as a function of the smoothness of $f_*$ and $t$. The smoothness of functions is measured with the Sobolev spaces $H^s_{\text{per}}$. A function $f$ with Fourier serie $\hat{f}$ belongs to $H^s_{\text{per}}$ if 
\begin{equation*}
\Vert f \Vert^2_{H^s_{\text{per}}} = \sum_{k\in \Z^d} \vert \hat{f}(k) \vert^2 \left(1 + |k|^2\right)^s < \infty \, .
\end{equation*}
Assume that the Fourier serie of $t$ satisfies a power-law decay: there exists $c, C > 0$ such that:
\begin{equation*}
c \left(1+|k|^2\right)^{-s/2-d/4} \leq \hat{t}(k) \leq C \left(1+|k|^2\right)^{-s/2-d/4} \, , \qquad k \in \Z^d \, .
\end{equation*}
This condition does not cover $C^\infty$ kernel, including the Gaussian kernel; it is relevant for less regular kernel, that have a power decay in Fourier. This condition is satisfied, for instance, by the Wendland functions \cite[Theorem 10.35]{wendland2004scattered}, or in dimension $d=1$ by the kernels corresponding to splines of order $s$, see \cite{wahba1990spline} or \cite{pillaud2018statistical}. The latter can be computed using the polylogarithm or--for special values of $s$--the Bernoulli polynomials. % \raphael{Taking kernels that generate Sobolev spaces on $\R^d$ does not work on $[0,1]^d$ if you take periodic boundary conditions. If you do not take periodic boundary conditions (simply consider $[0,1]^d$ as a subset of $\R^d$), it should work, but then the definition of Sobolev spaces is a pain so checking things seems far from straightforward, so I avoid it here. This is why I perfer not to mention the matern kernel. }

We have $t \in H^{s'}_{\text{per}}$ if and only if $s' < s$, thus $s$ measures the Sobolev smoothness of $k$. The operator $\Sigma$ is the convolution with $t$ and thus 
\begin{equation}
\label{eq:norm-rkhs-sobolev}
\Vert f \Vert^2 = \left\langle f , \Sigma^{-1}f \right\rangle_{L^2} \asymp \sum_{k\in\Z^d} \vert \hat{f}(k) \vert^2 \left( 1 + |k|^2 \right)^{s/2+d/4} = \Vert f \Vert^2_{H^{s/2+d/4}_{\text{per}}} \, ,
\end{equation}
where $\asymp$ denotes the equality up to positive multiplicative constants. To predict the convergence rate of \eqref{eq:iteration-interpolation-function}, we check the assumptions of Theorems \ref{thm:main-result-upper-bound}, \ref{thm:main-result-lower-bound}. Computations similar to \eqref{eq:norm-rkhs-sobolev} give % \francis{Compactify below}\raphael{done, hopefully hasn't hurt clarity too much}

\begin{enumerate}[label = (\alph*),noitemsep, nolistsep]
	\item (regularity of the optimum) 
	\begin{equation*}
	\left\langle f_* , \Sigma^{-\alpha}f_* \right\rangle \asymp \left\langle f_* , \Sigma^{-\alpha-1}f_* \right\rangle_{L^2} 
	\asymp \Vert f_* \Vert^2_{H^{(s/2+d/4)(\alpha+1)}_{\text{per}}}
	\end{equation*}
	Assume $f_* \in H^{r}_{\text{per}}$. We have $\left\langle f_* , \Sigma^{-\alpha}f_* \right\rangle < \infty $ if $ \alpha \leq \frac{2r}{s+d/2} - 1$.
	\item (regularity of the feature vector)  
	\begin{align*}
	\left\langle k(u,.) , \Sigma^{-\alpha}k(u,.) \right\rangle &= \Vert k(u,.) \Vert^2_{H^{(s/2+d/4)(\alpha+1)}_{\text{per}}} = \Vert t_s \Vert^2_{H^{(s/2+d/4)(\alpha+1)}_{\text{per}}} \\
	&= \sum_{k\in\Z^d} (1+k^2)^{(s/2+d/4)(\alpha-1)} \, .
	\end{align*}
	Thus $\left\langle k(u,.) , \Sigma^{-\alpha}k(u,.) \right\rangle < \infty $ if and only if $\alpha < 1 - \frac{d}{s+d/2}$.
\end{enumerate}
The regularities of the optimum and of the feature vector are non-negative if the smoothness $s$ of the kernel~$t$ satisfies $d/2<s\leq 2r-d/2$, where $r$ is the smoothness of $f_*$. In this case the polynomial rate of decay of the algorithm is given by the exponent
\begin{equation}
\label{eq:rate-sobolev}
\alpha_* = \min\left(  \frac{2r}{s+d/2} - 1 , 1 - \frac{d}{s+d/2}\right) \, .
\end{equation}
Note that, given a function $f_*$, this rate is maximal when $s=r$, i.e., the smoothness of the kernel coincides with the smoothness of the function, in which case $\alpha_* = 1-\frac{d}{r+d/2}$. Theorems \ref{thm:main-result-upper-bound}, \ref{thm:main-result-lower-bound} give the convergence rates in terms of $L^2$ norm and RKHS norm, which happens to be a Sobolev norm. The more general Theorems~\ref{thm:general-result-upper-bound} and~\ref{thm:general-result-lower-bound} gives convergence rates in terms of a continuity of fractional Sobolev norms, some weaker and some stronger than the RKHS norm.

\begin{figure}
	\includegraphics[width = 0.33\textwidth]{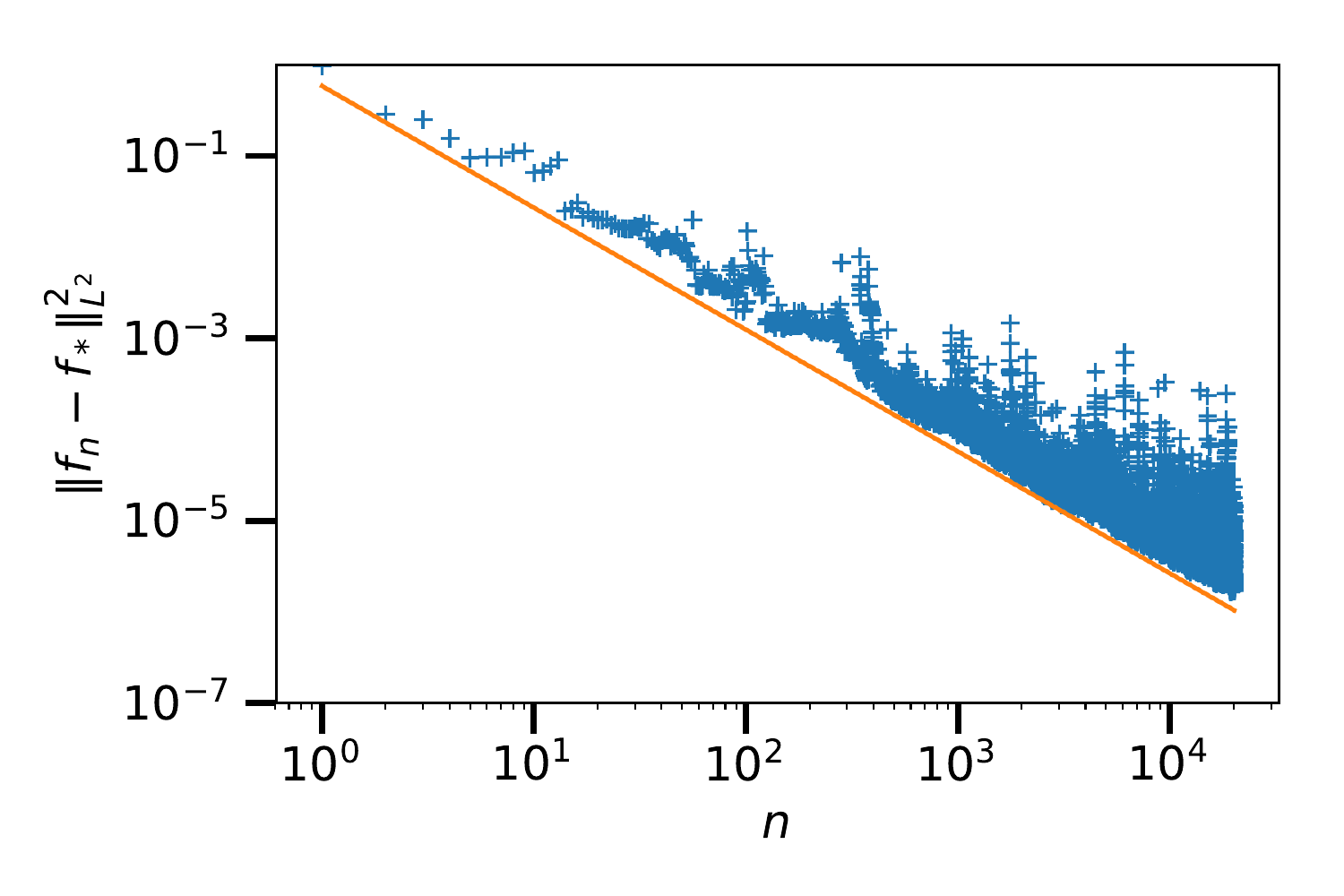}
	\includegraphics[width = 0.33\textwidth]{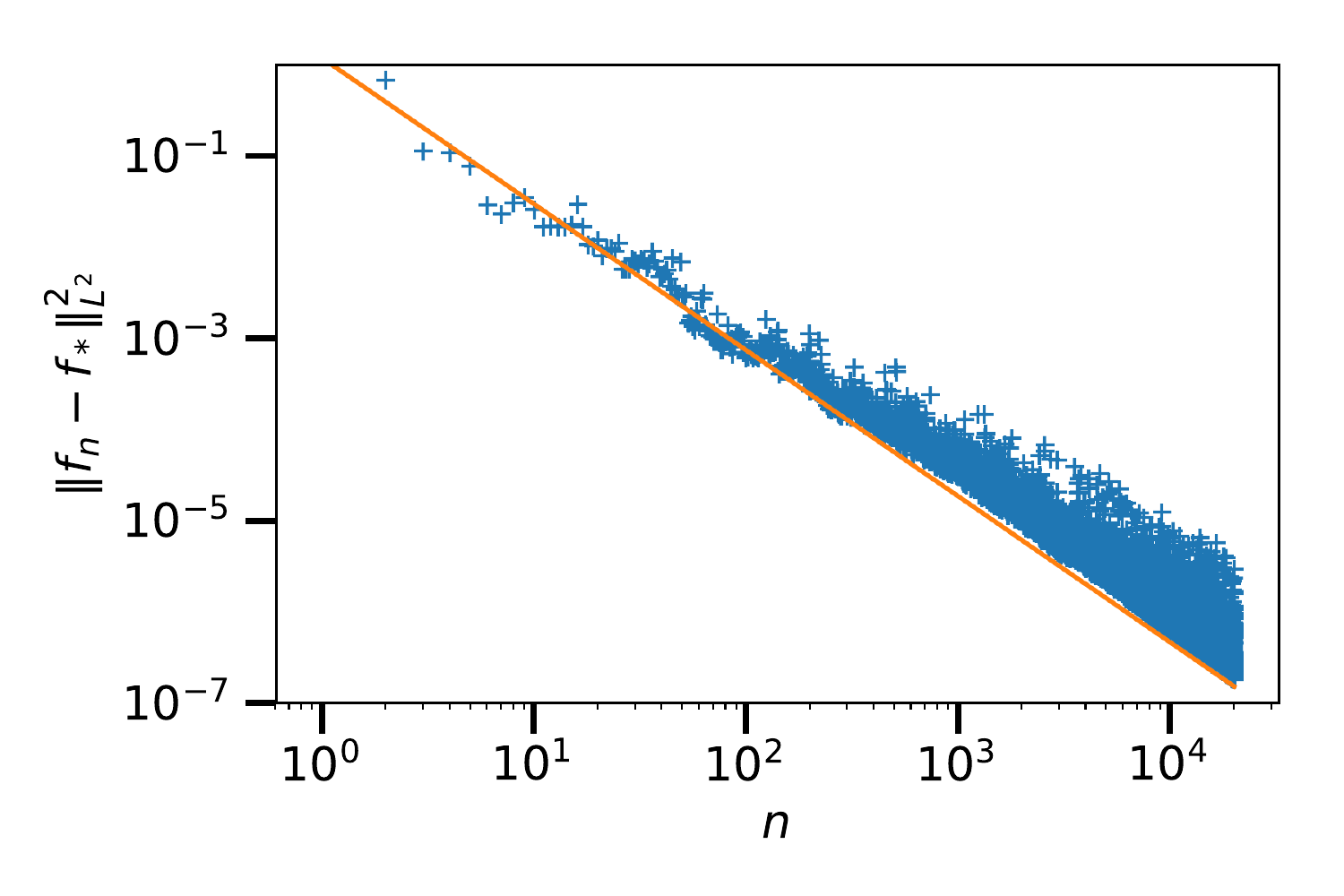}
	\includegraphics[width = 0.33\textwidth]{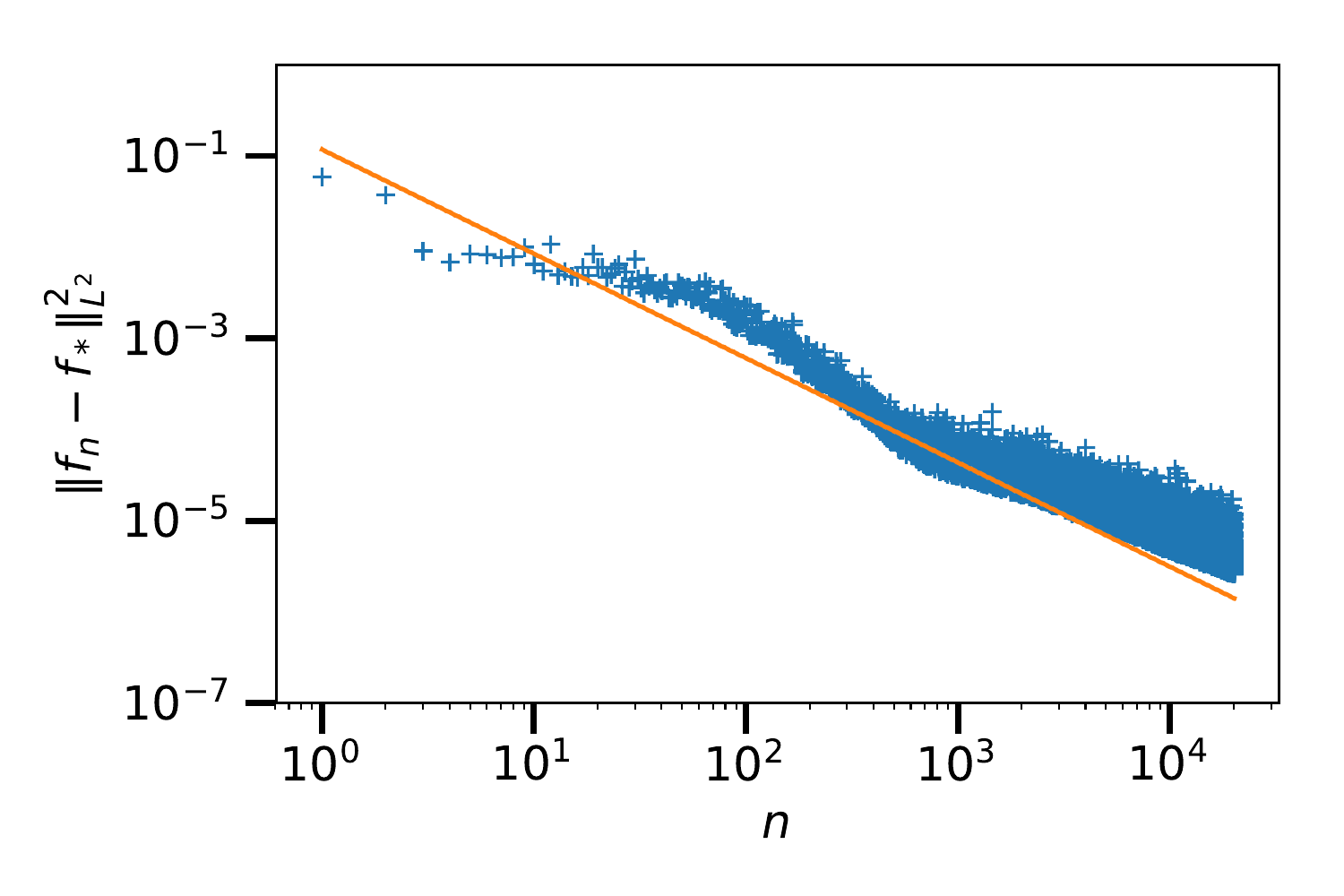}
	\vspace{-0.6cm}
	\caption{Interpolation of a function of smoothness $r=2$ using SGD with kernels of smoothness $s=1$ (left), $s=2$ (middle) and $s=3$ (right). Each plot represents one realization of the algorithm~\eqref{eq:iteration-interpolation-function}. The blue crosses represent the square $L^2$~norms $\Vert f_n - f_* \Vert^2_{L^2}$ as a function of the number of iterations $n$ and the orange lines represent the predicted polynomials rates $C/n^{\alpha_*+1}$, where $C$ is chosen to match best the empirical observations for each plot.}
	\label{fig:sobolev}
\end{figure}

In Figure \ref{fig:sobolev}, we show the decay of the $L^2$ norm in the interpolation of a function $f_*$ on $[0,1]$ of smoothness $2$ using kernels of smaller, matching and larger smoothness. In each case, the rate predicted by \eqref{eq:rate-sobolev} is sharp, and the convergence is indeed fastest when the smoothnesses match.

\subsection{Decay rate of the averaging process}
\label{sec:averaging}

The averaging process is a stochastic process on a graph, mostly studied as a model for asynchronous gossip algorithms on networks. Gossip algorithms are subroutines used to diffuse information throughout networks in distributed algorithms \cite{shah2009gossip}, in particular in distributed optimization~\cite{nedic2010constrained}.

Let $G$ be a finite undirected connected graph with vertex set $\cV$ of cardinality $N$ and edge set $\cE$ of cardinality~$M$. The averaging process is a discrete process on functions $x : \cV \to \R$ defined as follows. The initial configuration $x_0 = e_{v_\star}: \cV \to \R$ is the indicator function of some distinguished vertex $v_\star \in \cV$, i.e., $x_0(v_\star) = 1$ and $x_0(v) = 0$ if $v \neq v_\star$. At each iteration, we choose a random edge and replace the values at the ends of the edge by the average of the two current values. In equations, at iterations $n$, given $x_{n-1}$, sample an edge $e_n = \{v_n,w_n\}$ uniformly at random from $\cE$ and independently from the past, and define 
\begin{align}
&x_n(v_n) = x_n(w_n) = \frac{x_{n-1}(v_n) + x_{n-1}(w_n)}{2} \, , &&x_n(v) =x_{n-1}(v) \, , \quad v \neq v_n, w_n \, . \label{eq:updates-averaging}
\end{align}
As the graph is connected, all functions values $x_n(v), v \in \cV$ converge to $1/N$ as $n \to \infty$. The study of the averaging process aims at describing how the speed of convergence depends on the graph $G$. 

The averaging process can be seen as a prototype interacting particle system, or finite markov information-exchange process according to Aldous's terminology \cite{aldous2013interacting}. However, the linear structure of the updates of the averaging process makes the analysis simpler than in other interacting particle systems; this property is key in applying the results of Section \ref{sec:general-theory}.

In this section, we introduce a quantitive version of the notion of spectral dimension of a graph (see \cite{avrachenkov2019eigenvalues} and references therein for other definitions). We use this quantity to build polynomial convergence rates for the expected squared $\ell^2$-distance to optimum $\E\big[\sum_{v \in \cV} \left(x_n(v) - 1/N\right)^2\big]$ and for the expected energy $\E\big[\frac{1}{2}\sum_{\{v,w\}\in \cE} (x_n(v) - x_n(w))^2\big]$. The comparison with other known convergence bounds is made. We add numerical experiments showing that our bounds describe the observed behavior in some classical large graphs, for an intermediate number of iterations.

Let $L = \sum_{\{v,w\}\in \cE} (e_v-e_w)(e_v - e_w)^\top$ be the Laplacian of the graph. It is a positive semi-definite operator. The spectral measure of $L$ at a vertex $v \in \cV$ is the unique measure $\sigma_v$ such that for all continuous real function $f$,
\begin{equation*}
\langle e_v, f(L) e_v \rangle = \int \diff \sigma_v(\lambda) f(\lambda) \, .
\end{equation*}If $0 = \lambda_0 < \lambda_1 \leq  \dots \leq \lambda_{N-1}$ are the eigenvalues of $L$ and $u_0 = \bfone, u_1, \dots, u_{N-1}$ are the corresponding normalized eigenvectors, then 
\begin{equation*}
\sigma_v(\diff \lambda) = \sum_{i=0}^{N-1} (u_i(v))^2 \delta_{\lambda_i}(\diff\lambda) \, .
\end{equation*}
We say that $G$ is of spectral dimension $d \geq 0$ with constant $V>0$ if 
\begin{equation*}
\forall v \in \cV \, , \quad \forall E \in (0,\infty)\, , \quad  \sigma_v((0,E]) \leq V^{-1} E^{d/2} \, .
\end{equation*}
A typical example motivating this definition is the following.

\begin{proposition}
	\label{prop:torus}
	Let $\T^d_\Lambda$ denote the $d$-dimensional torus of side length $\Lambda$, i.e., the graph with vertex set $\cV = (\Z/\Lambda\Z)^d$ and edge set $\cE = \left\{\{v,w\} \, \middle\vert \,  v,w \in E, \Vert v-w \Vert_2 =1 \right\}$. The torus $\T^d_\Lambda$ is of spectral dimension $d$ with some constant $V(d)$ that depends on the dimension $d$ but not on the side length $\Lambda$. 
\end{proposition}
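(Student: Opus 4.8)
The plan is to exploit the Abelian-group (Cayley-graph) structure of the torus to diagonalize the Laplacian explicitly by Fourier analysis, and then to reduce the spectral-dimension bound to a lattice-point count in a Euclidean ball. First I would diagonalize $L$ using the characters $u_k(v) = N^{-1/2}\exp(2\pi i \langle k, v\rangle/\Lambda)$ indexed by frequencies $k \in (\Z/\Lambda\Z)^d$, where $N = \Lambda^d = \#\cV$. Since $L$ splits as the sum of the $d$ one-dimensional cycle Laplacians and the edge set $\cE$ is exactly the nearest-neighbor edges, the eigenvalue attached to $u_k$ is $\lambda_k = \sum_{j=1}^d 4\sin^2(\pi k_j/\Lambda)$.

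The second step uses vertex-transitivity. Because $|u_k(v)|^2 = 1/N$ for every vertex $v$ and every frequency $k$, the functional calculus gives $\langle e_v, f(L) e_v\rangle = N^{-1}\sum_k f(\lambda_k)$, so the spectral measure is the same at every vertex, namely $\sigma_v = N^{-1}\sum_k \delta_{\lambda_k}$. (Although the formula in the text is written with real eigenvectors, the spectral measure is intrinsic and basis-independent, so this complex computation is legitimate.) Consequently $\sigma_v((0,E]) = N^{-1}\#\{k \neq 0 : \lambda_k \leq E\}$, and it suffices to bound this count by $V(d)^{-1}E^{d/2}\Lambda^d$ with $V(d)$ independent of $\Lambda$.

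For the third step I would lower-bound the eigenvalues by a quadratic form. Writing $\tilde k_j \in (-\Lambda/2,\Lambda/2]$ for the symmetric representative of $k_j$, the elementary concavity inequality $\sin^2(\pi t) \geq 4t^2$, valid for $|t|\leq 1/2$, yields $\lambda_k \geq (16/\Lambda^2)\sum_j \tilde k_j^2 = (16/\Lambda^2)|\tilde k|^2$. Hence $\lambda_k \leq E$ forces $|\tilde k| \leq \Lambda\sqrt E/4$, so the count is at most the number of nonzero lattice points $\tilde k \in \Z^d$ in the Euclidean ball of radius $\rho := \Lambda\sqrt E/4$ (the symmetric representatives being distinct lattice points, a subset of all of $\Z^d$).

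The last step is the lattice-point count, and this is where the one genuine subtlety lies. A crude volume comparison, associating the unit cube $\tilde k + [-1/2,1/2)^d$ to each lattice point, bounds the number of lattice points of norm at most $\rho$ by $\omega_d(\rho + \sqrt d/2)^d$, with $\omega_d$ the volume of the unit ball. The additive boundary term $\sqrt d/2$ is harmless for large $\rho$ but would ruin the target bound, which is a constant times $\rho^d$, for small $E$ (small $\rho$), even though the inequality must hold for all $E \in (0,\infty)$. The fix is to use that we count only nonzero points: if the count is positive there is a lattice point of norm at most $\rho$ other than the origin, forcing $\rho \geq 1$, whence $\rho + \sqrt d/2 \leq (1+\sqrt d/2)\rho$; if the count is zero the bound is trivial. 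This gives $\#\{k\neq 0:\lambda_k\leq E\}\leq \omega_d(1+\sqrt d/2)^d\rho^d = \omega_d(1+\sqrt d/2)^d 4^{-d}\Lambda^d E^{d/2}$. Dividing by $N = \Lambda^d$ makes the $\Lambda$-dependence cancel exactly, yielding $\sigma_v((0,E])\leq V(d)^{-1}E^{d/2}$ with $V(d) = 4^d/\big(\omega_d(1+\sqrt d/2)^d\big)$, which depends only on $d$, as required.
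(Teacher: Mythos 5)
Your proof is correct and takes essentially the same route as the paper's: use translation invariance (via the explicit character basis) to identify $\sigma_v$ with $N^{-1}\sum_k \delta_{\lambda_k}$, lower-bound the eigenvalues by a quadratic form in the symmetric frequency representatives, and reduce the bound to counting nonzero lattice points in a Euclidean ball of radius proportional to $\Lambda\sqrt{E}$, with the $\Lambda^d$ factors cancelling. The only differences are cosmetic and, if anything, in your favor: you make the Gauss-type count fully explicit (your small-radius case distinction is exactly what the paper's crude bound $1+C(d)R^d$ encodes, the $1$ accounting for the origin), and you use the correct combinatorial-Laplacian eigenvalues $4\sin^2(\pi k_j/\Lambda)$ where the paper quotes the normalized-Laplacian values $1-\cos(2\pi i/\Lambda)$ from Chung, a harmless constant-factor discrepancy absorbed into $V(d)$ either way.
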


This result is proved in Appendix \ref{sec:proof-dim-torus}. Similar results were proved for supercritical percolation bonds in \cite{mathieu2004isoperimetry} and for the random geometric graphs in \cite{avrachenkov2019eigenvalues}.

When the graph is large, the probability of sampling a given edge decays to $0$. It is natural to define a rescaled time $t = n/M$ so that the expected number of times a given edge is sampled during a unit time interval does not depend on $M$ (and is equal to $1$). 

\begin{coro}[of Theorem \ref{thm:main-result-upper-bound}]
	\label{coro:averaging}
	Assume that $G$ is of spectral dimension $d$ with constant $V$, and denote $\delta_{\max}$ the maximal degree of the nodes in the graph.  Then, for all $t=n/M \geq 2$,  
	\begin{enumerate}[noitemsep, nolistsep]
		\item\label{concl:gossip-norm} \hspace{1.5cm}$\displaystyle
		\E\left[\sum_{v\in \cV}\left(x_{Mt}(v)-\frac{1}{N}\right)^2\right] \leq D(d,V,\delta_{\max}) \frac{\log t}{t^{d/2}} \, ,$
		\item\label{concl:gossip-energy} \hspace{0.2cm}
		$\displaystyle \min_{0\leq s\leq t} \E\left[\frac{1}{2} \sum_{\{v,w\}\in \cE} \left(x_{Ms}(v) - x_{Ms}(w)\right)^2\right] \leq D'(d,V,\delta_{\max}) \frac{\log t}{t^{d/2+1}} \, ,$
	\end{enumerate}
	where $\displaystyle D(d,V,\delta_{\max}) = \frac{2}{\log 2}d^{d/2+1}V^{-1}\delta_{\max}$ and $\displaystyle D'(d,V,\delta_{\max}) = \frac{2^{d/2+2}}{\log 2} d^{d/2+1} V^{-1} \delta_{\max}$.
\end{coro}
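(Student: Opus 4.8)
The plan is to recognise the averaging process~\eqref{eq:updates-averaging} as a particular instance of SGD~\eqref{eq:SGD_iteration} and then to apply Theorem~\ref{thm:main-result-upper-bound}. First I would rewrite a single averaging step as
\[
x_n = x_{n-1} - \tfrac{1}{2}\langle x_{n-1}, e_{v_n}-e_{w_n}\rangle\,(e_{v_n}-e_{w_n}) \, ,
\]
which is exactly~\eqref{eq:SGD_iteration} with feature $X_n = e_{v_n}-e_{w_n}$ drawn uniformly from $\cE$, step-size $\gamma=1/2$, and target $\theta_* = \bfone/N - e_{v_\star}$, working in the Hilbert space $\cH = \{x\in\R^{\cV} : \langle x,\bfone\rangle = 0\}$ on which the Laplacian $L$ is invertible (the graph being connected). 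Because the averaging process conserves the mean and $X_n\perp\bfone$, the sequences $x_n-\bfone/N$ and $\theta_n-\theta_*$ obey the same autonomous recursion $y_n = y_{n-1}-\tfrac12\langle y_{n-1},X_n\rangle X_n$ with the same initial value $e_{v_\star}-\bfone/N$ (here the SGD initialization $\theta_0=0$ forces the above choice of $\theta_*$); hence $\theta_n-\theta_* = x_n-\bfone/N$ for all $n$. Under this dictionary the covariance is $\Sigma = \E[X\otimes X] = L/M$, the reconstruction error is $\E[\sum_{v}(x_n(v)-1/N)^2]$, and
\[
\cR(\theta_n) = \tfrac{1}{2M}\langle x_n-\tfrac{\bfone}{N},\,L(x_n-\tfrac{\bfone}{N})\rangle = \tfrac{1}{M}\cdot\tfrac{1}{2}\sum_{\{v,w\}\in\cE}(x_n(v)-x_n(w))^2
\]
is the energy divided by $M$. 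Since $\Vert X\Vert^2 = 2$, one has $R_0 = 2$ and the choice $\gamma = 1/R_0$ is admissible.

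Next I would verify assumptions~\ref{ass:reg-opt}--\ref{ass:reg-feature} and make the constants explicit through the spectral dimension. Both the source quantity $\Vert\Sigma^{-\alphalow/2}\theta_*\Vert^2 = M^{\alphalow}\langle e_{v_\star}-\tfrac{\bfone}{N},\,L^{-\alphalow}(e_{v_\star}-\tfrac{\bfone}{N})\rangle$ and the capacity constant $R_\alphalow = M^{\alphalow}\sup_{\{v,w\}\in\cE}\langle e_v-e_w,\,L^{-\alphalow}(e_v-e_w)\rangle$ reduce, through the spectral decomposition of $L$, to integrals $\int_{(0,\infty)}\lambda^{-\alphalow}\,\diff\sigma_v(\lambda)$ of the spectral measures at the vertices (using $\langle a-b, L^{-\alphalow}(a-b)\rangle\leq 2\langle a,L^{-\alphalow}a\rangle + 2\langle b,L^{-\alphalow}b\rangle$ for the edge term). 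Integrating by parts and inserting $\sigma_v((0,E])\leq V^{-1}E^{d/2}$ bounds each such integral by $V^{-1}\lambda_{\max}^{\,d/2-\alphalow}\,\tfrac{d/2}{d/2-\alphalow}$: it is finite exactly when $\alphalow < d/2$ and diverges like $1/(d/2-\alphalow)$ as $\alphalow\uparrow d/2$. The spectrum lies in $(0,\lambda_{\max}]$ with $\lambda_{\max} = \Vert L\Vert\leq 2\delta_{\max}$, which is the origin of the $\delta_{\max}$ dependence.

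I would then plug these bounds into Theorem~\ref{thm:main-result-upper-bound} and pass to the rescaled time $n = Mt$. The key cancellation is that the factor $M^{\alphalow}$ carried by $\Sigma^{-\alphalow}$ is killed by $n^{\alphalow} = (Mt)^{\alphalow}$, so that, writing $\eta := d/2-\alphalow$, the reconstruction bound takes the form $\mathrm{const}(d)\,V^{-1}\,\lambda_{\max}^{\,\eta}\,\eta^{-1}\,t^{-(d/2-\eta)}$, and the minimal-energy bound takes the analogous form $t^{-(d/2+1-\eta)}$ coming from conclusion~\ref{concl:negative} of the theorem (with an extra factor $2^{d/2}$). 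Since $\alphalow$ is free subject only to $0\leq\alphalow<d/2$, the last step is to optimise over $\eta$: choosing $\eta = (\log 2)/\log t$, which obeys $\eta\leq 1$ precisely when $t\geq 2$, gives $t^{\eta} = 2$, $\eta^{-1} = \log t/\log 2$, and $\lambda_{\max}^{\,\eta}\leq\lambda_{\max}\leq 2\delta_{\max}$ (using $\lambda_{\max}\geq 1$). Together with $(2\alphalow)^{\alphalow}\leq d^{d/2}$ this converts the $1/(d/2-\alphalow)$ blow-up into the announced $\log t$ factor and reproduces the constants $D$ and $D'$.

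The hard part is the second step: turning the purely spectral hypothesis $\sigma_v((0,E])\leq V^{-1}E^{d/2}$ into source and capacity estimates with the correct behaviour as $\alphalow\uparrow d/2$, and then carrying out the near-critical optimisation. Obtaining the clean constants — in particular the first power of $\delta_{\max}$ rather than $\delta_{\max}^{d/2}$, and the factor $2/\log 2$ — relies on keeping $\eta\leq 1$ so that $\lambda_{\max}^{\,\eta}\leq\lambda_{\max}$, which is exactly why the statement requires $t\geq 2$; everything else is bookkeeping of multiplicative constants and the routine verification that the strong boundedness assumptions (rather than those of Remark~\ref{rmk:weak-assumptions}) apply here.
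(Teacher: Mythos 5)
Your proposal is correct and takes essentially the same route as the paper's proof in Appendix~\ref{sec:proof-averaging}: the same identification of the averaging update with SGD via $X_n = e_{v_n}-e_{w_n}$, $\gamma=1/2$, $\Sigma = L/M$, the same shift handling the initialization $x_0 = e_{v_\star}$, the same layer-cake bound turning $\sigma_v((0,E])\leq V^{-1}E^{d/2}$ into source/capacity constants of size $V^{-1}\lambda_{\max}^{d/2-\alphalow}\frac{d/2}{d/2-\alphalow}$, and the same near-critical choice $\alphalow = d/2 - (\log 2)/\log t$ exploiting $t\geq 2$. The only (harmless) deviation is that you bound $\lambda_{\max}\leq 2\delta_{\max}$ where the paper asserts $\lambda_{\max}\leq \delta_{\max}$ --- your bound is actually the safe one, and it changes $D$ and $D'$ by at most a factor of $2$.
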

See Appendix \ref{sec:proof-averaging} for the proof. Note that as $G$ is a finite graph, $G$ can be of any spectral dimension $d$ for some potentially large constant $V$. However, for many families of graphs of increasing size, such as the toruses $\T^d_\Lambda$, $\Lambda \geq 1$, the spectral dimension constant $V$ corresponding to the dimension $d$ and the maximum degree $\delta_{\max}$ remain bounded independently of the size of the graph. In that case, the bounds of Corollary~\ref{coro:averaging} are independent of the size of the graph. 

\begin{figure}
	\includegraphics[width = 0.49\textwidth]{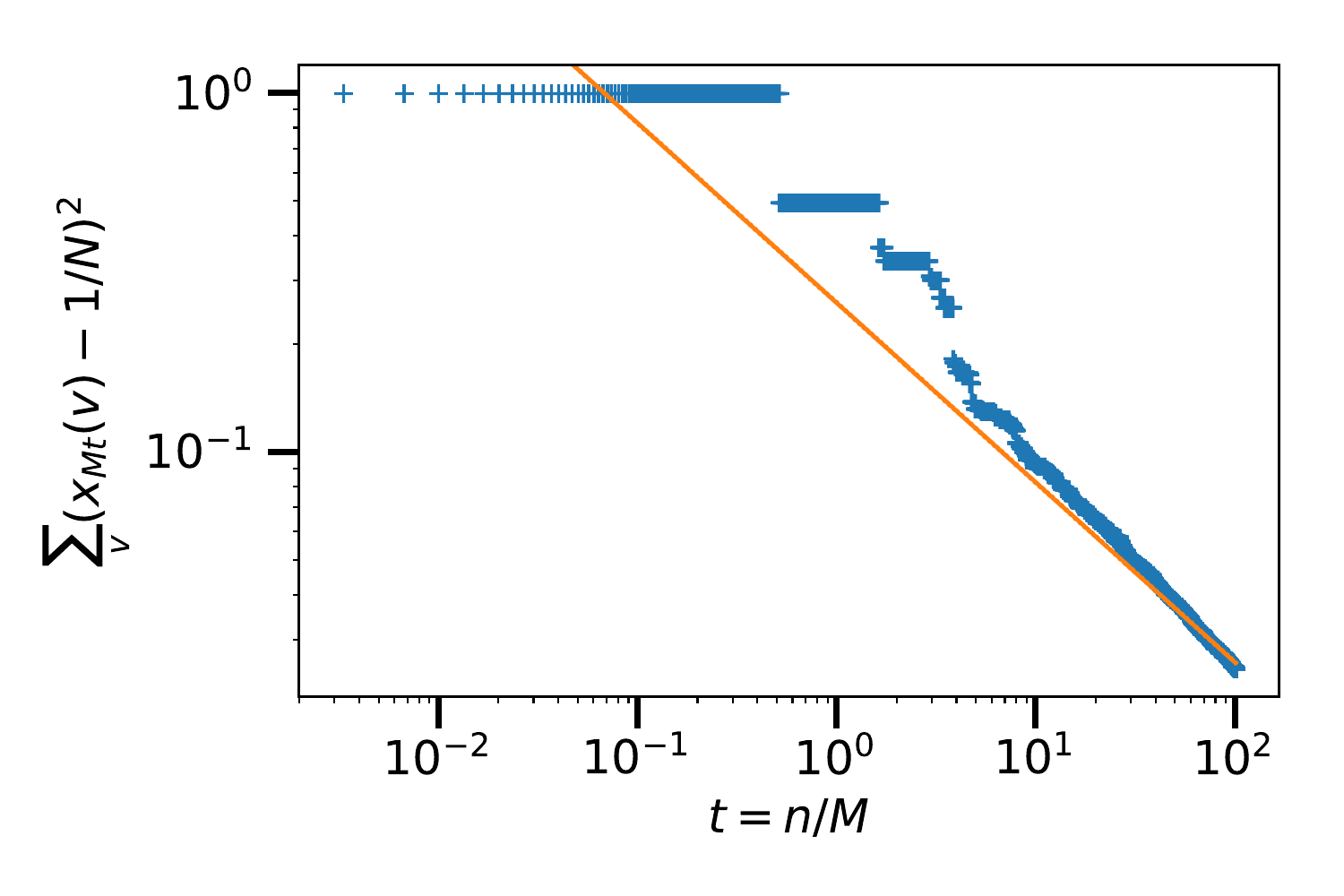}
	\includegraphics[width = 0.49\textwidth]{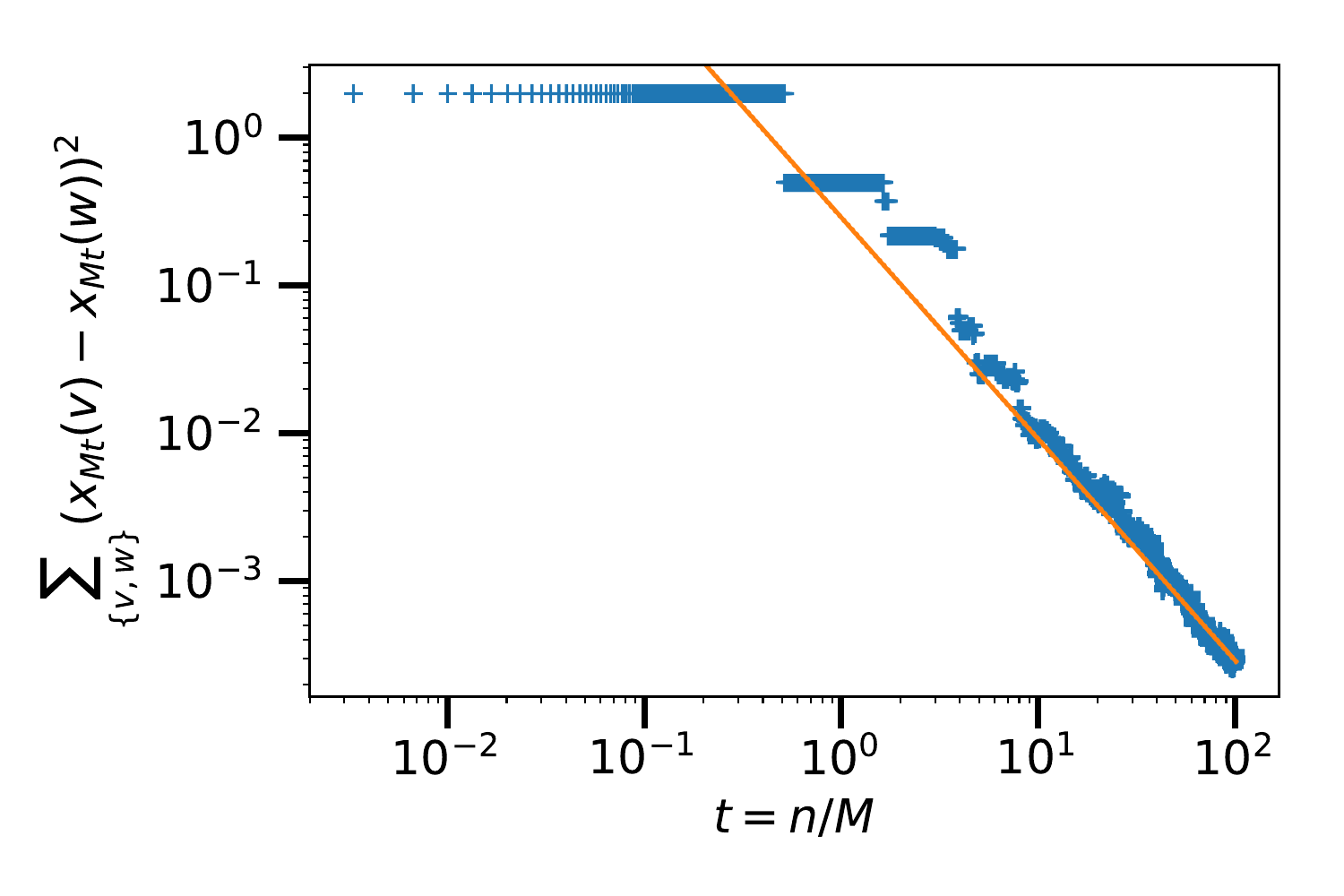}
	\vspace{-0.5cm}
	\includegraphics[width = 0.49\textwidth]{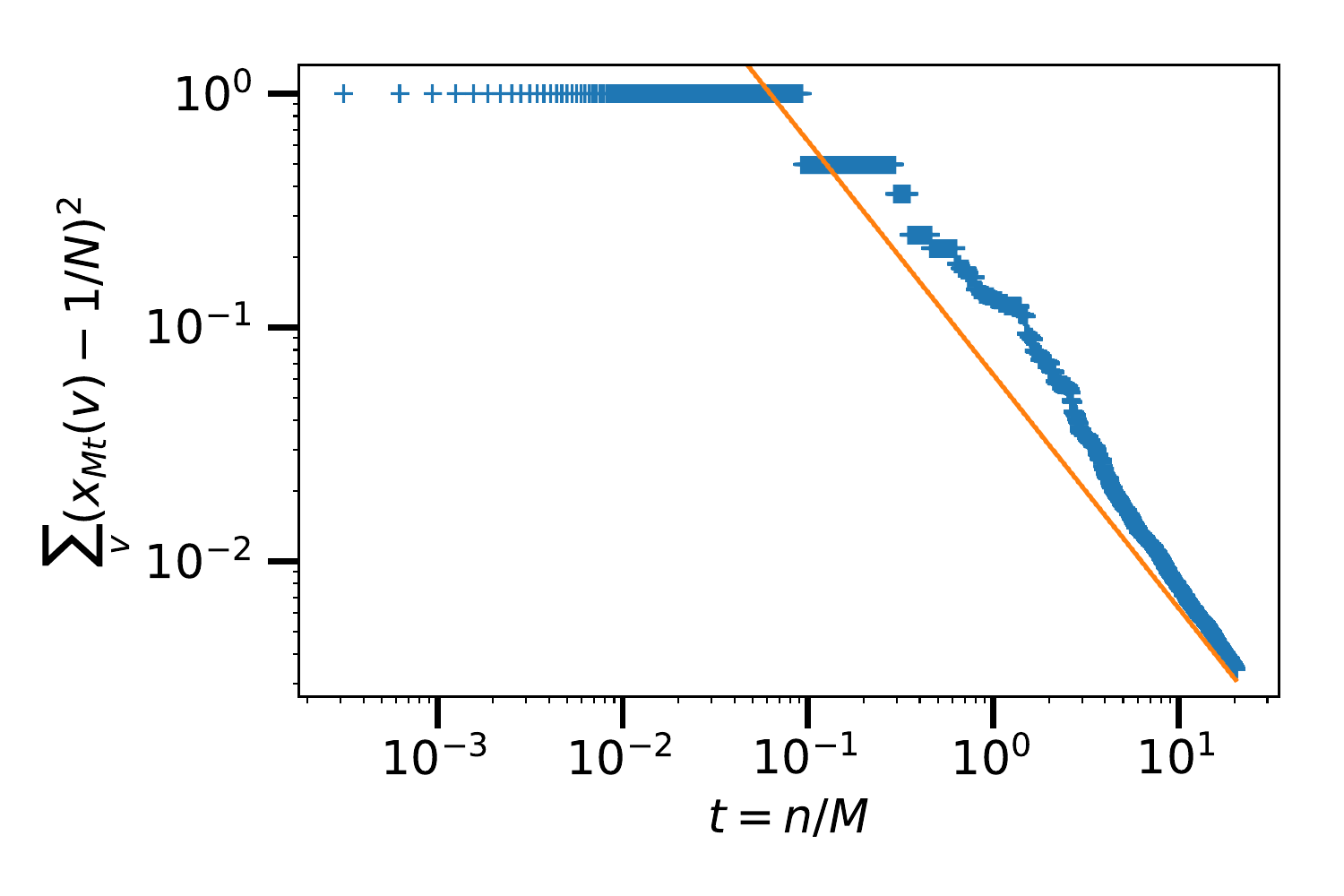}
	\includegraphics[width = 0.49\textwidth]{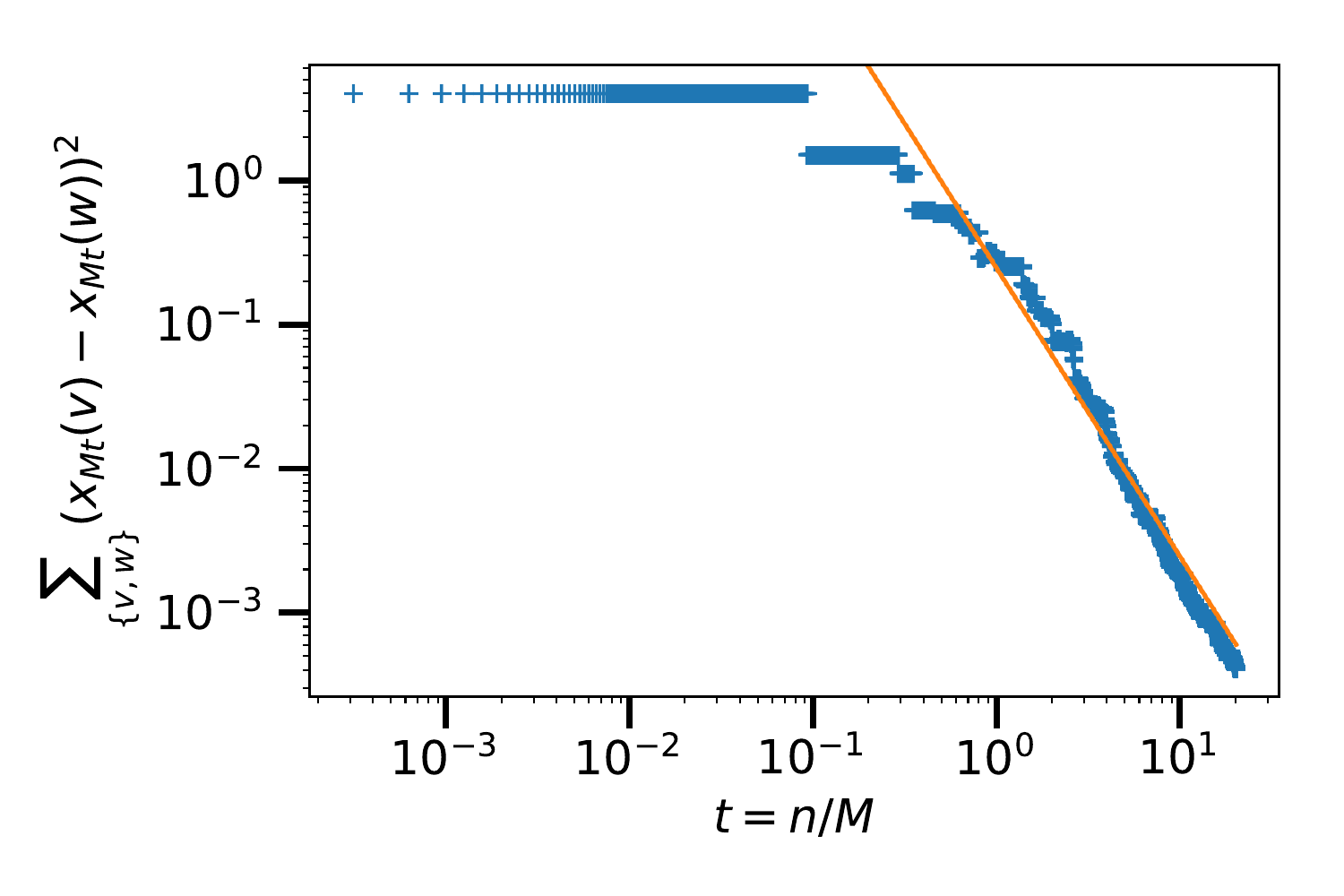}
	\caption{Convergence rates on the circle $\T^1_{300}$ (up) and on the two dimensional torus $\T^2_{40}$ (bottom). The convergence is measured in terms of squared $\ell^2$-distance to $\frac{1}{N}\bfone$ (left) and sum of the squared differences along the edges (right). In orange are the curves of the form $C/n^{d/2}$ and $C'/n^{d/2+1}$ where $C$ and $C'$ are constants chosen to match best the empirical observations for each plot.}
	\label{fig:simus-gossip}
\end{figure}

These bounds should be compared to the known exponential convergence bounds of \cite{aldous2012lecture} or \cite{shah2009gossip}: they are of the form $O(\exp(-\gamma t))$ where $\gamma$ is the spectral gap of the Laplacian of the graph, the distance between the two minimal eigenvalues of the Laplacian. Although asymptotically faster, these bounds are only relevant on the typical scale $t \gtrsim 1/\gamma$. In many graphs of interests, the spectral gap $\gamma$ vanishes as the size of the graph increases; for instance, when $G = \T^d_\Lambda$, $\gamma$ is of the order of $1/\Lambda^2$. As a consequence, for large graphs and moderate number of iterations, the spectral dimension based bounds describe the observed behavior where spectral gap based bounds do not apply. Indeed, in Figure \ref{fig:simus-gossip}, simulations on a large circle $\T^1_{300}$ and on a large torus $\T^2_{40}$ display polynomial decay rates, with polynomial exponents coinciding with those of the corresponding bounds of Corollary \ref{coro:averaging}. Note that, if pushed on a longer time scale, the simulations would have shown the exponential convergence due to finite graph effects. This incapacity of spectral gap to describe the transient behavior had already motivated the authors of \cite{berthier2020accelerated} to use the spectral dimension to describe the behavior and to design accelerations of the gossip algorithm. However, the analyses of this paper control only the expected process $\E[x_n]$: the random sampling of the edges is averaged out.

While the polynomial exponents are sharp, we expect the logarithmic factors to be an artifact of the method of proof.

In the case $d=0$ and $V=1$, where no assumption on the structure of the graph is made, the fact that the minimal past energy is $O(n^{-1})$ (neglecting the logarithmic factor) has been noticed by Aldous in \cite[Proposition 4]{aldous2012lecture}. Aldous leaves as an open problem whether one can prove a bound without taking a minimum; this is a special case of our Remark \ref{rmk:pointwise-decay-risk}. 

\subsection{Linear regression with Gaussian features}
\label{sec:gaussian}

In the setting of Section \ref{sec:main-result}, we assume $X$ to be centered Gaussian process of covariance $\Sigma$ where $\Sigma$ is a bounded symmetric semidefinite operator. 
As $X$ is not bounded a.s., we need to use the weaker set of assumptions given in Remark \ref{rmk:weak-assumptions}. We thus need to compute $R_0$ such that $\E\left[\Vert X \Vert^2 X \otimes X\right] \preccurlyeq R_0 \Sigma$ and $\alpha, R_\alpha$ such that $\E\left[\left\langle X , \Sigma^{-\alpha} X\right\rangle  X \otimes X\right] \preccurlyeq R_\alpha \Sigma$. We show here that these conditions are in fact simple trace conditions on $\Sigma$, sometimes called \emph{capacity conditions} \cite{pillaud2018statistical}. 

\begin{lem}
	\label{lem:gaussian-R}
	If $X \sim \cN(0,\Sigma)$ and $A$ is a bounded symmetric operator such that $\Tr(\Sigma A) < \infty$, 
	\begin{align*}
		\E\left[\left\langle X, AX \right\rangle X \otimes X \right] &= 2\Sigma A \Sigma + \Tr(\Sigma A) \Sigma 
		\preccurlyeq \left(2\Vert \Sigma^{1/2} A \Sigma^{1/2} \Vert_{\cH \rightarrow \cH} + \Tr(\Sigma A)\right) \Sigma \, .
	\end{align*}
\end{lem}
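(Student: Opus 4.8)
The plan is to prove the operator identity by testing both sides against arbitrary vectors and reducing to a fourth-order Gaussian moment. For $u, w \in \cH$, since $\langle u, (x \otimes x) w \rangle = \langle x, u \rangle \langle x, w \rangle$, the operator $\E[\langle X, AX \rangle\, X \otimes X]$ is determined by its bilinear form $\langle u, \E[\langle X, AX \rangle\, X \otimes X]\, w \rangle = \E[\langle X, u \rangle \langle X, w \rangle \langle X, AX \rangle]$. Thus it suffices to evaluate this scalar fourth moment of the Gaussian $X$, for which I would invoke Isserlis'/Wick's theorem. To make the bookkeeping transparent I would diagonalize $\Sigma$ in an orthonormal eigenbasis $(e_i)$ with eigenvalues $(\lambda_i) \geq 0$ and represent $X = \sum_i \sqrt{\lambda_i}\, g_i\, e_i$ with $(g_i)$ i.i.d. standard normal.

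Writing $A_{ij} = \langle e_i, A e_j \rangle$, $u_i = \langle e_i, u \rangle$ and $w_i = \langle e_i, w \rangle$, the moment expands as $\sum_{i,j,k,l} \sqrt{\lambda_i \lambda_j \lambda_k \lambda_l}\, A_{ij} u_k w_l\, \E[g_i g_j g_k g_l]$, and the Gaussian fourth moment is $\E[g_i g_j g_k g_l] = \delta_{ij}\delta_{kl} + \delta_{ik}\delta_{jl} + \delta_{il}\delta_{jk}$. The pairing $\delta_{ij}\delta_{kl}$ factors as $(\sum_i \lambda_i A_{ii})(\sum_k \lambda_k u_k w_k) = \Tr(\Sigma A)\, \langle u, \Sigma w \rangle$; the pairing $\delta_{ik}\delta_{jl}$ yields $\sum_{i,j} (\Sigma u)_i A_{ij} (\Sigma w)_j = \langle u, \Sigma A \Sigma w \rangle$; and $\delta_{il}\delta_{jk}$ yields $\langle \Sigma w, A \Sigma u \rangle$, which equals $\langle u, \Sigma A \Sigma w \rangle$ by symmetry of $A$ and of $\Sigma$. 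Collecting the three contributions gives $\langle u, (2\Sigma A \Sigma + \Tr(\Sigma A)\Sigma)\, w \rangle$ for all $u,w$, which is the identity. For the inequality it remains to show $\Sigma A \Sigma \preccurlyeq \Vert \Sigma^{1/2} A \Sigma^{1/2} \Vert_{\cH \rightarrow \cH}\, \Sigma$: substituting $v = \Sigma^{1/2} u$ gives $\langle u, \Sigma A \Sigma u \rangle = \langle v, \Sigma^{1/2} A \Sigma^{1/2} v \rangle \leq \Vert \Sigma^{1/2} A \Sigma^{1/2} \Vert_{\cH \rightarrow \cH}\, \Vert v \Vert^2 = \Vert \Sigma^{1/2} A \Sigma^{1/2} \Vert_{\cH \rightarrow \cH}\, \langle u, \Sigma u \rangle$, using that $\Sigma^{1/2} A \Sigma^{1/2}$ is bounded self-adjoint. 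Doubling this and adding $\Tr(\Sigma A)\Sigma$ to both sides yields the stated bound.

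The combinatorial heart of the argument is elementary, so I expect the main obstacle to be the rigorous justification in the (possibly) infinite-dimensional, non-trace-class setting: when $\Sigma$ is not trace-class, $X$ does not live in $\cH$ as a genuine random element, and both $\langle X, AX \rangle$ and the interchange of expectation with the infinite sums above must be treated with care. I would handle this by truncation: let $P_N$ be the orthogonal projection onto the first $N$ eigenvectors, apply the exact finite-dimensional Isserlis identity to $X_N = P_N X$ and $A_N = P_N A P_N$, and then pass to the limit $N \to \infty$. The hypothesis $\Tr(\Sigma A) < \infty$ is what supplies the integrability needed to interpret $\langle X, AX \rangle$ (as an $L^1$, in fact $L^2$, limit with mean $\Tr(\Sigma A)$) and to dominate the convergence of each of the three pairing sums to $\Tr(\Sigma A)\langle u,\Sigma w\rangle$ and $\langle u, \Sigma A \Sigma w\rangle$; the finiteness of $\langle u, \Sigma w\rangle$, $\langle u,\Sigma A \Sigma w\rangle$ and $\Tr(\Sigma A)$ guarantees the limits are the claimed expressions. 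This limiting step, rather than the moment computation itself, is where the remaining technical work lies.
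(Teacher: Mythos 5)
Your proposal is correct and follows essentially the same route as the paper's proof: diagonalize $\Sigma$, write $X = \sum_i \lambda_i^{1/2} X_i e_i$ with i.i.d.\ standard Gaussians, and evaluate the fourth moment $\E[X_iX_jX_kX_l]$ --- your use of Wick's theorem is just the systematic form of the paper's case analysis ($i\neq j$ versus $i=j$), and the three pairings reproduce exactly the terms $2\Sigma A \Sigma + \Tr(\Sigma A)\Sigma$. You additionally spell out two points the paper leaves implicit --- the operator-norm step $\Sigma A \Sigma \preccurlyeq \Vert \Sigma^{1/2}A\Sigma^{1/2}\Vert_{\cH\rightarrow\cH}\,\Sigma$ via $v = \Sigma^{1/2}u$, and the truncation argument justifying the interchange of expectation and sums in infinite dimension --- both of which are sound and only add rigor to the same argument.
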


\begin{proof}
	Diagonalize $\Sigma = \sum_{i\geq 1} \lambda_i e_i \otimes e_i$. Then there exists independent standard Gaussian random variables $X_i, i\geq 0$ such that $X = \sum_i \lambda_i^{1/2}X_i e_i$. 
	
	Let $i,j \geq 1$. 
	\begin{align*}
		\left\langle e_i, \E\left[\left\langle X,AX\right\rangle X\otimes X\right]e_j \right\rangle &= \E\left[\left\langle X, AX \right\rangle \left\langle e_i, X \otimes X e_j \right\rangle\right] = \E\left[\left\langle X, AX \right\rangle \lambda_i^{1/2}X_i \lambda_j^{1/2}X_j\right] \\
		&= \lambda_i^{1/2} \lambda_j^{1/2} \sum_{k,l} A_{k,l} \lambda_k^{1/2} \lambda_l^{1/2} \E\left[X_iX_jX_kX_l\right] \, .
	\end{align*}
	As $X_i, i\geq 1$ are centered independent random variables, the quantity $\E\left[X_iX_jX_kX_l\right]$ is $0$ in many cases. More precisely, 
	\begin{itemize}
		\item if $i \neq j$, the general term of the sum in non-zero only when $k = i$ and $l=j$ or $k=j$ and $l=i$. This gives
		\begin{equation*}
			\left\langle e_i, \E\left[\left\langle X,AX\right\rangle X\otimes X\right]e_j \right\rangle = 2 A_{i,j}\lambda_i\lambda_j \, .
		\end{equation*}
		\item if $i = j$, the general term of the sum is non-zero only when $k=l$. This gives
		\begin{align*}
			\left\langle e_i, \E\left[\left\langle X,AX\right\rangle X\otimes X\right]e_i \right\rangle &= \lambda_i \sum_k A_{k,k}\lambda_k \E\left[X_i^2X_k^2\right] = \lambda_i \sum_{k\neq i} A_{k,k}\lambda_k + 3 \lambda_i^2 A_{i,i} \\&= \lambda_i \sum_{k} A_{k,k}\lambda_k + 2 \lambda_i^2 A_{i,i} \, .
		\end{align*}
	\end{itemize}
	In both cases, 
	\begin{equation*}
		\left\langle e_i, \E\left[\left\langle X,AX\right\rangle X\otimes X\right]e_j \right\rangle = 2 \lambda_i\lambda_j A_{i,j} + \left(\sum_k A_{k,k}\lambda_k\right) \lambda_i\bfone_{i=j} \, .
	\end{equation*}
	Note that 
	\begin{equation*}
		\Tr(A\Sigma) = \sum_k \left\langle e_k, \Sigma A e_k\right\rangle = \sum_k \lambda_k A_{k,k} \, .
	\end{equation*}
	Thus we get 
	\begin{align*}
		\left\langle e_i, \E\left[\left\langle X,AX\right\rangle X\otimes X\right]e_j \right\rangle &= 2 \lambda_i\lambda_j A_{i,j} + \Tr(A\Sigma) \lambda_i \bfone_{i=j} \\
		&= 2\left\langle e_i, \Sigma A \Sigma e_j \right\rangle + \Tr(A\Sigma) \left\langle e_i, \Sigma e_j \right\rangle \\
		&= \left\langle e_i, \left[2 \Sigma A \Sigma + \Tr(\Sigma A)\Sigma\right]e_j \right\rangle \, . 
	\end{align*}
\end{proof}

From this lemma with $A = \Id$, we compute $R_0 = 2\Vert \Sigma \Vert_{\cH \rightarrow \cH} + \Tr(\Sigma)$, and with $A = \Sigma^{-\alpha}$, we compute $R_\alpha = 2\Vert \Sigma\Vert_{\cH \rightarrow \cH} ^{1-\alpha} + \Tr(\Sigma^{1-\alpha})$. Thus in the Gaussian case, the condition of (weak) regularity of the features is given by $\Tr(\Sigma^{1-\alpha})<\infty$.

\paragraph{Simulations.} We present simulations in finite but large dimension $d = 10^5$, and we check that dimension-independent bounds describe the observed behavior. We artificially generate regression problems with different regularities by varying the decay of the eigenvalues of the covariance $\Sigma$ and varying the decay of the coefficients of $\theta_*$. 

Choose an orthonormal basis $e_1, \dots, e_d$ of $\cH$. We define $\Sigma = \sum_{i= 1}^d i^{-\beta}e_i \otimes e_i$ for some $\beta \geq 1$ and $\theta_* = \sum_{i=1}^d i^{-\delta}e_i$ for some $\delta \geq 1/2$. We now check the condition on $\alpha$ such that the assumptions \ref{ass:reg-opt} and \ref{ass:reg-feature} are satisfied. 
\begin{enumerate}[label = (\alph*), noitemsep, nolistsep]
	\item $\langle \theta_*, \Sigma^{-\alpha} \theta_* \rangle = \sum_{i=1}^d \langle \theta_* , e_i \rangle^2 i^{\beta\alpha} = \sum_{i= 1}^d i^{-2\delta + \alpha\beta}$, which is bounded independently of the dimension $d$ if and only if $\sum_{i= 1}^\infty i^{-2\delta + \alpha\beta} < \infty \Leftrightarrow -2\delta + \alpha\beta < -1 \Leftrightarrow \alpha < \frac{2\delta-1}{\beta}$. 
	\item $\Tr(\Sigma^{1-\alpha}) = \sum_{i=1}^d i^{-\beta(1-\alpha)}$, which is bounded independently of the dimension $d$ if and only if 
	$\sum_{i=1}^\infty i^{-\beta(1-\alpha)} < \infty \Leftrightarrow -\beta(1-\alpha) < -1 \Leftrightarrow \alpha < 1-1/\beta$. 
\end{enumerate} 
Thus the corollary gives dimension-independent convergence rates for all $\alpha < \alpha_* = \min\left(1-\frac{1}{\beta}, \frac{2\delta-1}{\beta}\right)$. 

\begin{figure}
	\includegraphics[width=0.5\textwidth]{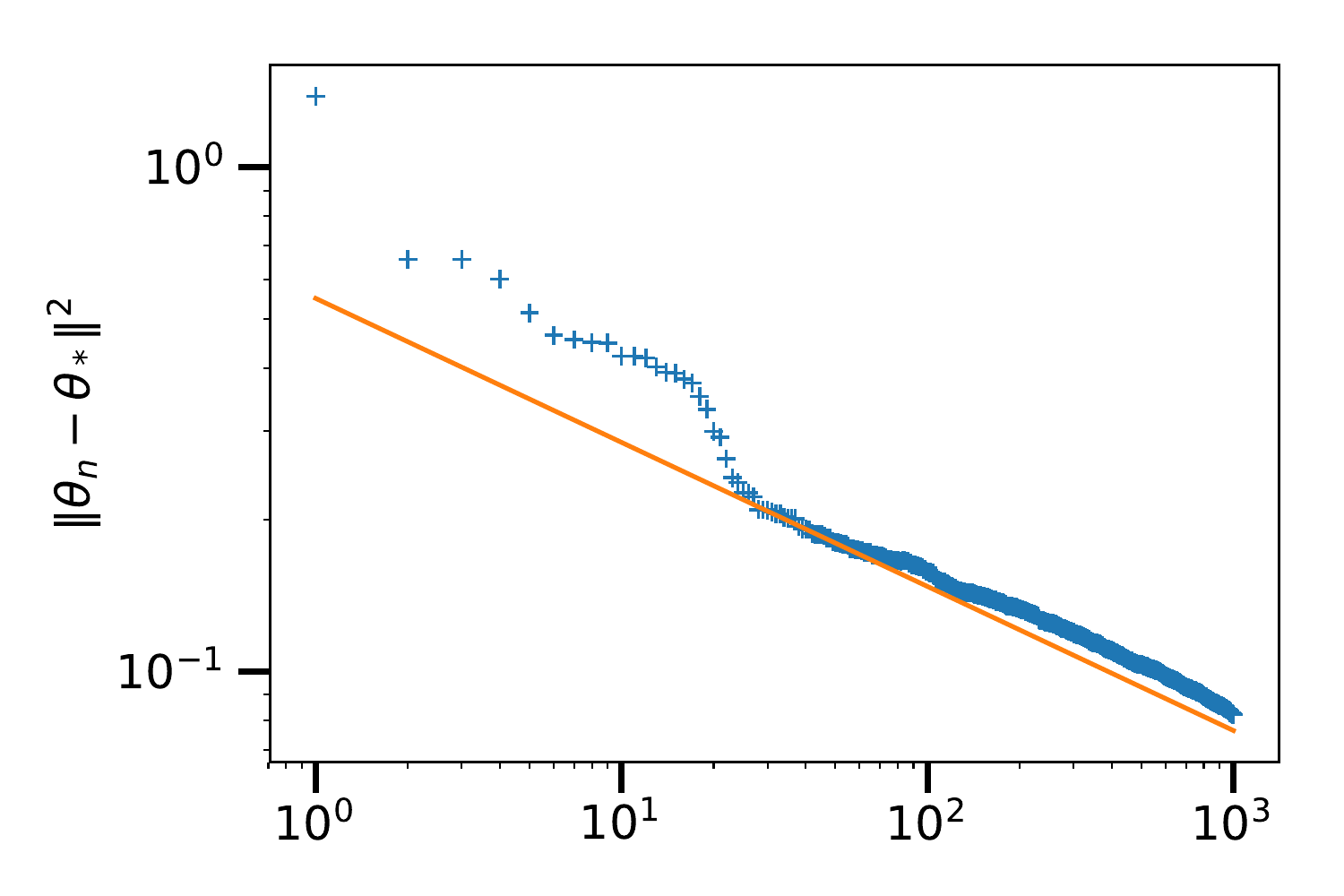}
	\includegraphics[width=0.5\textwidth]{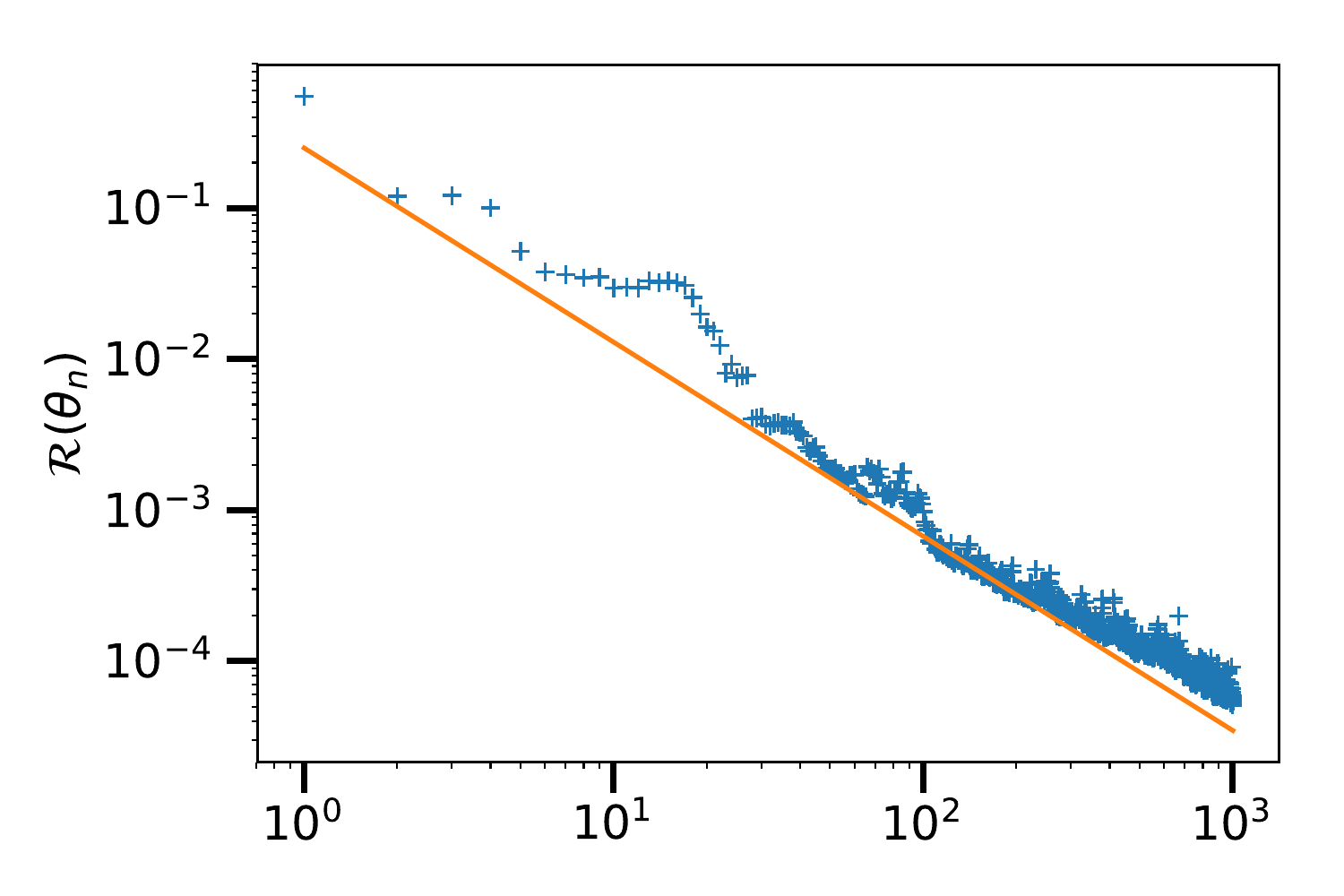}	\includegraphics[width=0.5\textwidth]{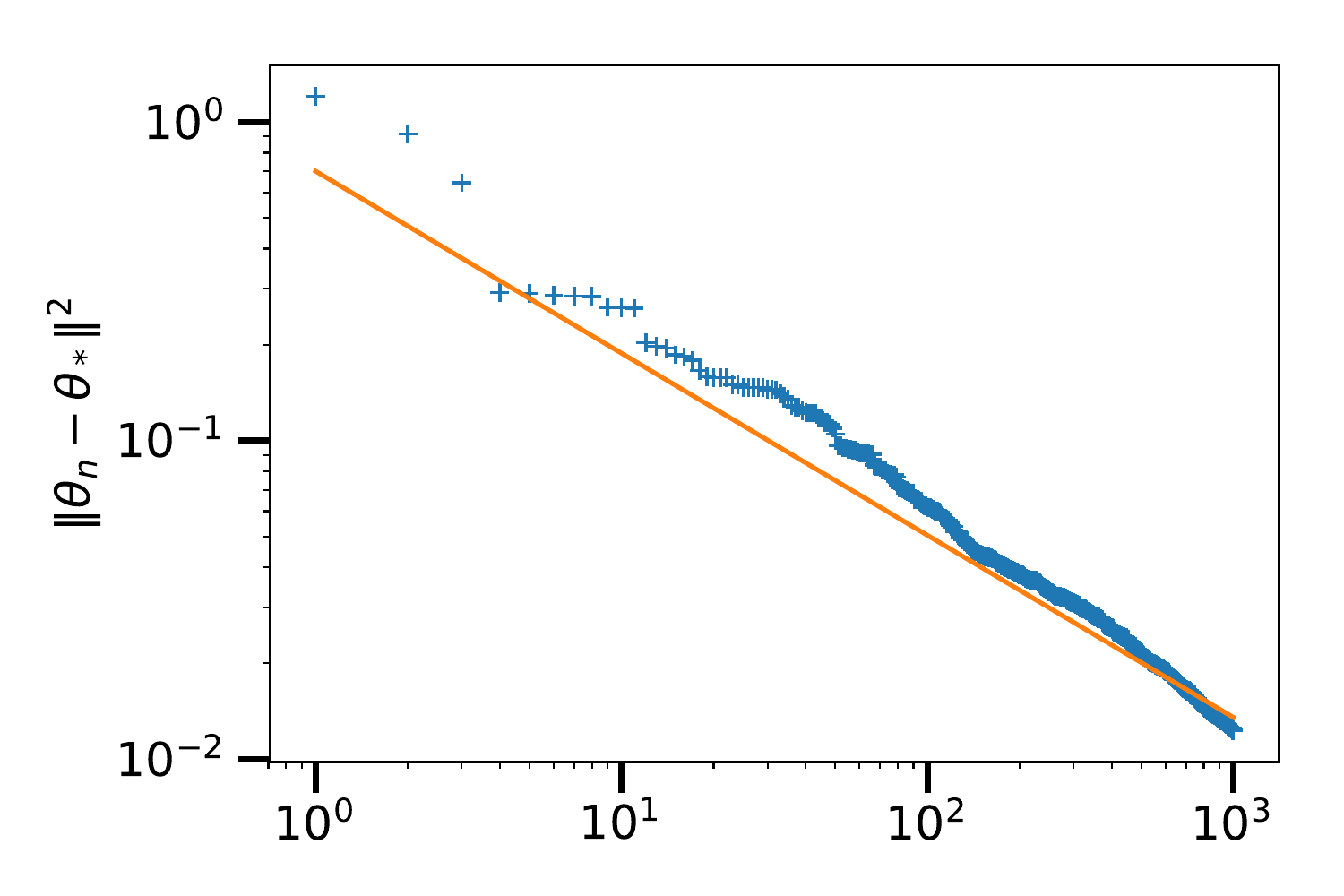}
	\includegraphics[width=0.5\textwidth]{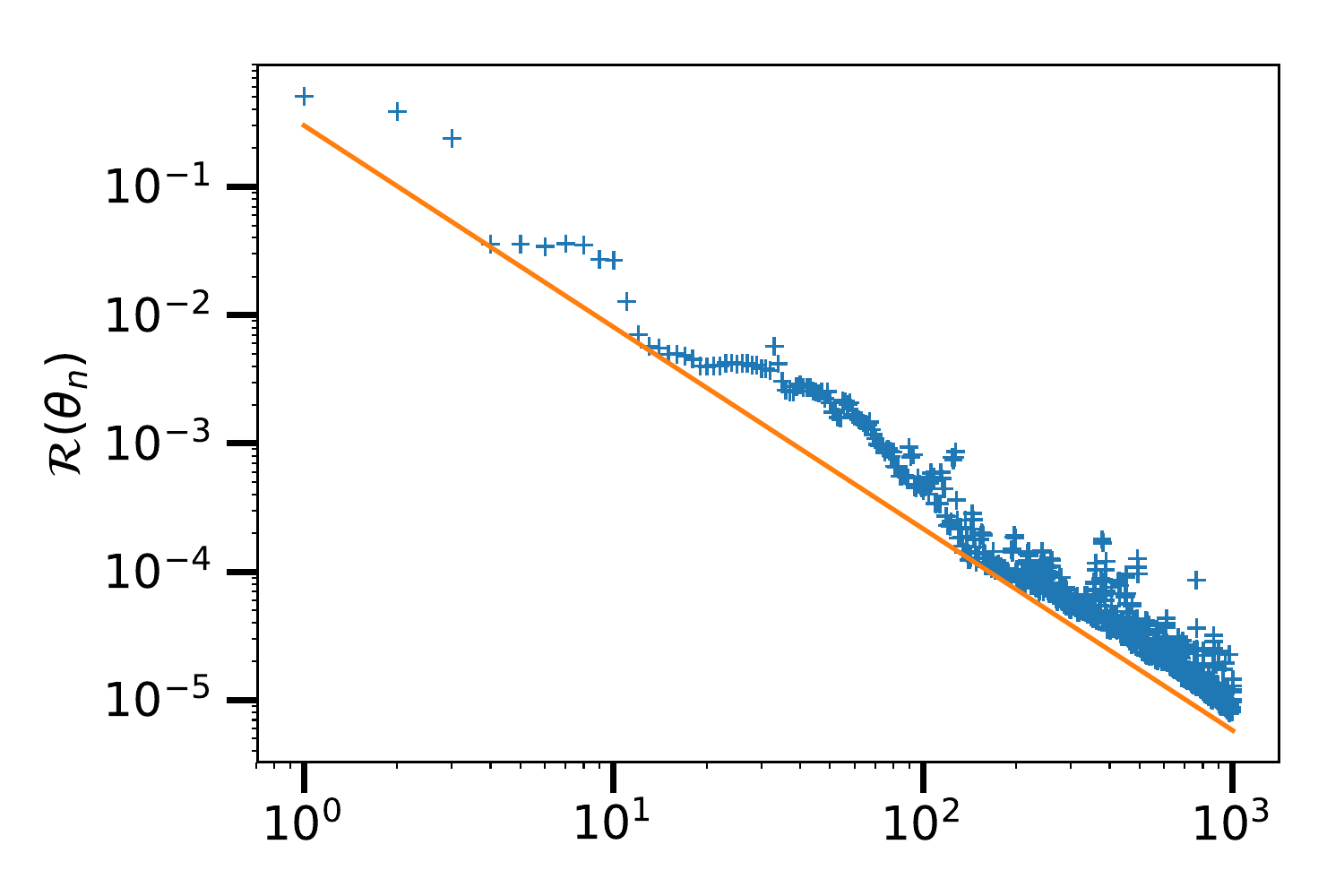}
	\caption{In blue +, evolution of $\Vert \theta_n - \theta_*\Vert^2$ (left) and $\cR(\theta_n)$ (right) as functions of $n$, for the problems with parameters $\beta = 1.4, \delta = 1.2$ (up) and $\beta = 3.5, \delta = 1.5$. The orange lines represent the curves $D/n^{\alpha_*}$ (left) and $D'/n^{\alpha_*+1}$ (right).}
	\label{fig:gaussian}
\end{figure}

In Figure \ref{fig:gaussian}, we show the evolution of $\Vert\theta_n-\theta_*\Vert^2$ and $\cR(\theta_n)$ for two realizations of SGD. We chose the stepsize $\gamma = 1/R_0 = 1/(2\Vert \Sigma \Vert_{\cH \rightarrow \cH} + \Tr(\Sigma))$. The two realizations represent two possible different regimes:
\begin{itemize}[noitemsep, nolistsep]
	\item In the two upper plots, $\beta = 1.4, \delta = 1.2$. The irregularity of the feature vectors is the bottleneck for fast convergence. We have $\alpha_* = \min\left(1-\frac{1}{\beta}, \frac{2\delta-1}{\beta}\right) \approx \min(0.29,1) = 0.29$.  
	\item In the two lower plots, $\beta = 3.5, \delta = 1.5$. The irregularity of the optimum is the bottleneck for fast convergence. We have $\alpha_* = \min\left(1-\frac{1}{\beta}, \frac{2\delta-1}{\beta}\right) \approx \min(0.71,0.57) = 0.57$.  
\end{itemize}

We compare with the curves $D/n^{\alpha_*}$ and $D'/n^{\alpha_*+1}$ with hand-tuned constants $D$ and $D'$ to fit best the data for each plot. In both regimes, our theory is sharp in predicting the exponents in the polynomial rates of convergence of $\Vert\theta_n-\theta_*\Vert^2$ and $\cR(\theta_n)$. 

\section{Robustness to model mispecification}
\label{sec:robustness}

In this section, we describe how the results of Section \ref{sec:general-theory} are perturbed in the case where a linear relation $Y = \langle \theta_*, X \rangle$ a.s.~does not hold. Following the statistical learning framework, we assume a joint law on $(X,Y)$. We further assume that there exists a minimizer $\theta_* \in \cH$ of the population risk $\cR(\theta)$:
\begin{equation*}
	\theta_* \in \argmin_{\theta\in\cH}\left\{ \cR(\theta) = \frac{1}{2}\E\left[\left(Y-\langle \theta, X \rangle \right)^2\right]\right\} \, .
\end{equation*}
This general framework encapsulates two types of perturbations of the noiseless linear model:
\begin{itemize}
	\item (variance) The output $Y$ can be uncertain given $X$. For instance, under the noisy linear model, $Y = \langle \theta_*, X\rangle + Z$, where $Z$ is centered and independent of $X$. In this case, $\cR(\theta_*) = \E[Z^2] = \E[\var(Y|X)]$.
	\item (bias) Even if $Y$ is deterministic given $X$, this dependence can be non-linear: $Y = \psi(X)$ for some non-linear function $\psi$. Then $\cR(\theta_*)$ is the squared $L^2$ distance of the best linear approximation to $\psi$: $\cR(\theta_*) = \frac{1}{2}\E\left[\left(\psi(X)-\langle \theta_*,X \rangle\right)^2\right]$.
\end{itemize}
In the general framework, the optimal population risk is a combination of both sources
\begin{equation*}
	\cR(\theta_*) = \frac{1}{2} \E\left[\var(Y|X)\right] + \frac{1}{2} \E\left[\left(\E[Y|X] - \langle \theta_*, X \rangle\right)^2\right] \, .
\end{equation*}
Given i.i.d.~realizations $(X_1,Y_1),(X_2,Y_2),\dots$ of $(X,Y)$, the SGD iterates are defined as 
\begin{align}
	\label{eq:gen-sgd-iteration}
	&\theta_0 = 0 \, , &&\theta_n = \theta_{n-1} - \gamma \left(\langle \theta_{n-1}, X_n \rangle - Y_n\right) X_n \, .
\end{align}
Apart from the new definition of $\theta_*$, we repeat the same assumptions as in Section \ref{sec:general-theory}: let $R_0 < \infty$ be such that $\Vert X \Vert^2 \leq R_0$ a.s., denote $\Sigma = \E[X\otimes X]$ and $\varphi_n(\beta) = \E\left[\left\langle \theta_n - \theta_*, \Sigma^{-\beta}\left(\theta_n - \theta_*\right)\right\rangle\right]$.

\begin{thm}
	\label{thm:gen-main-result}
	Under the assumptions of Theorem \ref{thm:main-result-upper-bound}, 
	\begin{equation*}
		\min_{k=0, \dots, n } \E\left[\cR(\theta_k) - \cR(\theta_*)\right] \leq 2 \frac{C'}{n^{\alphalow +1}} + 2 R_0 \gamma \cR(\theta_*) \, , 
	\end{equation*}
	where $C'$ is the same constant as in Theorem \ref{thm:main-result-upper-bound}.
\end{thm}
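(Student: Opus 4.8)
The plan is to reduce everything to the noiseless analysis of Theorem~\ref{thm:main-result-upper-bound} through a bias--variance decomposition of the iterates. The only consequence of optimality I need is the stationarity condition $\nabla\cR(\theta_*)=0$, which reads $\E[\xi X]=0$ with $\xi:=Y-\langle\theta_*,X\rangle$. Expanding the square then gives, for every $\theta$,
\begin{equation*}
\cR(\theta)-\cR(\theta_*) = \tfrac12\langle\theta-\theta_*,\Sigma(\theta-\theta_*)\rangle,
\end{equation*}
since the cross term $\E[\xi\langle\theta-\theta_*,X\rangle]=\langle\theta-\theta_*,\E[\xi X]\rangle$ vanishes. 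Hence $\E[\cR(\theta_n)-\cR(\theta_*)]=\tfrac12\varphi_n(-1)$ for $\eta_n:=\theta_n-\theta_*$, exactly as in the noiseless case; the misspecification enters only through the recursion, which now reads $\eta_n=(\Id-\gamma X_n\otimes X_n)\eta_{n-1}+\gamma\xi_n X_n$, i.e.\ the noiseless recursion plus a conditionally centered additive term $\gamma\xi_n X_n$.

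Next I would split $\eta_n=\eta_n^{\mathrm b}+\eta_n^{\mathrm v}$ along this linear recursion: the bias part $\eta_n^{\mathrm b}=(\Id-\gamma X_n\otimes X_n)\eta_{n-1}^{\mathrm b}$ with $\eta_0^{\mathrm b}=-\theta_*$ carries the noiseless dynamics, and the variance part $\eta_n^{\mathrm v}=(\Id-\gamma X_n\otimes X_n)\eta_{n-1}^{\mathrm v}+\gamma\xi_n X_n$ with $\eta_0^{\mathrm v}=0$ carries the noise, both driven by the same samples. Using $\langle a+b,\Sigma(a+b)\rangle\le 2\langle a,\Sigma a\rangle+2\langle b,\Sigma b\rangle$ gives
\begin{equation*}
\E[\cR(\theta_k)-\cR(\theta_*)]\le \E[\langle\eta_k^{\mathrm b},\Sigma\eta_k^{\mathrm b}\rangle]+\E[\langle\eta_k^{\mathrm v},\Sigma\eta_k^{\mathrm v}\rangle],
\end{equation*}
so $\min_k\E[\cR(\theta_k)-\cR(\theta_*)]\le \min_k\E[\langle\eta_k^{\mathrm b},\Sigma\eta_k^{\mathrm b}\rangle]+\sup_k\E[\langle\eta_k^{\mathrm v},\Sigma\eta_k^{\mathrm v}\rangle]$. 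The shifted process $\eta_k^{\mathrm b}+\theta_*$ is exactly noiseless SGD for the same optimum $\theta_*$, so Theorem~\ref{thm:main-result-upper-bound} applies verbatim and bounds $\min_k\E[\langle\eta_k^{\mathrm b},\Sigma\eta_k^{\mathrm b}\rangle]=2\min_k\E[\cR(\eta_k^{\mathrm b}+\theta_*)]$ by $2C'/n^{\alphalow+1}$, which is the first term of the claim.

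It remains to bound the variance term uniformly in $k$ by $2R_0\gamma\cR(\theta_*)$, and this is where I expect the only genuine difficulty. Studying $C_n:=\E[\eta_n^{\mathrm v}\otimes\eta_n^{\mathrm v}]$ and expanding its recursion produces, besides the congruence term and the injected noise, a cross term $\E[A_n\eta_{n-1}^{\mathrm v}\otimes\xi_nX_n]$; after substituting $A_n=\Id-\gamma X_n\otimes X_n$ this yields a weighted third moment of $(\xi_n,X_n)$ that is \emph{not} killed by $\E[\xi_nX_n]=0$ alone. The key observation that resolves it is that this cross term is \emph{linear} in $\eta_{n-1}^{\mathrm v}$, hence equals the same expression evaluated at $\E[\eta_{n-1}^{\mathrm v}]$; and since $\E[\eta_n^{\mathrm v}]=(\Id-\gamma\Sigma)\E[\eta_{n-1}^{\mathrm v}]$ with $\E[\eta_0^{\mathrm v}]=0$ forces $\E[\eta_n^{\mathrm v}]=0$ for all $n$, the cross term vanishes identically. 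This leaves the clean recursion $C_n=\E[A_nC_{n-1}A_n]+\gamma^2\E[\xi^2\,X\otimes X]$.

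From here the argument is routine. With $h_n=\Tr(C_n)=\E[\Vert\eta_n^{\mathrm v}\Vert^2]$ and $g_n=\Tr(\Sigma C_n)=\E[\langle\eta_n^{\mathrm v},\Sigma\eta_n^{\mathrm v}\rangle]$, the identity $\E[A_n^2]=\Id-2\gamma\Sigma+\gamma^2\E[\Vert X\Vert^2 X\otimes X]$ together with $\E[\Vert X\Vert^2 X\otimes X]\preccurlyeq R_0\Sigma$, $\Tr(\E[\xi^2 X\otimes X])\le R_0\E[\xi^2]=2R_0\cR(\theta_*)$ and $\gamma R_0\le1$ gives
\begin{equation*}
h_n\le h_{n-1}-\gamma(2-\gamma R_0)g_{n-1}+2\gamma^2R_0\cR(\theta_*)\le h_{n-1}-\gamma g_{n-1}+2\gamma^2R_0\cR(\theta_*).
\end{equation*}
Telescoping from $h_0=0$ yields $\tfrac1N\sum_{k<N}g_k\le 2\gamma R_0\cR(\theta_*)$ for every $N$. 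Finally, because the congruence $C\mapsto\E[A_nCA_n]$ is Loewner-monotone and $C_0=0\preccurlyeq C_1$, the sequence $C_n$ is nondecreasing, hence so is $g_n$; a nondecreasing sequence whose Cesàro averages stay below $2\gamma R_0\cR(\theta_*)$ has limit, and therefore supremum, at most $2R_0\gamma\cR(\theta_*)$. Combining this with the bias bound gives $\min_k\E[\cR(\theta_k)-\cR(\theta_*)]\le 2C'/n^{\alphalow+1}+2R_0\gamma\cR(\theta_*)$, as claimed.
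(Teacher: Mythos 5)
Your proof is correct, and at the top level it follows the same route as the paper: your split $\theta_n-\theta_*=\eta_n^{\mathrm b}+\eta_n^{\mathrm v}$ is exactly the paper's decomposition $\theta_n=\nu_n+\sum_{l=1}^n\eta_n^{(l)}$ (your $\eta_n^{\mathrm v}$ is the aggregated noise $\sum_l\eta_n^{(l)}$), and both arguments pay a factor $2$ through the same quadratic triangle inequality and apply Theorem~\ref{thm:main-result-upper-bound} verbatim to the bias. Where you genuinely diverge is the variance bound. The paper keeps the per-injection processes $\eta_n^{(l)}$ separate, kills the cross terms $\E[\langle\eta_n^{(l)},\Sigma\,\eta_n^{(l')}\rangle]$ using $\E[\varepsilon X]=0$ and independence, observes that $\eta_n^{(l)}$ and $\eta_{n-l+1}^{(1)}$ are equal in law, and then reuses the summability inequality $\gamma\sum_k\varphi_k(-1)\leq\varphi_0(0)$ from the noiseless proof applied to a single run started at $\gamma\varepsilon_1X_1$; uniformity in $k$ is then automatic, since the variance term at time $n$ is a partial sum of nonnegative quantities. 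You instead close a recursion on the aggregated covariance $C_n$ --- your cross-term cancellation via $\E[\eta_{n-1}^{\mathrm v}]=0$ (linearity in $\eta_{n-1}^{\mathrm v}$ plus independence from $(X_n,\xi_n)$) is the same mechanism as the paper's, packaged once rather than per injection --- then telescope the trace Lyapunov inequality to get a Ces\`aro bound, and upgrade it to a uniform-in-$k$ bound via the Loewner-monotonicity of $C_n$. That monotonicity step (congruence maps preserve the semidefinite order, the injected covariance $\gamma^2\E[\xi^2 X\otimes X]$ is constant and positive, hence $C_n$ is nondecreasing and a nondecreasing sequence with bounded Ces\`aro means is bounded by the same constant) is your genuinely new ingredient; it substitutes for the paper's partial-sum observation, and it is correct. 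The constants come out identical, including the reliance on the strong assumption $\Vert X\Vert^2\leq R_0$ a.s.\ to get $\E[\xi^2\Vert X\Vert^2]\leq 2R_0\cR(\theta_*)$ --- the paper explicitly flags that this proof needs the strong form rather than the weak form of Remark~\ref{rmk:weak-assumptions}, and so does yours. One limitation worth noting: your covariance recursion as written only treats the norm $\Tr(\Sigma C_n)$, i.e.\ $\beta=-1$, which suffices for Theorem~\ref{thm:gen-main-result}; the paper's per-injection version interpolates via H\"older over $\beta\in[-1,\alphalow-1]$ and thereby also yields Theorem~\ref{thm:gen-general-result}, so extending your argument to that theorem would require rerunning the interpolation of Property~\ref{prop:log-cvx} at the level of your sums $\sum_k\Tr(\Sigma^{-\beta}C_k)$.
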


The take-home message is that if we consider the excess risk $\cR(\theta_k) - \cR(\theta_*)$, we get the upper bound of the form $2C'n^{-(\alphalow+1)}$, analog to Theorem \ref{thm:main-result-upper-bound}, but with an additional constant term $2 R_0 \gamma \cR(\theta_*)$. This term can be small if $\cR(\theta_*)$ is small, that is if the problem is close to the noiseless linear model, or if the step-size $\gamma$ is small. In the finite horizon setting setting, one can optimize $\gamma$ as a function of the scheduled number of steps $n$ in order to balance both terms in the upper bound. As $C' \propto \gamma^{-(\alphalow+1)}$, the optimal choice is $\gamma \propto n^{-(\alphalow+1)/(\alphalow+2)}$ which gives a rate $\min_{k=0, \dots, n } \E\left[\cR(\theta_k) - \cR(\theta_*)\right] = O\left(n^{-(\alphalow+1)/(\alphalow+2)}\right)$.

In the theorem below, we study the SGD iterates $\theta_n$ in terms of the power norms $\varphi_n(\beta)$, $\beta \in [-1,\alphalow-1]$, in particular in term of the reconstruction error $\varphi_n(0) = \E[\Vert \theta_n - \theta_* \Vert^2]$ if $\alphalow \geq 1$. Note that the population risk $\cR(\theta)$ is a quadratic with Hessian $\Sigma$, minimized at $\theta_*$, thus
\begin{equation*}
	\E\left[\cR(\theta_n)-\cR(\theta_*)\right] = \frac{1}{2}\E\left[\left\langle \theta_n-\theta_*, \Sigma(\theta_n-\theta_*) \right\rangle \right] = \frac{1}{2}  \varphi_n(-1) \, .
\end{equation*}
Thus the theorem below extends Theorem \ref{thm:gen-main-result}. 

\begin{thm}
	\label{thm:gen-general-result}
	Under the assumptions of Theorem \ref{thm:main-result-upper-bound}, 
	\begin{enumerate}[nolistsep, noitemsep]
		\item for all $\beta \geq 0$, $\beta \leq \alphalow-1$,
		\begin{equation*}
			\varphi_n(\beta) \leq 2\frac{C(\beta)}{n^{\alphalow-\beta}} + 4 R_0^{1-(\beta+1)/\alphalow} R_{\alphalow}^{(\beta+1)/\alphalow}\gamma \cR(\theta_*) \, ,
		\end{equation*}
		\item for all $\beta \in [-1,0)$, $\beta \leq \alphalow-1$,
		\begin{equation*}
			\min_{k01, \dots, n } \varphi_k(\beta) \leq 2\frac{C'(\beta)}{n^{\alphalow-\beta}} + 4 R_0^{1-(\beta+1)/\alphalow} R_{\alphalow}^{(\beta+1)/\alphalow} \gamma \cR(\theta_*) \, ,
		\end{equation*}
	\end{enumerate}
	where $C$, $C'$ are the same constants as in Theorem \ref{thm:general-result-upper-bound}.
\end{thm}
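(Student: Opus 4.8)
The plan is to exploit the linearity of the recursion~\eqref{eq:gen-sgd-iteration} to split the error into a \emph{bias} part governed by the noiseless dynamics and a \emph{variance} part driven by the misspecification, and then to interpolate the variance contribution across $\beta$. Writing $\xi_n = Y_n - \langle \theta_*, X_n\rangle$, the first-order optimality of $\theta_*$ gives $\E[\xi_n X_n]=0$, and subtracting $\theta_*$ from~\eqref{eq:gen-sgd-iteration} yields the affine recursion $\theta_n-\theta_* = (\Id-\gamma X_n\otimes X_n)(\theta_{n-1}-\theta_*) + \gamma \xi_n X_n$. I would introduce $\theta_n^0$, the iterate produced by the same inputs $X_1,\dots,X_n$ but with the noiseless targets $\langle \theta_*, X_n\rangle$ (i.e.\ with $\xi_n$ set to $0$), so that $\theta_n^0-\theta_*$ is exactly the noiseless process analysed in Theorem~\ref{thm:general-result-upper-bound}, and set $\delta_n = (\theta_n-\theta_*)-(\theta_n^0-\theta_*)$. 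Since both iterates use the \emph{same} random operators $\Id-\gamma X_n\otimes X_n$, the increment $\delta_n$ obeys the homogeneous recursion with zero initial condition and forcing $\gamma\xi_n X_n$; in particular $\E[\delta_n]=0$ for all $n$.

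First I would reduce $\varphi_n(\beta)$ to the two pieces. Because $\Sigma^{-\beta}\succeq 0$, the map $v\mapsto\langle v,\Sigma^{-\beta}v\rangle^{1/2}$ is a seminorm, so
\begin{equation*}
\varphi_n(\beta) \leq 2\,\E\big[\langle \theta_n^0-\theta_*,\,\Sigma^{-\beta}(\theta_n^0-\theta_*)\rangle\big] + 2\,\tilde\varphi_n(\beta),\qquad \tilde\varphi_n(\beta):=\E\big[\langle \delta_n,\,\Sigma^{-\beta}\delta_n\rangle\big].
\end{equation*}
The first term is bounded by Theorem~\ref{thm:general-result-upper-bound} (a direct bound for $\beta\in[0,\alphalow]$, a past-minimum bound for $\beta\in[-1,0)$), which explains both the factor $2$ in front of $C(\beta),C'(\beta)$ and the appearance of a minimum only for negative $\beta$. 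It then remains to bound the variance term $\tilde\varphi_n(\beta)$ \emph{uniformly} in $n$.

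For the variance term I would run the computation leading to the closed recurrence inequality (Property~\ref{prop:regularity-ineq}), now on the covariance $V_n=\E[\delta_n\otimes\delta_n]$. Using $\E[\xi_n X_n]=0$ and $\E[\delta_{n-1}]=0$, all cross terms vanish in full expectation and one obtains the \emph{driven} recurrence
\begin{equation*}
\tilde\varphi_n(\beta) = \tilde\varphi_{n-1}(\beta) - 2\gamma\,\tilde\varphi_{n-1}(\beta-1) + \gamma^2\,\E\big[\langle X, V_{n-1}X\rangle\langle X,\Sigma^{-\beta}X\rangle\big] + \gamma^2\,\E\big[\xi^2\langle X,\Sigma^{-\beta}X\rangle\big],
\end{equation*}
whose only new feature is the inhomogeneous forcing $\gamma^2\E[\xi^2\langle X,\Sigma^{-\beta}X\rangle]$. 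At the two relevant endpoints this forcing is exactly what assumptions \ref{ass:reg-opt}--\ref{ass:reg-feature} control: at $\beta=0$ it is $\gamma^2\E[\xi^2\Vert X\Vert^2]\leq 2\gamma^2 R_0\cR(\theta_*)$, and at $\beta=\alphalow$ it is $\gamma^2\E[\xi^2\langle X,\Sigma^{-\alphalow}X\rangle]\leq 2\gamma^2 R_\alphalow\cR(\theta_*)$ (recalling $\cR(\theta_*)=\tfrac12\E[\xi^2]$). Carrying these inputs through the stability analysis of the recurrence, which is where the restriction $0<\gamma\leq 1/R_0$ enters, I expect the uniform endpoint bounds $\sup_n\tilde\varphi_n(-1)\leq 2\gamma R_0\cR(\theta_*)$ and $\sup_n\tilde\varphi_n(\alphalow-1)\leq 2\gamma R_\alphalow\cR(\theta_*)$; note the output at index $\beta-1$ is fed by the forcing at index $\beta$, which is why the controlled moments live at $0$ and $\alphalow$. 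Showing that the driven recurrence does not \emph{accumulate}---that the variance stays at the stationary level $O(\gamma\cR(\theta_*))$ rather than growing with $n$---is the main obstacle, and it exactly parallels the stability argument underlying Theorems~\ref{thm:main-result-upper-bound} and~\ref{thm:general-result-upper-bound}.

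Finally I would interpolate in $\beta$. For fixed $n$, writing the spectral decomposition of $V_n$, the map $\beta\mapsto\tilde\varphi_n(\beta)=\Tr(\Sigma^{-\beta}V_n)$ is a nonnegative combination of integrals $\int\lambda^{-\beta}\,\mathrm{d}\nu(\lambda)$, each log-convex in $\beta$ (a Laplace transform in $\beta$); since sums of log-convex functions are log-convex (H\"older), $\tilde\varphi_n$ is log-convex on $[-1,\alphalow-1]$. Interpolating between the two endpoints with weight $\theta=1-(\beta+1)/\alphalow$, so that $\beta=\theta(-1)+(1-\theta)(\alphalow-1)$, gives
\begin{equation*}
\tilde\varphi_n(\beta) \leq \tilde\varphi_n(-1)^{\theta}\,\tilde\varphi_n(\alphalow-1)^{1-\theta} \leq 2\gamma\,\cR(\theta_*)\,R_0^{\,1-(\beta+1)/\alphalow}R_\alphalow^{\,(\beta+1)/\alphalow},\qquad \theta=1-\tfrac{\beta+1}{\alphalow}.
\end{equation*}
Combined with the bias bound and the seminorm factor $2$, this reproduces the stated estimates: a direct bound for $\beta\in[0,\alphalow-1]$, and for $\beta\in[-1,0)$ a bound on $\min_{k\leq n}\varphi_k(\beta)$ (the uniformly bounded variance term passing harmlessly through the minimum). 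The restriction $\beta\leq\alphalow-1$ is precisely the interpolation window in which the forcing at $\beta+1$ is covered by assumptions \ref{ass:reg-opt}--\ref{ass:reg-feature}.
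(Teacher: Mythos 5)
Your architecture is exactly the paper's: the same split of $\theta_n-\theta_*$ into the noiseless process $\theta_n^0-\theta_*$ (the paper's $\nu_n$) plus a zero-mean driven part $\delta_n$, the same factor-$2$ seminorm inequality, the same endpoint levels $-1$ and $\alphalow-1$ controlled by $R_0$ and $R_\alphalow$, and the same log-convex interpolation with weight $(\beta+1)/\alphalow$, recovering the exact constants. Your one-step computation for $\tilde\varphi_n(\beta)$ is also correct (the cross terms do vanish, using independence together with $\E[\xi_nX_n]=0$ and $\E[\delta_{n-1}]=0$). The gap is at the step you yourself flag as the main obstacle: the uniform-in-$n$ endpoint bounds $\sup_n\tilde\varphi_n(-1)\leq 2\gamma R_0\cR(\theta_*)$ and $\sup_n\tilde\varphi_n(\alphalow-1)\leq 2\gamma R_\alphalow\cR(\theta_*)$. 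These are true, but the aggregated driven recurrence you derived cannot deliver them. Telescoping it at level $\beta=0$ gives $\gamma\sum_{k=0}^{n-1}\tilde\varphi_k(-1)\leq \tilde\varphi_0(0)+2n\gamma^2R_0\cR(\theta_*)$, i.e.\ only a Ces\`aro-average bound $\frac{1}{n}\sum_{k<n}\tilde\varphi_k(-1)\leq 2\gamma R_0\cR(\theta_*)$, and similarly at level $\alphalow$ for $\tilde\varphi_k(\alphalow-1)$; since $\tilde\varphi_n$ is nondecreasing in $n$ (it rises from $0$ toward its stationary level), an average bound does not control the last iterate. Writing the recurrence directly at $\beta=-1$ does not help either: its dissipation term sits at level $-2$ and its quadratic term is not dominated by $\tilde\varphi_{n-1}(-1)$ with a contraction factor. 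A pointwise, last-iterate bound at level $-1$ for this driven process is precisely the kind of statement Remark \ref{rmk:pointwise-decay-risk} singles out as delicate, so the asserted parallel with the stability analysis of Theorem \ref{thm:main-result-upper-bound} does not go through as stated.

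The paper closes this with one further idea absent from your outline: decompose $\delta_n=\sum_{l=1}^n\eta_n^{(l)}$ over the noise-injection times, where $\eta_l^{(l)}=\gamma\varepsilon_lX_l$ and each $\eta^{(l)}$ subsequently evolves by the homogeneous multipliers $\Id-\gamma X_k\otimes X_k$. Because $\E[\varepsilon_lX_l]=0$ and $X_l$ is independent of the later multipliers, all cross terms $\E\langle\eta_n^{(l)},\Sigma^{-\beta}\eta_n^{(l')}\rangle$ vanish, so $\tilde\varphi_n(\beta)=\sum_{l=1}^n\E\Vert\Sigma^{-\beta/2}\eta_n^{(l)}\Vert^2=\sum_{l=1}^n\varphi'_l(\beta)$ by equality in law, a partial sum of the error sequence $\varphi'_l$ of a \emph{noiseless} SGD started from $\gamma\varepsilon_1X_1$. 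The supremum over $n$ thereby becomes a whole-trajectory series, and exactly the telescoping inequalities from the noiseless analysis --- \eqref{eq:bound_sum_-1} for $\sum_l\varphi'_l(-1)\leq\gamma^{-1}\varphi'_1(0)$, and the summed form of \eqref{eq:aux-7} for $\sum_l\varphi'_l(\alphalow-1)$ --- yield your two endpoint bounds, with $\varphi'_1(0)\leq 2\gamma^2R_0\cR(\theta_*)$ and $\varphi'_1(\alphalow)\leq 2\gamma^2R_\alphalow\cR(\theta_*)$ using the strong (a.s.) boundedness assumptions, as the paper notes. After that, your H\"older-plus-log-convexity interpolation is exactly the paper's step \eqref{eq:aux-10}, applied to the sums over $l$ rather than to a stationary sup. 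So the repair is local but essential: replace the appeal to stationarity of the driven recurrence by this orthogonal per-injection decomposition; everything else in your proposal matches the paper's proof.
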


This theorem is proved in Appendix \ref{sec:proof-thm-gen}. We expect the condition $\beta \leq \alphalow-1$ to be necessary. More precisely, when $\cR(\theta_*)$ is positive, we expect the error $\theta_n-\theta_*$ to diverge under the norm $\Vert \Sigma^{-\beta/2} \, . \, \Vert$ if $\beta > \alphalow - 1$. In particular, this would imply that the reconstruction error diverges when $\alphalow < 1$. 

In Figure \ref{fig:gaussian-additive}, we show how the simulations of Section \ref{sec:gaussian} are perturbed in the presence of additive noise. We consider the noisy linear model $Y = \langle \theta_*, X \rangle + \sigma^2 Z$, where $X \sim \cN(0,\Sigma)$ and $Z \sim \cN(0,1)$ are independent. As in the previous simulations, we consider the case $\Sigma = \sum_{i= 1}^d i^{-\beta}e_i \otimes e_i$ and $\theta_* = \sum_{i=1}^d i^{-\delta}e_i$ with here $d = 10^5$, $\beta = 1.4$, $\delta = 1.2$. In the noiseless case $\sigma^2 = 0$, we have shown that the rate of convergence was given by the polynomial exponent $\alpha_* = \min\left(1-\frac{1}{\beta}, \frac{2\delta-1}{\beta}\right)$. These predicted rates are represented by the orange lines in the plots. In blue, we show the results of our simulations with some additive noise with variance $\sigma^2 = 2\times 10^{-4}$. The exponent $\alpha_*$ still describes the behavior of SGD in the initial phase, but in the large $n$ asymptotic the population risk $\cR(\theta_n)$ stagnates around the order of $\sigma^2$. Both of these qualitative behaviors are predicted by Theorem~\ref{thm:gen-main-result}. Moreover, the reconstruction error $\Vert \theta_n - \theta_* \Vert$ diverges for large $n$.

\begin{figure}
	\includegraphics[width=0.5\textwidth]{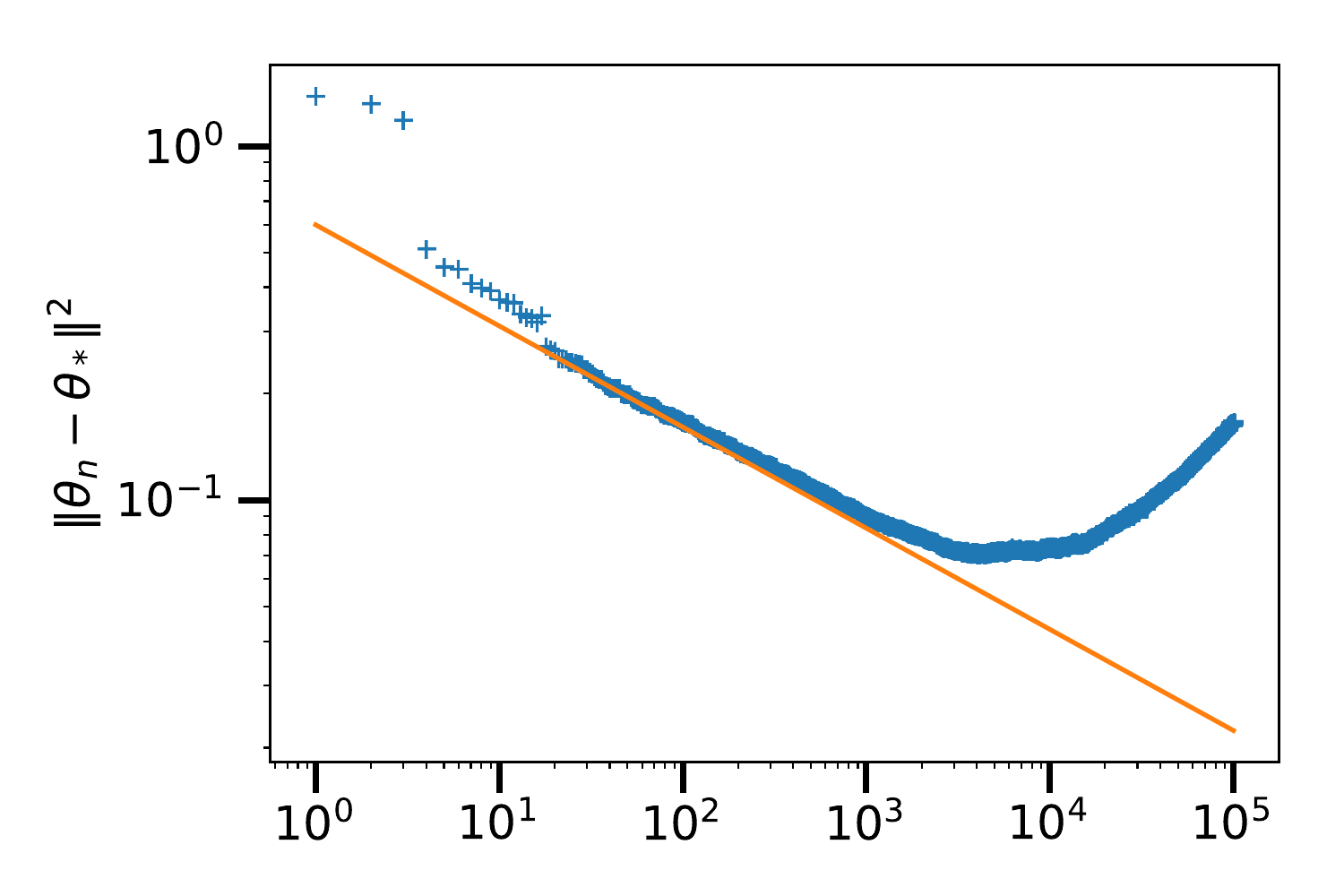}
	\includegraphics[width=0.5\textwidth]{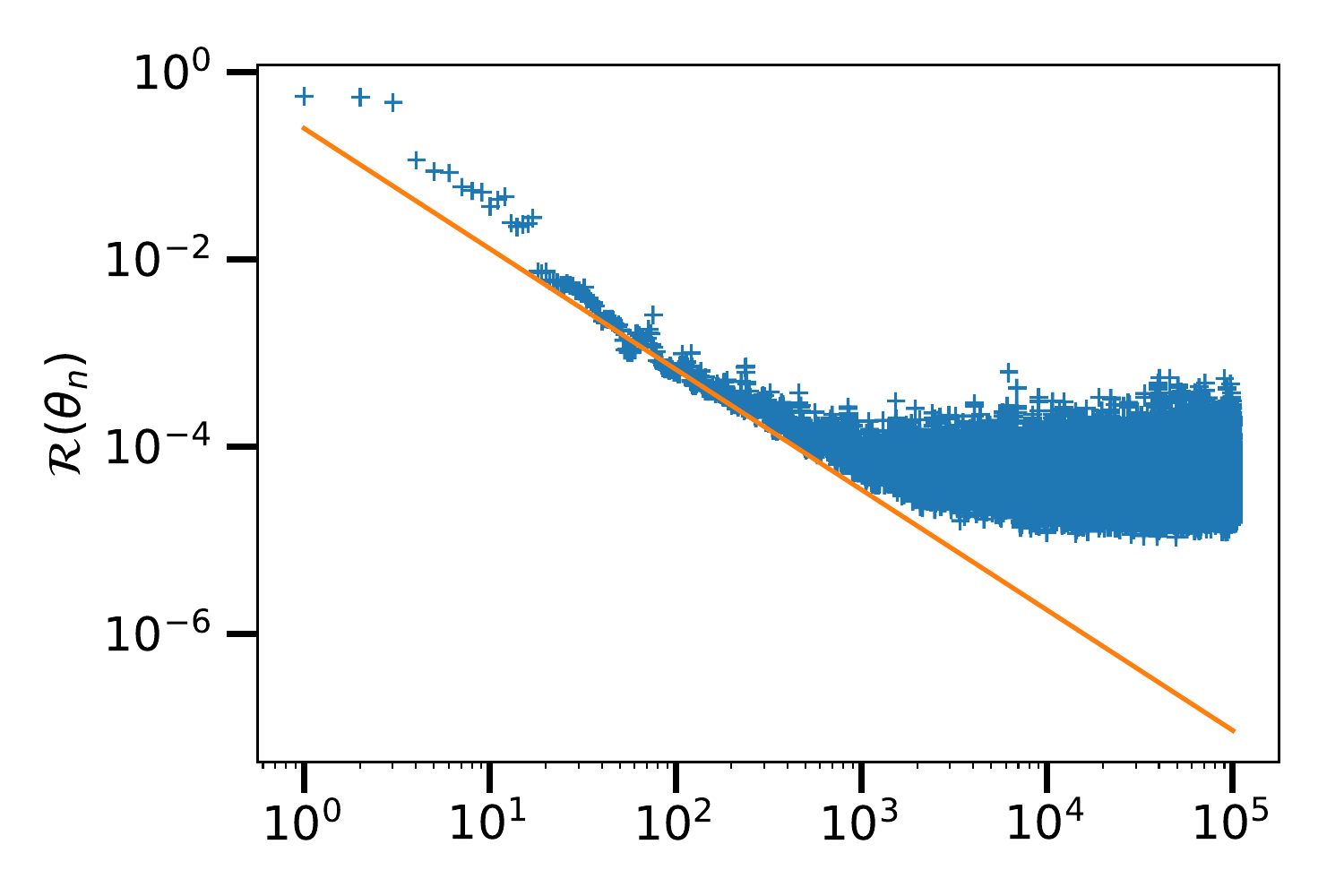}	
	\caption{In blue +, evolution of $\Vert \theta_n - \theta_*\Vert^2$ (left) and $\cR(\theta_n)$ (right) as functions of $n$, for the problems with parameters $d = 10^5, \beta = 1.4, \delta = 1.2$. The orange lines represent the curves $D/n^{\alpha_*}$ (left) and $D'/n^{\alpha_*+1}$ (right).}
	\label{fig:gaussian-additive}
\end{figure}

	\section{Conclusion and research directions}
	
	In this paper, we give a sharp description of the convergence of SGD under the noiseless linear model and made connexions with the interpolation of a real function and the averaging process. The behavior of SGD is surprisingly different in the absence of additive noise: it converges without any averaging or decay of the step-sizes.
	To some extent, SGD adapts to the regularity of the problem thanks to the implicit regularization ensured by the initialization at zero and the single pass on the data. However, by comparing with some known estimators for the interpolation of functions \cite{bauer2017nonparametric, kohler2013optimal} (see the end of Section \ref{sec:intro}), we conjecture that the convergence rate of SGD is suboptimal. What are the minimax rates under the noiseless linear model? Can they be reached with some accelerated online algorithm?

	\section*{Acknowledgments}
	
	This work was greatly improved by detailed comments from Loucas Pillaud-Vivien on earlier versions of the manuscript. We also thank Alessandro Rudi, Nicolas Flammarion and anonymous reviewers for useful discussions. This work was funded in part by the French government under management of Agence Nationale de la
Recherche as part of the ``Investissements d’avenir'' program, reference ANR-19-P3IA-0001 (PRAIRIE 3IA
Institute). We also acknowledge support from the European Research Council (grant SEQUOIA 724063) and from the DGA.

% \francis{Make sure bibtex runs without warning. I mean it} \raphael{done 21/05}	

	\bibliographystyle{abbrv}
	\bibliography{biblio}

\newpage

\appendix

\section{Proof of Theorems \ref{thm:main-result-upper-bound} and \ref{thm:general-result-upper-bound}}
\label{sec:proof-thm-general-upper}

We recall here the definition of the regularity functions 
\begin{equation*}
\varphi_n(\beta) = \E\left[\left\langle \theta_n - \theta_*, \Sigma^{-\beta}\left(\theta_n - \theta_*\right)\right\rangle\right] \in [0,\infty] \, , \qquad \beta \in \R \, . 
\end{equation*}

\subsection{Properties of the regularity functions}

We derive here two properties of the sequence of regularity functions $\varphi_n, n \geq 1$ that are useful for the proof of Theorem \ref{thm:general-result-upper-bound}. The first one is a simple consequence of the above definition of the regularity function. The second property is the closed recurrence relation of the regularity functions $\varphi_n$, $n\geq 0$ associated to the iterates of SGD. 

\begin{property}
	\label{prop:log-cvx}
	For all $n$, the function $\varphi_n$ is log-convex, i.e., for all $\beta_1, \beta_2 \in \R$, for all $\lambda \in [0,1]$, 
	\begin{equation*}
	\varphi_n\left((1-\lambda)\beta_1 + \lambda\beta_2\right) \leq \varphi_n(\beta_1)^{1-\lambda} \varphi_n(\beta_2)^\lambda \, .
	\end{equation*}
\end{property}

\begin{proof}
	The proof is based on the following lemma, that we state clearly for another use below. 
	\begin{lem}
		\label{lem:holder-for-operators}
		Let $\theta\in\cH$. Then for all $\beta_1, \beta_2 \in \R$, $\lambda \in [0,1]$, 
		\begin{equation*}
		\left\langle \theta, \Sigma^{-[(1-\lambda)\beta_1 + \lambda\beta_2]}\theta \right\rangle \leq \left\langle \theta, \Sigma^{-\beta_1}\theta \right\rangle^{1-\lambda}  \left\langle \theta, \Sigma^{-\beta_2}\theta \right\rangle^{\lambda} \, . 
		\end{equation*}
	\end{lem}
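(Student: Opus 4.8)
The plan is to prove the slightly more general fact that the map $\beta \mapsto \langle \theta, \Sigma^{-\beta}\theta \rangle$ is log-convex on $\R$ as a function with values in $[0,\infty]$; the asserted inequality is then exactly the definition of log-convexity evaluated at the convex combination $(1-\lambda)\beta_1 + \lambda\beta_2$. The key idea is to diagonalize $\Sigma$ so that the bracket $\langle \theta, \Sigma^{-\beta}\theta \rangle$ becomes a one-dimensional integral of $t \mapsto t^{-\beta}$ against a fixed finite positive measure, and then to recognize log-convexity in $\beta$ as a direct instance of H\"older's inequality.

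First I would invoke the spectral theorem. Under the running assumption $\Vert X \Vert^2 \leq R_0$ one has $\Tr \Sigma = \E \Vert X \Vert^2 \leq R_0 < \infty$, so $\Sigma$ is trace-class, hence compact, and admits an orthonormal eigenbasis $\Sigma = \sum_{i \geq 1} \lambda_i\, e_i \otimes e_i$ with $\lambda_i \geq 0$. Writing $c_i = \langle \theta, e_i \rangle$, I would record the integral representation
\[
\left\langle \theta, \Sigma^{-\beta}\theta \right\rangle = \sum_{i : \lambda_i > 0} \lambda_i^{-\beta} c_i^2 = \int_{(0,\infty)} t^{-\beta}\, \diff\mu_\theta(t) \, ,
\]
where $\mu_\theta = \sum_{i : \lambda_i > 0} c_i^2\, \delta_{\lambda_i}$ is a finite positive measure on $(0,\infty)$ since $\sum_i c_i^2 \leq \Vert \theta \Vert^2 < \infty$. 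The subtlety to verify here is that this integral genuinely coincides with the infimum-based definition $\inf\{\Vert \theta' \Vert^2 \mid \theta = \Sigma^{\beta/2}\theta'\}$, including the convention that the value is $+\infty$ when $\theta \notin \Sigma^{\beta/2}(\cH)$: for $\beta > 0$ a representer $\theta'$ exists if and only if $\theta$ is orthogonal to $\ker \Sigma$ and $\sum_i \lambda_i^{-\beta} c_i^2 < \infty$, in which case the minimal-norm representer has squared norm exactly that sum; for $\beta \leq 0$ the operator $\Sigma^{-\beta}$ is bounded and the identity is immediate.

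With the integral representation in hand, set $\beta = (1-\lambda)\beta_1 + \lambda\beta_2$ and factor $t^{-\beta} = \left(t^{-\beta_1}\right)^{1-\lambda}\left(t^{-\beta_2}\right)^{\lambda}$. Applying H\"older's inequality to the measure $\mu_\theta$ with conjugate exponents $p = 1/(1-\lambda)$ and $q = 1/\lambda$, and treating the endpoints $\lambda \in \{0,1\}$ as trivial, yields
\[
\int t^{-\beta}\, \diff\mu_\theta \leq \left(\int t^{-\beta_1}\, \diff\mu_\theta\right)^{1-\lambda} \left(\int t^{-\beta_2}\, \diff\mu_\theta\right)^{\lambda} \, ,
\]
which is precisely the claimed inequality. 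When either factor on the right is infinite the bound is vacuous, so the statement remains consistent with the $[0,\infty]$-valued convention.

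The argument is essentially mechanical once the integral representation is in place; the only genuine obstacle is the first step, namely confirming that the infimum defining $\langle \theta, \Sigma^{-\beta}\theta \rangle$ agrees with $\int t^{-\beta}\, \diff\mu_\theta$ for positive $\beta$ and that the $+\infty$ convention is respected. This requires some care about the possibly nontrivial kernel of $\Sigma$ and about the attainment of the minimal-norm preimage. Everything after that reduces to a single application of H\"older.
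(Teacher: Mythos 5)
Your proof is correct and follows essentially the same route as the paper's: diagonalize $\Sigma$ in an orthonormal eigenbasis and apply H\"older's inequality with exponents $p=(1-\lambda)^{-1}$, $q=\lambda^{-1}$ to the resulting weighted sum. Your additional verification that the spectral sum $\sum_i \lambda_i^{-\beta}\langle\theta,e_i\rangle^2$ agrees with the paper's infimum-based definition of $\langle\theta,\Sigma^{-\beta}\theta\rangle$ (including the $+\infty$ convention and the kernel of $\Sigma$) is a point the paper passes over silently, and is a welcome extra care rather than a divergence in method.
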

	This lemma follows from H\"older's inequality with $p = (1-\lambda)^{-1}$ and $q = \lambda^{-1}$. Indeed, diagonalize $\Sigma = \sum_{i} \mu_i e_i \otimes e_i$. Then 
	\begin{align*}
	\left\langle \theta, \Sigma^{-[(1-\lambda)\beta_1 + \lambda\beta_2]}\theta \right\rangle &= \sum_i \mu_i^{-[(1-\lambda)\beta_1 + \lambda\beta_2]} \langle \theta, e_i \rangle^2 \\
	&= \sum_i \left(\mu_i^{-\beta_1}\langle \theta, e_i \rangle^2\right)^{1-\lambda}\left(\mu_i^{-\beta_2}\langle \theta, e_i \rangle^2\right)^{\lambda} \\
	&\leq \left(\sum_i \mu_i^{-\beta_1}\langle \theta, e_i \rangle^2\right)^{1-\lambda} \left(\sum_i \mu_i^{-\beta_2}\langle \theta, e_i \rangle^2\right)^{\lambda} \\
	&= \left\langle \theta, \Sigma^{-\beta_1}\theta \right\rangle^{1-\lambda}  \left\langle \theta, \Sigma^{-\beta_2}\theta \right\rangle^{\lambda} \, .
	\end{align*}
	We now apply this lemma to prove Property \ref{prop:log-cvx}. 
	\begin{align*}
	\varphi_n((1-\lambda)\beta_1+\lambda\beta_2) &=  \E\left[\left\langle \theta_n - \theta_*, \Sigma^{-[(1-\lambda)\beta_1+\lambda\beta_2]}\left(\theta_n - \theta_*\right)\right\rangle\right] \\
	&\leq \E\left[\left\langle \theta_n - \theta_*, \Sigma^{-\beta_1}\left(\theta_n - \theta_*\right)\right\rangle^{1-\lambda}\left\langle \theta_n - \theta_*, \Sigma^{-\beta_2}\left(\theta_n - \theta_*\right)\right\rangle^{\lambda}\right] \, .
	\end{align*}
	Using again H\"older's inequality, we get 
	\begin{align*}
	\varphi_n((1-\lambda)\beta_1+\lambda\beta_2) &\leq   \E\left[\left\langle \theta_n - \theta_*, \Sigma^{-\beta_1}\left(\theta_n - \theta_*\right)\right\rangle\right]^{1-\lambda}\E\left[\left\langle \theta_n - \theta_*, \Sigma^{-\beta_2}\left(\theta_n - \theta_*\right)\right\rangle\right]^{\lambda} \\
	&=\varphi_n(\beta_1)^{1-\lambda}\varphi_n(\beta_2)^\lambda \, .
	\end{align*}
\end{proof}

\begin{property}
	\label{prop:regularity-ineq}
	Under the assumptions of Theorem \ref{thm:general-result-upper-bound}, for all $n$, the function $\varphi_n$ is finite on $(-\infty,\alphalow]$, and if $0 \leq \beta \leq \alphalow$,
	\begin{equation*}
	\varphi_n(\beta) \leq \varphi_{n-1}(\beta) - 2\gamma \varphi_{n-1}(\beta-1) + \gamma^2  R_0^{1-\beta/\alphalow}R_\alphalow^{{\beta}/{\alphalow}}\varphi_{n-1}(-1) \, .
	\end{equation*}
\end{property}

\begin{proof}
	By assumption \ref{ass:reg-opt}, $\varphi_0(\alphalow) =  \Vert \Sigma^{-\alphalow/2} \theta_* \Vert^2$ is finite, i.e., there exists $\theta \in \cH$ such that $\theta_* = \Sigma^{\alphalow/2} \theta$. Then for any $\beta \leq \alphalow$, $\theta_* = \Sigma^{\beta/2}\left(\Sigma^{(\alphalow - \beta)/2} \theta\right)$ thus $\varphi_0(\beta) = \Vert \Sigma^{-\beta/2} \theta_* \Vert^2$ is finite. 
	
	Further, assume that for some $n$, the function $\varphi_{n-1}$ is finite on $(\infty,\alphalow]$. Then 
	we can rewrite the stochastic gradient iteration \eqref{eq:SGD_iteration} as 
	\begin{equation*}
	\theta_n - \theta_* = (\Id - \gamma X_n \otimes X_n)(\theta_{n-1} - \theta_*) \, .
	\end{equation*}
	Substituting this expression in the definition of $\varphi_n$ and expanding the formula, we get
	\begin{align}
	\varphi_n(\beta) &=  \E\left[\left\langle \theta_n - \theta_*, \Sigma^{-\beta}\left(\theta_n - \theta_*\right)\right\rangle\right] \nonumber \\ 
	&= \E\left[\left\langle (\Id - \gamma X_n \otimes X_n)(\theta_{n-1} - \theta_*), \Sigma^{-\beta}(\Id - \gamma X_n \otimes X_n)(\theta_{n-1} - \theta_*)\right\rangle\right] \nonumber\\
	&= \E\left[\left\langle \theta_{n-1} - \theta_*, \Sigma^{-\beta}(\theta_{n-1} - \theta_*)\right\rangle\right]\label{eq:aux-2}  \\
	&\qquad\qquad\qquad-2\gamma \E\left[\left\langle \theta_{n-1} - \theta_*, \Sigma^{-\beta} X_n \otimes X_n(\theta_{n-1} - \theta_*)\right\rangle\right] \\
	&\qquad\qquad\qquad +\gamma^2\E\left[\left\langle \theta_{n-1} - \theta_*,  X_n \otimes X_n\Sigma^{-\beta} X_n \otimes X_n(\theta_{n-1} - \theta_*)\right\rangle\right] \label{eq:aux-3}\, .
	\end{align}
	Note that the first term of this sum is $\varphi_{n-1}(\beta)$. Further, $\theta_{n-1}$ is computed using only $(X_1,Y_1), \dots, (X_{n-1}, Y_{n-1})$, thus it is independent of $X_n$. It follows that 
	\begin{align}
	\E\left[\left\langle \theta_{n-1} - \theta_*, \Sigma^{-\beta} X_n \otimes X_n(\theta_{n-1} - \theta_*)\right\rangle\right] &= \E\left[\left\langle \theta_{n-1} - \theta_*, \Sigma^{-\beta} \E\left[X_n \otimes X_n\right](\theta_{n-1} - \theta_*)\right\rangle\right] \nonumber \\
	&= \E\left[\left\langle \theta_{n-1} - \theta_*, \Sigma^{-\beta+1}(\theta_{n-1} - \theta_*)\right\rangle\right] \nonumber \\
	&= \varphi_{n-1}(\beta-1) \label{eq:aux-4}\, .
	\end{align}
	Finally,
	\begin{align}
	&\E\left[\left\langle \theta_{n-1} - \theta_*,  X_n \otimes X_n\Sigma^{-\beta} X_n \otimes X_n(\theta_{n-1} - \theta_*)\right\rangle\right] \label{eq:aux-1} \\
	&\qquad = \E\left[\left\langle \theta_{n-1}-\theta_*, X_n \right\rangle^2 \left\langle X_n, \Sigma^{-\beta}X_n \right\rangle \right] \label{eq:aux-11} 
	\end{align} 
	We now assume that $0 \leq \beta \leq \alphalow$. We apply Lemma \ref{lem:holder-for-operators} with $\beta_1 = 0, \beta_2 = \alphalow, \lambda = \beta/\alphalow$: 
	\begin{equation*}
	\left\langle X_n, \Sigma^{-\beta}X_n \right\rangle \leq \Vert X_n \Vert^{2(1-\beta/\alphalow)}\left\langle X_n, \Sigma^{-\alphalow}X_n \right\rangle^{\beta/\alphalow}
	\end{equation*}
	Let $E_{X_n}$ denote the expectation with respect to $X_n$ only, while keeping $X_0, \dots, X_{n-1}$ random. Applying H\"older's inequality, we get 
	\begin{align*}
&\E_{X_n}\left[\left\langle X_n, \Sigma^{-\beta}X_n \right\rangle  \left\langle \theta_{n-1}-\theta_*, X_n \right\rangle^2 \right] \\
&\qquad \leq  \E_{X_n}\left[\Vert X_n \Vert^{2(1-\beta/\alphalow)}\left\langle X_n, \Sigma^{-\alphalow}X_n \right\rangle^{\beta/\alphalow} \left\langle \theta_{n-1}-\theta_*, X_n \right\rangle^2 \right] \\
	&\qquad\leq  \E_{X_n}\left[\Vert X_n \Vert^{2} \left\langle \theta_{n-1}-\theta_*, X_n \right\rangle^2 \right]^{1-\beta/\alphalow}\E\left[\left\langle X_n, \Sigma^{-\alphalow}X_n \right\rangle \left\langle \theta_{n-1}-\theta_*, X_n \right\rangle^2 \right]^{\beta/\alphalow} \\
	&\qquad= \left\langle \theta_{n-1}-\theta_*, \E\left[\Vert X_n \Vert^2 X_n \otimes X_n\right](\theta_{n-1}-\theta_*) \right\rangle^{1-\beta/\alphalow}  \\
	&\qquad\qquad\qquad\times\left\langle \theta_{n-1}-\theta_*, \E\left[\left\langle X_n, \Sigma^{-\alphalow}X_n \right\rangle X_n \otimes X_n\right](\theta_{n-1}-\theta_*) \right\rangle^{\beta/\alphalow} \\
	&\qquad\leq R_0^{1-\beta/\alphalow}R_\alphalow^{\beta/\alphalow} \left\langle \theta_{n-1}-\theta_*, \Sigma(\theta_{n-1}-\theta_*) \right\rangle \, ,
	\end{align*}
	where in this last step, we use the assumptions that the features $X$ are bounded and regular, in their weak formulation of Remark \ref{rmk:weak-assumptions}. Returning to the computation of \eqref{eq:aux-1}-\eqref{eq:aux-11}, we get 
	\begin{align}
	&\E\left[\left\langle \theta_{n-1} - \theta_*,  X_n \otimes X_n\Sigma^{-\beta} X_n \otimes X_n(\theta_{n-1} - \theta_*)\right\rangle\right] \nonumber\\
	&\qquad=\E\left[\E_{X_n}\left[\left\langle \theta_{n-1}-\theta_*, X_n \right\rangle^2 \left\langle X_n, \Sigma^{-\beta}X_n \right\rangle \right]\right] \\
	&\qquad\leq R_0^{1-\beta/\alphalow}R_\alphalow^{\beta/\alphalow} \E\left[\left\langle \theta_{n-1}-\theta_*, \Sigma(\theta_{n-1}-\theta_*) \right\rangle \right] \nonumber \\
	&\qquad= R_0^{1-\beta/\alphalow}R_\alphalow^{\beta/\alphalow}  \varphi_{n-1}(-1) \label{eq:aux-5}\, .
	\end{align}
	The result is obtained by putting together Equations \eqref{eq:aux-2}-\eqref{eq:aux-3}, \eqref{eq:aux-4} and \eqref{eq:aux-5}.
\end{proof}

\subsection{Proof of Theorem \ref{thm:main-result-upper-bound}}

A remarkable feature of the proof that follows is that only Properties \ref{prop:log-cvx} and \ref{prop:regularity-ineq} of the regularity functions are used to derive the theorem. In particular, we do not use the definition of the regularity functions $\varphi_n$ in this section.   

We start with a few preliminary remarks. Using the recurrence Property \ref{prop:regularity-ineq} and that $\gamma R_0 \leq 1$, 
\begin{align*}
\varphi_k(0) &\leq \varphi_{k-1}(0) - \gamma \left(2 - \gamma R_0\right) \varphi_{k-1}(-1) \\
&\leq \varphi_{k-1}(0) - \gamma  \varphi_{k-1}(-1) \, .
\end{align*}
Thus the sequence $\varphi_k(0)$, $k\geq 0$ decreases, and 
\begin{equation}
\label{eq:bound--1}
\gamma  \varphi_{k-1}(-1) \leq \varphi_{k-1}(0) - \varphi_{k}(0) \, .
\end{equation}
By summing this inequality over $k \geq 1$, we get 
\begin{equation}
\label{eq:bound_sum_-1}
\gamma \sum_{k=0}^\infty  \varphi_{k}(-1) \leq \varphi_0(0) \, .
\end{equation}
Using again the recurrence Property \ref{prop:regularity-ineq}, 
\begin{align}
\varphi_k(\alphalow) &\leq \varphi_{k-1}(\alphalow) - 2 \gamma  \varphi_{k-1}(\alphalow-1) + \gamma^2  R_\alphalow \varphi_{k-1}(-1) \label{eq:aux-7} \\
&\leq \varphi_{k-1}(\alphalow)  + \gamma^2  R_\alphalow \varphi_{k-1}(-1) \, . \nonumber
\end{align}
By summing for $k = 1, \dots, n$ and using the bound \eqref{eq:bound_sum_-1},
\begin{align}
\varphi_n(\alphalow) &\leq \varphi_0(\alphalow) + \gamma^2  R_\alphalow\sum_{k=0}^{n-1} \varphi_k(-1) \nonumber \\
&\leq \varphi_0(\alphalow) +  \gamma R_\alphalow \varphi_0(0) \nonumber \\
&\leq  \varphi_0(\alphalow) +   \frac{R_\alphalow}{R_0} \varphi_0(0) \, . \label{eq:bound-alpha} 
\end{align}
In words, the sequence $\varphi_n(\alphalow)$, $n \geq 0$ is bounded by $D := \varphi_0(\alphalow) +   \frac{R_\alphalow}{R_0} \varphi_0(0)$. As a side note, this proves Theorem \ref{thm:general-result-upper-bound} for $\beta = \alphalow$. 

We can now give a closed recurrence relation $\varphi_k(0)$, $k \geq 0$. Using the log-convexity Property \ref{prop:log-cvx}, 
\begin{equation*}
\varphi_{k-1}(0) \leq  \varphi_{k-1}(-1)^{\alphalow/(\alphalow +1)} \varphi_{k-1}(\alphalow)^{1/(\alphalow +1)} \leq  \varphi_{k-1}(-1)^{\alphalow/(\alphalow +1)} D^{1/(\alphalow +1)}  \, .
\end{equation*}
Substituting in \eqref{eq:bound--1}, we obtain 
\begin{align*}
  \varphi_{k-1}(0) - \varphi_{k}(0) &\geq \gamma\varphi_{k-1}(-1) \\
  &\geq \gamma D^{-1/\alphalow} \varphi_{k-1}(0)^{1+1/\alphalow} \, .
\end{align*}
This gives the wanted closed recurrence relation for $\varphi_k(0)$, $k \geq 0$. It implies a decay of $\varphi_k(0)$ as follows: consider the real function $f(\varphi) = \frac{1}{\varphi^{1/\alphalow}}$. It is a convex function on the positive reals, with derivative $f'(\varphi) = -\frac{1}{\alphalow} \frac{1}{\varphi^{1+1/\alphalow}}$. Using that a convex function is above its tangents, we obtain 
\begin{align*}
f\left(\varphi_k(0)\right) - f\left(\varphi_{k-1}(0)\right) &\geq f'\left(\varphi_{k-1}(0)\right) \left( \varphi_k(0) - \varphi_{k-1}(0) \right) \\
&= -\frac{1}{\alphalow} \frac{1}{\varphi_{k-1}(0)^{1+1/\alphalow}}\left( \varphi_k(0) - \varphi_{k-1}(0) \right) \\
&\geq \frac{1}{\alphalow} \gamma D^{-1/\alphalow} \, .
\end{align*}
By summing this inequality for $k = 1, \dots, n$, we obtain
\begin{align*}
\frac{1}{\varphi_n(0)^{1/\alphalow}} = f\left(\varphi_n(0)\right) \geq f\left(\varphi_0(0)\right) + \frac{1}{\alphalow} \gamma D^{-1/\alphalow} n \geq \frac{1}{\alphalow} \gamma D^{-1/\alphalow} n \, .
\end{align*}
This implies conclusion \ref{concl:positive} of Theorem \ref{thm:main-result-upper-bound}:
\begin{align}
\E\left[\Vert \theta_n - \theta_* \Vert^2 \right] = \varphi_n(0) \leq \frac{\alphalow^{\alphalow}}{\gamma^{\alphalow}} D  \frac{1}{n^{\alphalow}} \, . \label{eq:bound-0}
\end{align}

Further,
\begin{equation*}
\min_{0 \leq k \leq n} \varphi_k(-1) \leq \min_{ \left\lceil n/2 \right\rceil \leq k \leq n} \varphi_k(-1) \leq \frac{2}{n} \sum_{k = \left\lceil n/2 \right\rceil}^{n} \varphi_k(-1) \leq \frac{2}{n} \frac{1}{\gamma} \sum_{k = \left\lceil n/2 \right\rceil}^{n} \left(\varphi_{k}(0) - \varphi_{k+1}(0)\right) \, ,
\end{equation*}
where in the last step we used \eqref{eq:bound--1}. Telescoping the sum, we obtain 
\begin{align}
\min_{0 \leq k \leq n} \varphi_k(-1) &\leq \min_{ \left\lceil n/2 \right\rceil \leq k \leq n} \varphi_k(-1) \leq \frac{2}{n} \frac{1}{\gamma} \varphi_{\left\lceil n/2 \right\rceil}(0) \label{eq:aux-6} \\
&\leq \frac{2}{n} \frac{1}{\gamma} \frac{\alphalow^\alphalow}{\gamma^\alphalow} D \frac{1}{\left\lceil n/2 \right\rceil^{\alphalow}} \leq 2^{\alphalow+1} \frac{\alphalow^\alphalow}{\gamma^{\alphalow+1}} D \frac{1}{n^{\alphalow+1}} \, . \nonumber 
\end{align}
Using that $\varphi_n(-1) = 2\E[\cR(\theta_n)]$, this gives conclusion \ref{concl:negative} of Theorem \ref{thm:main-result-upper-bound}.

\subsection{Proof of Theorem \ref{thm:general-result-upper-bound}}

We continue the proof of Theorem \ref{thm:main-result-upper-bound} to prove Theorem \ref{thm:general-result-upper-bound}. By the log-convexity Property \ref{prop:log-cvx}, for all $\beta \in [0,\alphalow]$,
\begin{equation*}
\varphi_n(\beta) \leq \varphi_n(0)^{1-\beta/\alphalow} \varphi_n(\alphalow)^{\beta/\alphalow} \, . 
\end{equation*}
Using Equations \eqref{eq:bound-0} and \eqref{eq:bound-alpha}, we obtain 
\begin{equation*}
\varphi_n(\beta) \leq \frac{\alphalow^{\alphalow-\beta}}{\gamma^{\alphalow-\beta}} D \frac{1}{n^{\alphalow-\beta}} \, . 
\end{equation*}
This proves conclusion \ref{concl:positive} of the theorem. We now consider the case $\beta \in [-1,0)$. By the log-convexity Property \ref{prop:log-cvx},
\begin{equation*}
\min_{0 \leq k \leq n} \varphi_k(\beta) \leq \min_{\left\lceil n/2 \right\rceil \leq k \leq n} \varphi_k(\beta) \leq \min_{\left\lceil n/2 \right\rceil \leq k \leq n} \varphi_k(-1)^{-\beta} \varphi_k(0)^{1+\beta}
\end{equation*}
Using that $\varphi_k(0)$, $k \geq 0$ is decreasing and the inequality \eqref{eq:aux-6}, we obtain
\begin{align*}
\min_{\left\lceil n/2 \right\rceil \leq k \leq n} \varphi_k(-1)^{-\beta} \varphi_k(0)^{1+\beta} &\leq \varphi_{\left\lceil n/2 \right\rceil}(0)^{1+\beta}\left(\min_{\left\lceil n/2 \right\rceil \leq k \leq n} \varphi_k(-1)\right)^{-\beta} \\
&\leq \varphi_{\left\lceil n/2 \right\rceil}(0)^{1+\beta}\left(\frac{2}{n} \frac{1}{\gamma} \varphi_{\left\lceil n/2 \right\rceil}(0)\right)^{-\beta} \\
&\leq \frac{2^{-\beta}}{n^{-\beta}}\frac{1}{\gamma^{-\beta}} \varphi_{\left\lceil n/2 \right\rceil}(0) \, .
\end{align*}
Using finally \eqref{eq:bound-0}, we obtain conclusion \ref{concl:negative} of the theorem
\begin{equation*}
\min_{0 \leq k \leq n} \varphi_k(\beta) \leq \frac{2^{-\beta}}{n^{-\beta}}\frac{1}{\gamma^{-\beta}} \frac{\alphalow^\alphalow}{\gamma^\alphalow} D \frac{1}{\left\lceil n/2 \right\rceil^{\alphalow}} \leq 2^{\alphalow - \beta} \frac{\alphalow^{\alphalow}}{\gamma^{\alphalow-\beta}} D \frac{1}{n^{\alphalow-\beta}} \, .
\end{equation*}

\section{Proof of Theorems \ref{thm:main-result-lower-bound} and \ref{thm:general-result-lower-bound}}
\label{sec:proof-thm-general-lower}

We start in the case \ref{ass:reg-opt} where the optimum is irregular: $\theta_* \notin \Sigma^{-\alphahigh/2}(\cH)$. In that case, we give a lower bound in the convergence rate by studying the expected process $\overline{\theta}_n := \E[\theta_n]$. Indeed, by Jensen's inequality, 
\begin{equation}
\label{eq:lower-bound-expected-process}
\varphi_n(\beta) = \E\left[\left\langle \theta_n - \theta_*, \Sigma^{-\beta}\left(\theta_n - \theta_*\right)\right\rangle\right] \geq \left\langle \overline{\theta}_n - \theta_*, \Sigma^{-\beta}\left(\overline{\theta}_n - \theta_*\right)\right\rangle \, .
\end{equation}
The expectation $\overline{\theta}_n$ can be interpreted as the (non-stochastic) gradient descent on the population risk $\cR(\theta)$. Indeed, by taking the expectation in \eqref{eq:SGD_iteration}, we obtain
\begin{equation}
\label{eq:expected-iteration}
\overline{\theta}_n - \theta_* = (\Id-\gamma\Sigma)(\overline{\theta}_{n-1}-\theta_*) = -(\Id-\gamma\Sigma)^{n}\theta_* \, .
\end{equation}
Note that as $\gamma \leq 1/R_0$, $I-\gamma\Sigma$ is a positive definite matrix. Indeed, by the weak definition of $R_0$ in Remark \ref{rmk:weak-assumptions},
\begin{equation*}
R_0 \Sigma \succcurlyeq \E\left[\Vert X \Vert^2 X \otimes X \right] = \E\left[( X \otimes X)( X \otimes X) \right] \succcurlyeq \E[X \otimes X]^2 = \Sigma^2 \, ,
\end{equation*}
thus $R_0$ is larger than the operator norm of $\Sigma$. Thus $\gamma \Sigma \preccurlyeq \frac{1}{R_0} \Sigma \preccurlyeq  \Id$. 

In the following, if $\alpha \in \R$ and $k \in \N$, $\binom{\alpha}{k}$ denotes the generalized binomial coefficient: $\binom{\alpha}{k} = \frac{\alpha(\alpha-1)\cdots(\alpha-k+1)}{k!}$. Fix now $\alpha \geq 0$. We have the (formal) power series
\begin{align*}
(1+x)^{-\alpha} &= \sum_{k=0}^{\infty} \binom{-\alpha}{k} x^k \\
(1-x)^{-\alpha} &= \sum_{k=0}^{\infty} \binom{-\alpha}{k} (-1)^k x^k =  \sum_{k=0}^{\infty} \binom{\alpha+k-1}{k} x^k \\ 
y^{-\alpha} &= \sum_{k=0}^{\infty} \binom{\alpha+k-1}{k} (1-y)^k \, .
\end{align*}
This last equality holds in $[0,\infty]$ for $y \in [0,1]$. In that case, all terms of the serie are positive, thus the meaning of the sum is unambiguous. 

Note that $0 \preccurlyeq \gamma \Sigma \preccurlyeq \Id$, thus we have, formally, 
\begin{equation*}
\gamma^{-\alpha} \Sigma^{-\alpha} = \sum_{k=0}^{\infty} \binom{\alpha+k-1}{k}(\Id - \gamma\Sigma)^k \, .
\end{equation*}
The rigorous meaning of this equality is that for all $\theta\in \cH$, 
\begin{equation*}
\gamma^{-\alpha} \langle \theta, \Sigma^{-\alpha} \theta \rangle = \sum_{k=0}^{\infty} \binom{\alpha+k-1}{k} \langle \theta, (\Id - \gamma \Sigma)^k \theta \rangle \, .
\end{equation*}
Both terms of the equality can be infinite: here we are using the convention stated in Section \ref{sec:main-result} that implies that $\langle \theta, \Sigma^{-\alpha}\theta \rangle = \infty \Leftrightarrow \theta \notin \Sigma^{\alpha/2}(\cH)$. In particular, take $\alpha = \alphahigh - \beta$ and $\theta = \Sigma^{-\beta/2}\theta_*$:
\begin{align*}
\infty &= \gamma^{\beta-\alphahigh} \left\langle \theta_*, \Sigma^{-\alphahigh} \theta_* \right\rangle = \sum_{k=0}^\infty \binom{\alphahigh-\beta+k-1}{k} \left\langle \theta_*, \Sigma^{-\beta} (\Id - \gamma \Sigma)^k \theta_* \right\rangle \\
&= \sum_{n=0}^\infty \bigg[\binom{\alphahigh-\beta+2n-1}{2n} \left\langle \theta_*, \Sigma^{-\beta} (\Id - \gamma \Sigma)^{2n} \theta_* \right\rangle \\
&\hspace{2cm}+ \binom{\alphahigh-\beta+2n}{2n+1} \left\langle \theta_*, \Sigma^{-\beta} (\Id - \gamma \Sigma)^{2n+1} \theta_* \right\rangle  \bigg] \, .
\end{align*} 
Using that $ \binom{\alphahigh-\beta+2n-1}{2n} \leq \binom{\alphahigh-\beta+2n}{2n+1}$ and $\left\langle \theta_*, \Sigma^{-\beta} (\Id - \gamma \Sigma)^{2n} \theta_* \right\rangle \geq \left\langle \theta_*, \Sigma^{-\beta} (\Id - \gamma \Sigma)^{2n+1} \theta_* \right\rangle$ and then \eqref{eq:expected-iteration}, \eqref{eq:lower-bound-expected-process},  
\begin{align*}
\infty &\leq 2 \sum_{n=0}^{\infty} \binom{\alphahigh-\beta+2n}{2n+1}\left\langle \theta_*, \Sigma^{-\beta} (\Id - \gamma \Sigma)^{2n} \theta_* \right\rangle \\ 
&= 2 \sum_{n=0}^{\infty} \binom{\alphahigh-\beta+2n}{2n+1}\left\langle \overline{\theta}_n - \theta_*, \Sigma^{-\beta} (\overline{\theta}_n - \theta_*) \right\rangle \\
&\leq 2 \sum_{n=0}^{\infty} \binom{\alphahigh-\beta+2n}{2n+1} \varphi_n(\beta) \, .
\end{align*}
From \cite[Equation 5.8.1]{NIST:DLMF}, we have the formula $\Gamma(z) = \lim_{k\to\infty}\frac{k!k^z}{z(z+1)\cdots(z+k)}$ where $\Gamma$ denotes the Gamma function. Thus as $n \to \infty$ 
\begin{equation*}
\binom{\alphahigh-\beta+2n}{2n+1} = \frac{(\alphahigh-\beta)(\alphahigh-\beta+1)\cdots(\alphahigh-\beta+2n)}{(2n+1)(2n)!} \sim \frac{(2n)^{\alphahigh-\beta}}{(2n+1)\Gamma(\alphahigh-\beta)} \, . 
\end{equation*}
As a consequence, the serie $\sum_n n^{\alphahigh-\beta-1}\varphi_n(\beta)$ diverges. The criteria for the convergence of Riemann series implies that $\varphi_n(\beta)$ can not be asymptotically dominated by $1/n^{\alphahigh-\beta+\varepsilon}$ for $\varepsilon > 0$. 
\medskip 

We now turn to the case \ref{ass:reg-feature} where the features are irregular: with positive probability $p>0$, $X \notin \Sigma^{\alphahigh/2}(\cH)$ and $\langle X, \theta_* \rangle \neq 0$. With probability $p$, the second iterate $\theta_1 = -\gamma \langle X_1, \theta_* \rangle X_1$ is irregular, i.e., $\theta_1 \notin \Sigma^{\alphahigh/2}(\cH)$. By a simple shift of the iterates, we show that the effect of the irregularity of the initial condition for this iteration started from $\theta_1$ has an effect equivalent to the irregularity of the optimum, thus we can apply the result above to lower bound the convergence rate. More precisely, consider the iterates $\tilde{\theta}_n = \theta_{n+1}-\theta_1$ and $\tilde{\theta}_* = \theta_* - \theta_1$. The iteration \eqref{eq:SGD_iteration} can be rewritten as $\tilde{\theta}_n = \tilde{\theta}_{n-1} - \gamma \langle \tilde{\theta}_{n-1} - \tilde\theta_*, X_n \rangle X_n$ and $\tilde{\theta}_{0} = 0$, thus the new sequence $\tilde{\theta}_{n}$ satisfies our framework. We can assume that \ref{ass:reg-opt} is satisfied, i.e., $\theta_* \in \Sigma^{\alphahigh/2}(\cH)$. In that case, with probability $p$, $\tilde{\theta}_* = \theta_* - \theta_1 \notin \Sigma^{\alphahigh/2}(\cH)$. Thus by the case above, 
\begin{align*}
\varphi_n(\beta) &= \E\left[\left\langle \theta_n - \theta_*, \Sigma^{-\beta}\left(\theta_n - \theta_*\right)\right\rangle\right] \\
&=  \E\left[\left\langle \tilde{\theta}_{n-1} - \tilde{\theta}_*, \Sigma^{-\beta}\left(\tilde{\theta}_{n-1} - \tilde{\theta}_*\right)\right\rangle\right]
\end{align*}
is not asymptotically dominated by $1/n^{\alphahigh - \beta + \varepsilon}$, for $\varepsilon > 0$.

\section{Proof of Corollary \ref{coro:averaging}}
\label{sec:proof-averaging}

We apply Theorem \ref{thm:main-result-upper-bound} in the following way. Denote $\theta_n = x_n - x_0$, $\theta_* = x_*-x_0$, where $x_* = \frac{1}{N}\bfone$ is the function identically equal to $\frac{1}{N}$. These vectors belong to the Hilbert space $\cH = \ell^2(\cV)$. Denote $\langle ., .\rangle$ and $\Vert . \Vert$ the $\ell^2(\cV)$ scalar product and norm. Denote also $X_n = e_{v_n} - e_{w_n} \in \cH$ and $\gamma = 1/2$. Note that $\Sigma = \E[X_nX_n^\top] = \frac{1}{M}L$. The graph is connected thus $\lambda_0 = 0$ is the unique zero eigenvalue of $L$ \cite[Lemma 1.7]{chung1997spectral}. The corresponding eigenspace is the space of constant functions. The vectors $\theta_n, X_n, \theta_*$ are orthogonal to the null space of $\Sigma$, thus the quantities of the form $\langle \theta_n, \Sigma^{-\alpha} \theta_n \rangle$, $\langle X_n, \Sigma^{-\alpha} X_n \rangle$,$\langle \theta_*, \Sigma^{-\alpha} \theta_* \rangle$ are finite.  

We have $\theta_0 = 0$ and the averaging update step \eqref{eq:updates-averaging} can be written as
\begin{align*}
\theta_n &= \theta_{n-1} - \gamma \left\langle \theta_{n-1}-\theta_*, X_n\right\rangle X_n  \, .
\end{align*}
The last form makes explicit the parallel with Equation \eqref{eq:SGD_iteration}. To apply Theorem \ref{thm:main-result-upper-bound}, we check that its assumptions are satisfied. First, $\Vert X_n \Vert^2  = 2$ a.s.~thus can take $R_0=2$ and then $\gamma = 1/R_0$. Second, we seek $\alpha > 0$ such that $\Vert \Sigma^{-\alpha/2} \theta_* \Vert < \infty$ and $R_\alpha = \sup_{\{v,w\}\in \cE} \left\langle e_v - e_w, \Sigma^{-\alpha}(e_v-e_w) \right\rangle < \infty$. In the following, we bound these constants for all $\alpha < d/2$, thus giving decay rates for the expected squared distance to optimum of the form $n^{-\alpha}$ for all $\alpha < d/2$. However, our bounds of the constants $\Vert \Sigma^{-\alpha/2} \theta_* \Vert$ and $R_\alpha$ diverge as $\alpha \to d/2$. Nevertheless, by estimating how fast the bounds diverge as $\alpha \to d/2$, we obtain a decay rate of $n^{-d/2}$ by paying an additional logarithmic factor. 

Fix $0 < \alpha < d/2$. We check assumptions \ref{ass:reg-opt} and \ref{ass:reg-feature}. 
\begin{enumerate}[label = (\alph*)]
	\item \begin{align*}
	\Vert \Sigma^{-\alpha/2} \theta_* \Vert^2 &= M^\alpha \left\langle x_*-x_0 , L^{-\alpha} (x_*-x_0) \right\rangle = M^\alpha \sum_{i = 1}^{N-1} \lambda_i^{-\alpha} \left\langle x_* - x_0, u_i \right\rangle^2 \, .
	\end{align*}
	First, as $x_*$ is a constant vector, $\langle x_*, u_i \rangle$ is zero for all $i \geq 1$. Second, $x_0 = e_{v_\star}$. Thus 
	\begin{align*}
	\Vert \Sigma^{-\alpha/2} \theta_* \Vert^2
	&= M^\alpha \sum_{i = 1}^{N-1} \lambda_i^{-\alpha} u_i(v_\star)^2 \\ 
	&= M^\alpha \int_{(0,\infty)} \diff\sigma_{v_\star}(\lambda) \, \lambda^{-\alpha} \\
	&=  M^\alpha \int_{(0,\infty)} \diff\sigma_{v_\star}(\lambda) \int_{0}^{\infty} \diff s \, \bfone_{\{s\leq \lambda^{-\alpha}\}} \\
	&=   M^\alpha \int_{0}^{\infty} \diff s\int_{(0,\infty)} \diff\sigma_{v_\star}(\lambda) \,  \bfone_{\{\lambda \leq s^{-1/\alpha}\}} \\
	&=M^\alpha \int_{0}^{\infty} \diff s \, \sigma_{v_\star}((0,s^{-1/\alpha}]) \, .
	\end{align*}
	The graph $G$ is of spectral dimension $d$ with constant $V$, thus $\sigma_{v_\star}((0,s^{-1/\alpha}]) \leq V^{-1} s^{-\frac{d}{2\alpha}}$. However, if $s < \delta_{\max}^{-\alpha}$, it is better to use a more naive bound. As all eigenvalues of $L$ are smaller or equal than $\delta_{\max}$, $\sigma_{v_\star}((0,s^{-1/\alpha}]) \leq  \sigma_{v_\star}((0,\delta_{\max}]) \leq V^{-1}\delta_{\max}^{d/2}$. Then 
	\begin{align*}
	\Vert \Sigma^{-\alpha/2} \theta_* \Vert^2 &\leq M^\alpha \left[\int_{0}^{\delta_{\max}^{-\alpha}} \diff s \, V^{-1}\delta_{\max}^{d/2} +   \int_{\delta_{\max}^{-\alpha}}^\infty \diff s \, V^{-1}s^{-\frac{d}{2\alpha}}\right] \\
	&= M^\alpha V^{-1} \delta_{\max}^{d/2-\alpha} \frac{d}{d-2\alpha} \, .
	\end{align*}
	\item Let $\{v,w\}\in E$. As $\Vert \Sigma^{-\alpha/2} . \Vert$ is a norm, by the triangle inequality,
	\begin{align*}
	\Vert \Sigma^{-\alpha/2} (e_v-e_w) \Vert^2 &= \Vert \Sigma^{-\alpha/2} \left[(x_*-e_w) - (x_*-e_v)\right] \Vert^2 \\
	&\leq \left(\Vert \Sigma^{-\alpha/2}(x_*-e_w) \Vert + \Vert \Sigma^{-\alpha/2}(x_*-e_v) \Vert\right)^2 \\
	&\leq 2\left(\Vert \Sigma^{-\alpha/2}(x_*-e_w) \Vert^2 + \Vert \Sigma^{-\alpha/2}(x_*-e_v) \Vert^2\right) \, .
	\end{align*}
	We bound the two quantities as above. We obtain
	\begin{align*}
	R_\alpha = \sup_{{v,w}\in E }\Vert \Sigma^{-\alpha/2} (e_v-e_w) \Vert^2 &\leq 2 M^\alpha V^{-1} \delta_{\max}^{d/2-\alpha} \frac{d}{d-2\alpha} \, .
	\end{align*}
\end{enumerate}
Theorem \ref{thm:main-result-upper-bound} gives 
\begin{align*}
\E\left[\Vert x_n - x_* \Vert^2\right] &= \E\left[\Vert \theta_n - \theta_* \Vert^2\right] \leq \frac{\alpha^\alpha}{\gamma^{\alpha}} \left(\Vert \Sigma^{-\alpha/2} \theta_* \Vert^2 + \frac{R_\alpha}{R_0} \Vert  \theta_* \Vert^2\right) \frac{1}{n^\alpha} \\
&\leq \frac{(d/2)^\alpha}{(1/2)^\alpha} \left(M^\alpha V^{-1} \delta_{\max}^{d/2-\alpha} \frac{d}{d-2\alpha} + M^\alpha V^{-1} \delta_{\max}^{d/2-\alpha} \frac{d}{d-2\alpha} \Vert  \theta_* \Vert^2 \right) \frac{1}{n^\alpha} 
\end{align*}
Note that $\Vert \theta_* \Vert_2^2 \leq 1$ and recall the scaling $t=n/M$:
\begin{equation*}
\E\left[\Vert x_n - x_* \Vert^2\right] \leq d^{d/2+1}V^{-1} \delta_{\max}^{d/2-\alpha} \frac{1}{d/2-\alpha} \frac{1}{t^\alpha} \, .
\end{equation*}
This bound is valid for all $\alpha < \frac{d}{2}$. Choose $\alpha = \frac{d}{2} - \frac{\log 2}{\log t}$. 
\begin{equation*}
\E\left[\Vert x_n - x_* \Vert^2\right] \leq d^{d/2+1}V^{-1} \delta_{\max}^{\log 2/\log t } \frac{\log t }{\log 2} \frac{2}{t^{d/2}}
\end{equation*}
As we assume $t \geq 2$, $\delta_{\max}^{\log 2/\log t} \leq \delta_{\max}$. Thus we obtain conclusion \ref{concl:gossip-norm}. 

The proof of \ref{concl:gossip-energy} is similar. Theorem \ref{thm:main-result-upper-bound} gives 
\begin{align*}
\min_{0 \leq k \leq n} \E\left[\frac{1}{2}\sum_{\{v,w\}\in \cE} \left(x_k(v) -x_k(w) \right)^2\right] &= \min_{0 \leq k \leq n} \E\left[\frac{1}{2}\left\langle x_k - x_* , L(x_k - x_*)\right\rangle\right] \\
& = M \min_{0 \leq k \leq n} \E\left[\frac{1}{2}\left\langle \theta_k - \theta_* , \Sigma(\theta_k - \theta_*)\right\rangle\right] \\
&\leq 2^\alpha \frac{\alpha^\alpha}{\gamma^{\alpha+1}}\left(\Vert \Sigma^{-\alpha/2} \theta_* \Vert^2 + \frac{R_\alpha}{R_0} \Vert  \theta_* \Vert^2\right) \frac{1}{n^\alpha} \\
&\leq 2^{\alpha+1}d^\alpha V^{-1} \delta_{\max}^{d/2-\alpha} \frac{d}{d/2-\alpha} \frac{1}{t^{\alpha+1}} \, . 
\end{align*}
Taking again $\alpha = \frac{d}{2}-\frac{1}{2 \log t}$ and $t \geq 2$,
\begin{align*}
\min_{0 \leq k \leq n} \E\left[\frac{1}{2}\sum_{\{v,w\}\in \cE} \left(x_k(v) -x_k(w) \right)^2\right] &\leq 2^{d/2+1} d^{d/2} V^{-1} \delta_{\max} \frac{d \log t }{\log 2} \frac{2}{t^{d/2+1}}
\end{align*} 
This gives conclusion \ref{concl:gossip-energy} of the corollary.

\section{Proof of Proposition \ref{prop:torus}} 
\label{sec:proof-dim-torus}

The graph $\T^d_\Lambda$ is invariant by translation, thus the spectral measure $\sigma_v$ is the same for all vertices $v \in \cV$. Thus
\begin{align*}
\vert \cV \vert \sigma_v(\diff\lambda) = \sum_{w\in\cV} \sigma_w(\diff\lambda) = \sum_{w\in\cV} \sum_{i = 0}^{N-1} u_i(w)^2\delta_{\lambda_i} =  \sum_{i = 0}^{N-1} \left(\sum_{w\in\cV} u_i(w)^2 \right) \delta_{\lambda_i} = \sum_{i = 0}^{N-1} \delta_{\lambda_i} \, .
\end{align*}	
Thus 
\begin{align*}
\sigma_v((0,E]) = \frac{1}{\Lambda^d} \left\vert \left\{0 < i \leq N-1 \middle\vert \lambda_i \leq E \right\} \right\vert  \, .
\end{align*}
We need to bound the number of eigenvalues of the Laplacian of $\T^d_\Lambda$ below some fixed value $E$. The eigenvalues of the Laplacian of the circle $\T^1_\Lambda$ are $1-\cos\left(\frac{2\pi i}{\Lambda}\right)$, $i \in \Z, -\Lambda/2 < i \leq \Lambda/2$ \cite[Example 1.5]{chung1997spectral}. As $\T^d_\Lambda$ is the Cartesian product $\T^1_\Lambda \times \dots \times \T^1_\Lambda$ (with $d$ terms), the eigenvalues of the Laplacian of the torus $\T^d_\Lambda$ are the 
\begin{equation*}
1-\cos\left(\frac{2\pi i_1}{\Lambda}\right) + \dots + 1-\cos\left(\frac{2\pi i_d}{\Lambda}\right)\, , \qquad i_1, \dots i_d \in \Z, \quad  -\frac{\Lambda}{2} < i_1, \dots, i_d \leq \frac{\Lambda}{2} \, .
\end{equation*}
For $y \in [-\pi, \pi]$, $1-\cos(y) \geq \frac{2}{\pi^2}y^2$. Thus
\begin{align*}
1-\cos\left(\frac{2\pi i_1}{\Lambda}\right) + \dots + 1-\cos\left(\frac{2\pi i_d}{\Lambda}\right) \leq E &\Rightarrow \frac{2}{\pi^2} \left[\left(\frac{2\pi i_1}{\Lambda}\right)^2 + \dots + \left(\frac{2\pi i_d}{\Lambda}\right)^2 \right] \leq E \\
&\Leftrightarrow i_1^2 + \dots + i_d^2 \leq \frac{E\Lambda^2}{8} \, .
\end{align*}
We need to count the number of integer points in the Euclidean ball centered at $0$ and of radius $\sqrt{E/8}\Lambda$ in $\R^d$. This problem is famously known as Gauss circle problem. For our purposes, a crude estimate suffices: there exists a constant $C(d)$, depending only on the dimension $d$, such that for all radius $R$, the number of integer points in the ball of radius $R$ is smaller than $1 + C(d)R^d$. This leads to the final estimate
\begin{align*}
\sigma_v((0,E]) &= \frac{1}{\Lambda^d} \bigg\vert \bigg\{(i_1,\dots,i_d) \in \left(\Z \cap \left(-\frac{\Lambda}{2}, \frac{\Lambda}{2}\right]\right)^d \backslash \left\{0\right\} \text{ such that }\\
&\hspace{5cm} \, 1-\cos\left(\frac{2\pi i_1}{\Lambda}\right) + \dots + 1-\cos\left(\frac{2\pi i_d}{\Lambda}\right) \leq E \bigg\} \bigg\vert  \\
&\leq \frac{1}{\Lambda^d} \left\vert \left\{(i_1,\dots,i_d) \in \Z^d \backslash \left\{0\right\} \,  \middle\vert \,  i_1^2 + \dots + i_d^2 \leq \frac{E\Lambda^2}{8} \right\} \right\vert \\
&\leq \frac{1}{\Lambda^d} C(d) \left(\frac{E\Lambda^2}{8} \right)^{d/2} = \frac{C(d)}{8^{d/2}}E^{d/2} \, .
\end{align*}
This proves the proposition with $V(d) = 8^{d/2}/C(d)$. 

\section{Proof of Theorems \ref{thm:gen-main-result} and \ref{thm:gen-general-result}}
\label{sec:proof-thm-gen}

	Note that in this proof, we use the strong assumptions of regularity of the feature vector $X$. We do not know whether it is possible to prove the same result under the weak assumptions of Remark \ref{rmk:weak-assumptions}. 
	
	Our proof stategy is the following: we decompose the SGD iterates sequence $\theta_n$ as a sum of sequences $\theta_n = \nu_n + \sum_{l=1}^{n}\eta_n^{(l)}$, where each of the auxiliary sequences is interpreted as the iterates of some SGD iteration under a noiseless linear model. We thus apply the results of Section \ref{sec:general-theory} to control these auxiliary sequences and obtain the presented bound. 
	
	Define $\varepsilon_n = Y_n - \langle \theta_*, X_n \rangle$, the error of the best linear estimator. Then Equation \eqref{eq:gen-sgd-iteration} can be rewritten as
	\begin{align*}
		&\theta_{0} = 0 \, , &\theta_n = \theta_{n-1} - \gamma \langle \theta_{n-1} - \theta_*, X_n \rangle X_n + \gamma \varepsilon_n X_n \, .
	\end{align*}
	We see this iteration as an additively perturbed version of the iteration 
	\begin{align*}
		&\nu_{0} = 0 \, , &\nu_n = \nu_{n-1} - \gamma \langle \nu_{n-1} - \theta_*, X_n \rangle X_n  \, ,
	\end{align*}
	studied in Section \ref{sec:general-theory}. To understand the effect of the additive noise, define for all $l \geq 1$, 
	\begin{align*}
		&\eta_l^{(l)} = \gamma \varepsilon_l X_l \, , &&\eta^{(l)}_{n} = \eta^{(l)}_{n-1} - \gamma \langle \eta^{(l)}_{n-1}, X_n \rangle X_n \, , \qquad n>l \, .
	\end{align*}
	Then 
	\begin{equation}
		\label{eq:aux-12}
		\theta_n = \nu_n + \sum_{l=1}^{n} \eta_n^{(l)} \, .
	\end{equation}
	Indeed, this last equation is checked by induction: $\theta_0 = 0 = \nu_0$, and if the equation is satisfied for some $n \geq 0$, 
	\begin{align*}
		\theta_{n+1} &= \theta_{n} - \gamma \langle \theta_{n} - \theta_*, X_{n+1} \rangle X_{n+1} + \gamma \varepsilon_{n+1} X_{n+1} \\
		&= \nu_n + \sum_{l=1}^{n} \eta_n^{(l)} - \gamma \left\langle \nu_n + \sum_{l=1}^{n} \eta_n^{(l)} - \theta_*, X_{n+1} \right\rangle X_{n+1} + \eta_{n+1}^{(n+1)} \\ 
		&= \left[\nu_n - \gamma \langle \nu_n - \theta_*, X_{n+1} \rangle X_{n+1} \right] + \sum_{l=1}^{n}\left[\eta_n^{(l)} - \gamma \langle \eta_n^{(l)}, X_{n+1} \rangle X_{n+1} \right] + \eta_{n+1}^{(n+1)} \\
		&= \nu_{n+1} + \sum_{l=1}^{n} \eta_{n+1}^{(l)} + \eta_{n+1}^{(n+1)} \, .
	\end{align*}
	We use the decomposition \eqref{eq:aux-12} to study $\varphi_n(\beta)$. Using the triangle inequality,
	\begin{align}
		\varphi_n(\beta) &= \E\left[\left\Vert \Sigma^{-\beta/2} \left(\nu_n + \sum_{l=1}^{n}\eta_n^{(l)}\right)\right\Vert^2\right] \nonumber \\
		&\leq \E\left[\left(\left\Vert \Sigma^{-\beta/2} \nu_n \right\Vert + \left\Vert \Sigma^{-\beta/2} \sum_{l=1}^{n}\eta_n^{(l)} \right\Vert\right)^2\right] \nonumber \\
		&\leq 2\E\left[\left\Vert \Sigma^{-\beta/2} \nu_n \right\Vert^2\right] + 2\E\left[ \left\Vert \Sigma^{-\beta/2} \sum_{l=1}^{n}\eta_n^{(l)} \right\Vert^2\right] \label{eq:aux-14}
	\end{align}
	The first term is studied in Section \ref{sec:general-theory}. We detail the analysis of the second term. Note that 
	\begin{align}
		\eta_n^{(l)} &= (I-\gamma X_n \otimes X_n) \eta_{n-1}^{(l)} = \dots = (I-\gamma X_n \otimes X_n) \cdots (I-\gamma X_{l+1} \otimes X_{l+1}) \eta_l^{(l)} \nonumber \\
		&= (I-\gamma X_n \otimes X_n) \cdots (I-\gamma X_{l+1} \otimes X_{l+1}) \gamma \varepsilon_l X_l \label{eq:aux-13}\, . 
	\end{align}
	Thus if $l < l'$, 
	\begin{align*}
		\E\left[\left\langle \eta_n^{(l)}, \Sigma^{-\beta} \eta_n^{(l')}\right\rangle\right] &= \E\left[\left\langle \E\left[\eta_n^{(l)} \middle\vert X_{l+1}, \dots, X_n\right], \Sigma^{-\beta} \eta_n^{(l')}\right\rangle\right] \\
		&= \E\left[\left\langle (I-\gamma X_n \otimes X_n) \cdots (I-\gamma X_{l+1} \otimes X_{l+1}) \gamma \E[\varepsilon_l X_l], \Sigma^{-\beta} \eta_n^{(l')}\right\rangle\right]
	\end{align*}
	Note that by definition of $\theta_*$, $0 = \nabla \cR(\theta_*) = -\E\left[(Y_l-\langle \theta_*, X_l \rangle)X_l\right] = -\E\left[\varepsilon_l X_l\right]$ thus we obtain that the cross products $\E\left[\left\langle \eta_n^{(l)}, \Sigma^{-\beta} \eta_n^{(l')}\right\rangle\right]$ are zero. This gives 
	\begin{align*}
		\E\left[ \left\Vert \Sigma^{-\beta/2} \sum_{l=1}^{n}\eta_n^{(l)} \right\Vert^2\right] = \sum_{l=1}^{n} \E\left[ \left\Vert \Sigma^{-\beta/2} \eta_n^{(l)} \right\Vert^2\right] \, .
	\end{align*}
	Note that from Equation \eqref{eq:aux-13}, $\eta_n^{(l)}$ and $\eta_{n-l+1}^{(1)}$ are equal in law. Thus 
	\begin{align}
		\E\left[\left\Vert \Sigma^{-\beta/2} \sum_{l=1}^{n}\eta_n^{(l)} \right\Vert^2\right] = \sum_{l=1}^{n} \E\left[ \left\Vert \Sigma^{-\beta/2} \eta_{n-l+1}^{(1)} \right\Vert^2\right] =  \sum_{l=1}^{n} \E\left[ \left\Vert \Sigma^{-\beta/2} \eta_{l}^{(1)} \right\Vert^2\right] \, . \label{eq:aux-15}
	\end{align}
	This last quantity is the sum of the expected squared power norms \begin{equation*}
		\varphi_l'(\beta) :=  \E\left[ \left\Vert \Sigma^{-\beta/2} \eta_{l}^{(1)} \right\Vert^2\right] 
	\end{equation*}
	of the SGD iterates $\eta^{(1)}_l, l \geq 1$ on a noiseless linear model, with initialization $\eta^{(1)}_1 = \gamma \varepsilon_1 X_1$. When $\beta = -1$, this control is given by \eqref{eq:bound_sum_-1}: with our notation here, this gives
	\begin{equation}
		\sum_{l=1}^n \varphi'_l(-1) \leq \sum_{l=1}^\infty \varphi'_l(-1) \leq \frac{1}{\gamma} \varphi'_1(0) \, . \label{eq:aux-8}
	\end{equation}
	When $\beta = \alphalow-1$, a similar control can be obtained from \eqref{eq:aux-7} which gives:
	\begin{equation*}
		2\gamma \varphi'_{l-1}(\alphalow-1) \leq \varphi'_{l-1}(\alphalow) - \varphi'_{l}(\alphalow) + \gamma^2 R_\alphalow \varphi'_{l-1}(-1) \, . 
	\end{equation*}
	By summing these inequalities for $l=2, 3, \dots$, we obtain,
	\begin{align}
		2\gamma \sum_{l=1}^\infty \varphi'_l(\alphalow-1) &\leq \varphi'_1(\alphalow) + \gamma^2 R_\alphalow \sum_{l=1}^{\infty} \varphi'_l(-1) \nonumber\\
		&\leq \varphi'_1(\alphalow) + \frac{R_\alphalow}{R_0} \varphi'_1(0) \label{eq:aux-9} 
	\end{align}
	Note that using the strong assumption of regularity of the feature vectors, 
	\begin{align*}
		&\varphi'_1(0) = \E\left[ \left\Vert \gamma \varepsilon_1 X_1 \right\Vert^2\right] \leq \gamma^2 R_0 \E\left[\varepsilon_1^2\right] = 2\gamma^2 R_0 \cR(\theta_*) \, , \\
		&\varphi'_1(\alphalow) =  \E\left[ \left\Vert \Sigma^{-\alphalow/2}  \gamma \varepsilon_1^2 X \right\Vert^2\right] \leq \gamma^2  R_\alphalow \E\left[\varepsilon_1^2\right] = 2 \gamma^2  R_\alphalow \cR(\theta_*) \, .
	\end{align*} 
	We use these expressions to simply further \eqref{eq:aux-8} and \eqref{eq:aux-9}:
	\begin{align*}
		\sum_{l=1}^n \varphi'_l(-1) &\leq 2\gamma R_0 \cR(\theta_*) \, , \\
		\sum_{l=1}^\infty \varphi'_l(\alphalow-1) &\leq 2\gamma R_\alphalow \cR(\theta_*) \, . 
	\end{align*}
	If $\beta \in [-1,\alphalow-1]$, we use the log-convexity Property \ref{prop:log-cvx} and H\"older's inequality: decompose $\beta = (1-\lambda)(-1) + \lambda(\alphalow-1)$ with $\lambda = (\beta+1)/\alphalow$,
	\begin{align}
		\sum_{l=1}^\infty \varphi'_l(\beta) &\leq \sum_{l=1}^\infty \varphi'_l(-1)^{1-\lambda} \varphi'_l(\alphalow-1)^{\lambda} \nonumber \\
		&\leq \left(\sum_{l=1}^n \varphi'_l(-1)\right)^{1-\lambda} \left(\sum_{l=1}^\infty \varphi'_l(\alphalow-1)\right)^\lambda \nonumber\\
		&\leq \left(2\gamma R_0 \cR(\theta_*)\right)^{1-\lambda} \left(2\gamma R_\alphalow \cR(\theta_*)\right)^\lambda \nonumber\\
		&= 2\gamma R_0^{1-\lambda} R_\alphalow^\lambda \cR(\theta_* ) \label{eq:aux-10}\, .
	\end{align}
	
	Putting back together Equations \eqref{eq:aux-14}, \eqref{eq:aux-15} and \eqref{eq:aux-10}, we obtain 
	\begin{align*}
		\varphi_n(\beta) \leq 2\E\left[\left\Vert \Sigma^{-\beta/2} \nu_n \right\Vert^2\right] + 4\gamma R_0^{1-\lambda} R_\alphalow^\lambda \cR(\theta_* ) 
	\end{align*}
	The theorem follows the application of Theorem \ref{thm:general-result-upper-bound} to the sequence $\nu_n$ in order to control the first term.

\end{document}